%% file: example_paper.tex
\documentclass{article}

\usepackage{lipsum} 
\usepackage{microtype}
\usepackage{graphicx}
\usepackage{amssymb}
\usepackage{subcaption}
\usepackage[most]{tcolorbox}
\usepackage{booktabs} 
\input{packages_icml}

\usepackage{hyperref}

\newcommand{\midsize}{\fontsize{7.5pt}{9pt}\selectfont}

\usepackage[accepted]{icml2026}

\usepackage{amsmath}
\usepackage{amssymb}
\usepackage{mathtools}
\usepackage{amsthm}

\usepackage[capitalize,noabbrev]{cleveref}

\theoremstyle{plain}

\usepackage[textsize=tiny]{todonotes}

\icmltitlerunning{Generalized Discrete Diffusion with Self-Correction}

\begin{document}

\twocolumn[
  \icmltitle{Generalized Discrete Diffusion with Self-Correction}



  \icmlsetsymbol{equal}{*}
  \begin{icmlauthorlist}
    \icmlauthor{Linxuan Wang}{equal,Purdue}
    \icmlauthor{Ziyi Wang}{equal,Purdue}
    \icmlauthor{Yikun Bai}{equal,Purdue}
    \icmlauthor{Wei Deng}{ms}
    \icmlauthor{Guang Lin$^\dagger$}{Purdue}
    \icmlauthor{Qifan Song$^\dagger$}{Purdue}
  \end{icmlauthorlist}
  
  \icmlaffiliation{Purdue}{Purdue University}
  \icmlaffiliation{ms}{Morgan Stanley}

  \icmlcorrespondingauthor{Wei Deng}{weideng056@gmail.com}
  \icmlkeywords{Machine Learning, ICML}

  \vskip 0.3in
]



\printAffiliationsAndNotice{\icmlEqualContribution $^\dagger$Equal advising}

\begin{abstract}
    Self-correction is an effective technique for maintaining parallel sampling in discrete diffusion models with minimal performance degradation. Prior work has explored self-correction at inference time or during post-training; however, such approaches often suffer from limited generalization and may impair reasoning. GIDD \cite{rutte2025generalized} pioneers pretraining-based self-correction via a multi-step BERT-style uniform-absorbing objective. However, GIDD relies on a continuous interpolation-based pipeline with opaque interactions between uniform transitions and absorbing masks, which complicates hyperparameter tuning and hinders practical performance. In this work, we propose a \textbf{S}elf-\textbf{C}orrecting \textbf{D}iscrete \textbf{D}iffusion (SCDD) model to reformulate pretrained self-correction with explicit state transitions and learn directly in discrete time. Our framework also simplifies the training noise schedule, eliminates a redundant remasking step, and relies exclusively on uniform transitions to learn self-correction. Experiments at GPT-2 scale show that our method enables more efficient parallel decoding while preserving generation quality. Our code is available at: {\small\url{https://github.com/laaaarrywang/Self-Correcting-Discrete-Diffusion.git}}.
    
\end{abstract}

\input{intro}

\textbf{Notation.}\quad Let $\mathbf{m}$ denote the one-hot vector representing the \textsc{[mask]} token, and let $\mathbf{1}_p$ denote a $p$-dimensional vector of ones. Define the one-hot categorical vocabulary (including the \textsc{[mask]} token) as
\[
\mathcal{V} := \left\{\mathbf{x} \in \{0,1\}^{K+1} : \sum_{i=1}^{K+1} \mathbf{x}_i = 1_{K+1} \right\},
\]
with $|\mathcal{V}| = K+1$. We further define $\mathcal{V}^{/\mathbf{m}} := \mathcal{V} \setminus\{\mathbf{m}\}$ as the vocabulary excluding the \textsc{[mask]} token. We use $\mathrm{Cat}(\cdot\,;\boldsymbol{\pi})$ to denote the categorical distribution over $\mathcal{V}$ with probability vector $\boldsymbol{\pi} \in \Delta^{K+1}$, where $\Delta^{K+1}$ is the $(K+1)$-simplex.

\section{Preliminaries}
Masked diffusion models (MDM/MDLM) \cite{sahoo2024simple,shi2024simplified} are a class of generative models that corrupt data by gradually replacing clean tokens with a special \textsc{[mask]} token. 
The forward process starts from a clean token $\mathbf x \in \mathcal{V}^{/\mathbf{m}} $ and progressively increases the probability of transitioning to $\mathbf m$. Formally, for time points $s,t$ with $0 \le s < t \le 1$, the forward kernel is defined as
$$
q(\mathbf z_t \mid \mathbf z_s)
=
\text{Cat}\left(\mathbf z_t;\alpha_{t|s}\, \mathbf z_s
+
(1-\alpha_{t|s})\, \mathbf m\right),
$$
where $\alpha_{t|s} = \alpha_t / \alpha_s$ and $\alpha_t \in [0,1]$ is a noise scheduling function controlling the survival rate of the token. The induced marginal is
$$
q(\mathbf z_t \mid \mathbf x) = \text{Cat}\left(\mathbf z_t;\alpha_t \mathbf x + (1 - \alpha_t)\, \mathbf m\right).
$$
As $t \rightarrow 1$, the process collapses to the prior $q_1(\mathbf z_1) = \mathbf 1_{\{\mathbf z_1=\mathbf m\}}$, meaning that the clean token is replaced by the \textsc{[mask]} token.

Given this forward process, sampling proceeds in the reverse direction using a learned parametric kernel $p_\theta(\mathbf z_s \mid \mathbf z_t)$ that denoises \textsc{[mask]} tokens back to clean tokens. The denoiser is trained by minimizing the negative ELBO, which reduces to
$$
\mathcal{L}
=
\mathbb{E}_{t,\mathbf x,\,\mathbf z_t}
\left[
\frac{\alpha_t'}{1-\alpha_t}
\log \left(\mathbf x_\theta(\mathbf z_t,t)^\top \mathbf x\right)
\right],
$$
where $\mathbf x_\theta(\mathbf z_t,t)$ is a denoiser that predicts the clean token distribution.

\textbf{Limitations.}\quad Despite its scalable training, MDLM lacks an explicit self-correction mechanism to revise low-quality tokens from earlier steps, degrading the quality of parallel generation \citep{jiang2025diffusion}. Consequently, MDLM typically decodes only a few tokens per step to achieve competitive reasoning performance \citep{zhao2025d1, rojas2026improving}, often making it even slower than autoregressive (AR) alternatives with speculative decoding. 

%


\section{Self-Correcting Discrete Diffusion Model}\label{sec:scdd}

To maximize parallel generation potential, we reformulate the MDLM pipeline to enhance self-correction during pre-training, a stage known to yield strong generalization performance \citep{SutskeverDwarkesh2025ScalingToResearch}. Motivated by BERT’s success in semantic understanding \citep{devlin2019bert}, we incorporate uniform transitions alongside the absorbing-mask in the forward process. While related ideas have been explored in prior works, such as GIDD \citep{rutte2025generalized}\footnote{We refer readers to Appendix Section~\ref{sec:related_work} for further discussion.}, our approach differs by introducing clear and explicit state transitions and eliminating the need for an additional remasking step completely. This simplification substantially eases hyperparameter tuning and improves parallel generation through enhanced self-correction.



\subsection{Forward Noising Process}
\label{sec:2.1}
For clarity, we first derive the model for the single token case before generalizing it to the sequence case. We consider a discrete-time forward Markov noising process $q$ that transforms the clean data $\mathbf x$ into latent variables $\{\mathbf z_{t_i}\}_{i=0}^T$. A typical choice of time grid $\{t_i\}_{i=0}^T$ is equal-spaced grid $\{\frac{i}{T}\}_{i=0}^T$, which is also the one that we use throughout this paper. For notation convenience, we define $\mathbf z_{t_{-1}} = \mathbf x$ and interchangeably use them to refer to the clean data. The joint distribution of clean data and latent variables follows
\begin{align}
    q(\mathbf x,\mathbf z_0,...,\mathbf  z_1) = q_{\text{data}}(\mathbf x)\prod_{i=0}^T q(\mathbf z_{t_i}\mid\mathbf z_{t_{i-1}}),
    \label{eq:jointDist}
\end{align}
where $q_{\text{data}}$ is the unknown data distribution.
Let $\mathbf u = \tfrac{1}{K}(\mathbf 1_{K+1} - \mathbf m)$ denote the probability vector for the uniform distribution over all non-\textsc{[mask]} tokens. We use $\mathbf u$ to refer to both the vector and the distribution itself. To elicit self-correction capability, we are interested in forward processes whose marginal distributions of $\mathbf z_t$ takes the form
\begin{align}
q(\mathbf z_t\mid\mathbf x)=\text{Cat}\Bigl(\mathbf z_t;\gamma_t \bigl(\rho_t \mathbf x+(1-\rho_t)\mathbf u\bigr)+(1-\gamma_t)\mathbf{m}\Bigr),\label{eq:marginal}
\end{align}
where $\rho_t,\gamma_t:[0,1]\to[0,1]$ with boundary conditions $\rho_1 = \gamma_1= 0$ and $\rho_0, \gamma_0 \approx 1$. Moreover, $\rho_0,\gamma_0 \to 1^-$ as $T\to\infty$. The mixture of $\mathbf x$, $\mathbf u$ and $\mathbf m$ in the probability vector ensures the diffusion model encounters both uniform noise and \textsc{[mask]} tokens during training. At time $t=0$, the latent variable $\mathbf z_0$ is slightly corrupted relative to $\mathbf x$; however, this corruption probability vanishes as the number of timesteps $T$ approaches infinity. At time $t=1$, the clean data is completely corrupted and becomes \textsc{[mask]}. For intermediate times $t \in (0, 1)$, the token can be replaced by another non-\textsc{[mask]} token or masked.

We refer to $\rho_t\gamma_t$, the probability mixture ratio of $\mathbf x$, as the probability that ``$\mathbf z_t$ \emph{retains} $\mathbf x$'', meaning that the event $\{\mathbf z_t = \mathbf x\}$ occurs due to an explicit stay-in mechanism rather than by coincidence from uniform sampling. 
In particular, this excludes the case where $\mathbf z_t$ is sampled from $\mathbf u$ and happens to equal $\mathbf x$. Similarly, $\gamma_t(1-\rho_t)$ is referred to as the probability that ``$\mathbf z_t$ is sampled from $\mathbf u$''. The parameters can also be individually interpreted as two kinds of signal-to-noise ratios (SNR):
\begin{align}
\gamma_t=q(\mathbf z_t\neq\mathbf{m}\mid\mathbf x),\quad\rho_t=q(\mathbf z_t\text{ retains }\mathbf x\mid\mathbf z_t\neq\mathbf{m},\mathbf x)\nonumber,
\end{align}
where $\gamma_t$ measures the SNR of absorbing mask, and $\rho_t$ measures the SNR of uniform transitions. For notation convenience, we set $\rho_{-1} = \gamma_{-1} = 1$.

There are infinite forward noising processes with marginals agree with \eqref{eq:marginal}. However, to eliminate the remasking behavior during token generation (i.e., the backward process), the \textsc{[mask]} state $\mathbf m$ needs to be an absorbing state of the forward process. Therefore, we make an additional assumption that $\rho_t, \gamma_t$ are monotonically decreasing sequences. Under this mild assumption, the marginal in \eqref{eq:marginal} can be realized by a Markov chain of which $\mathbf{m}$ is an absorbing state, similar to masked diffusion models \cite{sahoo2024simple,shi2024simplified}. 
\begin{proposition}
When $\rho_t, \gamma_t$ are monotonically decreasing, the following forward Markov transition kernel induces the marginal distribution \eqref{eq:marginal}:
\begin{align}
q&(\mathbf z_t\mid\mathbf z_s):= \nonumber \\ 
&\begin{cases}
\gamma_{t|s}\left(\mathbf 1_{\{\mathbf z_s=\mathbf z_t\}}\rho_{t|s}+\tfrac{1-\rho_{t|s}}{K}\right),& \text{if }\mathbf z_t,\mathbf z_s \neq\mathbf m,\\
1-\gamma_{t|s}, & \text{if }\mathbf z_t=\mathbf m, \mathbf z_s\neq\mathbf m,\\
1, & \text{if } \mathbf z_t = \mathbf z_s=\mathbf{m},
\end{cases}\label{eq:forward}
\end{align}

where $t,s \in \{t_{-1},t_0,...,t_T\}$ are two adjacent time points satisfying $t>s$, $\rho_{t|s}:=\tfrac{\rho_t}{\rho_s}$, and $\gamma_{t|s}:=\tfrac{\gamma_t}{\gamma_s}$.
\end{proposition}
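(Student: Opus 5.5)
We argue by induction over the time grid $t_{-1}<t_0<\dots<t_T$, propagating the marginal \eqref{eq:marginal} one step at a time with the kernel \eqref{eq:forward}. Since $\rho_{-1}=\gamma_{-1}=1$, the marginal at $t_{-1}$ is $\mathrm{Cat}(\mathbf z_{t_{-1}};\mathbf x)$, which is the point mass at the clean token; this is the base case. For the inductive step, fix adjacent $s<t$ and assume $q(\mathbf z_s\mid\mathbf x)$ has the form \eqref{eq:marginal} with parameters $(\rho_s,\gamma_s)$. We then compute
\[
q(\mathbf z_t\mid\mathbf x)=\sum_{\mathbf z_s\in\mathcal V} q(\mathbf z_t\mid\mathbf z_s)\,q(\mathbf z_s\mid\mathbf x).
\]
By the Markov structure \eqref{eq:jointDist}, $q(\mathbf z_t\mid \mathbf z_s,\mathbf x)=q(\mathbf z_t\mid\mathbf z_s)$, so this is legitimate.

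Before doing so, we check that \eqref{eq:forward} is a valid stochastic kernel. Monotonicity gives $\rho_{t|s},\gamma_{t|s}\in[0,1]$, so every entry is nonnegative; we assume $\rho_s,\gamma_s>0$ at every grid point except possibly the last, which holds since the schedules are positive before $t=1$. For $\mathbf z_s\neq\mathbf m$, the non-mask entries sum to $\gamma_{t|s}\bigl(\rho_{t|s}+K\cdot\tfrac{1-\rho_{t|s}}{K}\bigr)=\gamma_{t|s}$. Adding the mask entry $1-\gamma_{t|s}$ gives total mass $1$. The row for $\mathbf z_s=\mathbf m$ is trivially a distribution.

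The computation is cleanest in matrix form. For $\mathbf z_s\neq\mathbf m$, write the kernel as the probability vector
\[
\gamma_{t|s}\bigl(\rho_{t|s}\mathbf z_s+(1-\rho_{t|s})\mathbf u\bigr)+(1-\gamma_{t|s})\mathbf m,
\]
and for $\mathbf z_s=\mathbf m$ as $\mathbf m$. This map is linear in $\mathbf z_s$ when applied to distributions supported on $\mathcal V^{/\mathbf m}$. We record two images: $\mathbf u\mapsto\gamma_{t|s}\mathbf u+(1-\gamma_{t|s})\mathbf m$, because $\rho_{t|s}\mathbf u+(1-\rho_{t|s})\mathbf u=\mathbf u$, and $\mathbf x\mapsto \gamma_{t|s}(\rho_{t|s}\mathbf x+(1-\rho_{t|s})\mathbf u)+(1-\gamma_{t|s})\mathbf m$. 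Applying the map to the inductive marginal $\gamma_s\rho_s\mathbf x+\gamma_s(1-\rho_s)\mathbf u+(1-\gamma_s)\mathbf m$ and collecting coefficients gives the following.
\begin{itemize}
\item The coefficient of $\mathbf x$ is $\gamma_s\rho_s\gamma_{t|s}\rho_{t|s}=\gamma_t\rho_t$.
\item The coefficient of $\mathbf u$ is $\gamma_s\gamma_{t|s}\bigl(\rho_s(1-\rho_{t|s})+(1-\rho_s)\bigr)=\gamma_t(1-\rho_t)$.
\item The coefficient of $\mathbf m$ is $(1-\gamma_s)+\gamma_s(1-\gamma_{t|s})=1-\gamma_t$.
\end{itemize}
This is exactly \eqref{eq:marginal} at time $t$, which closes the induction.

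The only delicate point is the degenerate case where a denominator vanishes, for example $\rho_s=0$ or $\gamma_s=0$ when $s$ is the last grid point or $\rho_t$ reaches $0$ early. There we interpret the ratio as any value in $[0,1]$, say $0$. The coefficients above then still match, because the terms multiplying the undefined ratio carry a factor $\rho_s$ or $\gamma_s$ equal to zero. We also note in passing that $\mathbf m$ is absorbing by construction, which is what the monotonicity assumption buys.
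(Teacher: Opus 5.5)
Your proof is correct and follows essentially the paper's route. The paper writes the kernel as the block matrix $Q_{t|s}$ and collapses the product $\prod_j Q_{t_j|t_{j-1}}$ using the idempotence of $\mathbf 1\mathbf 1^\top/K$ and telescoping ratios; you perform the same Chapman--Kolmogorov composition one step at a time on the marginal vector, where your invariance $\mathbf u\mapsto\gamma_{t|s}\mathbf u+(1-\gamma_{t|s})\mathbf m$ is the vector form of that idempotence (and your induction mirrors Step 1 of the paper's GIDD reparameterization proof), with your row-sum check and handling of vanishing $\rho_s$ or $\gamma_s$ filling in what the paper only asserts as ``well-defined.''
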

\vspace{-0.5cm}
\begin{proof}
Since $\rho_t$ and $\gamma_t$ are monotonically decreasing, the above transition kernel is well-defined. The Markovian forward process implied by the forward kernel \eqref{eq:forward} can be equivalently characterized as
\begin{equation}
    q(\mathbf z_t\mid\mathbf z_s) = \text{Cat}\left(\mathbf z_{t}; \mathbf z_s^\top Q_{t|s}\right)\nonumber 
\end{equation}
where
\begin{equation}
    Q_{t|s} =\left(\begin{array}{cc}
\rho_{t|s}\gamma_{t|s} \mathbf I+\frac{(1-\rho_{t|s})\gamma_{t|s}}{K}\mathbf 1\mathbf 1^\top &  (1-\gamma_{t|s})\mathbf 1 \\[2pt]
\mathbf{0}^{\top} & 1
\end{array}\right)
    \label{eq:forwardMatrix}
\end{equation}
is the forward transition matrix, where $\mathbf I$ and $\mathbf 1$ are abbreviations of the $K$-dimensional identity matrix and $K$-dimensional all-ones vector. Let $i$ be the unique index such that $t_i=t$, the marginal distribution of $\mathbf z_t$ is therefore
\begin{equation}
    q(\mathbf z_t\mid\mathbf x) = \text{Cat}(\mathbf z_t; \mathbf x^\top \bar{Q}_t),
    \label{eq:forwardMarginal}
\end{equation}
where
\begin{equation}
\begin{aligned}
    \bar{Q}_t &= \prod_{j=0}^iQ_{t_j|t_{j-1}}\\
    &=\left(\begin{array}{cc}
\bar{A}_t & (1-\prod_{j=0}^i \gamma_{t_j|t_{j-1}} ) \mathbf{1} \\
\mathbf{0}^{\top} & 1
\end{array}\right)\\
&=\left(\begin{array}{cc}
\bar{A}_t & (1-\gamma_t) \mathbf{1} \\
\mathbf{0}^{\top} & 1
\end{array}\right).
\end{aligned} \nonumber 
\end{equation}
$\bar{A}_t$ is greatly simplified due to the idempotence \footnote{Define \(\mathbf P:=\mathbf 1\mathbf 1^\top /K\). Since \(\mathbf P^2=\mathbf P\), for any scalars \(a,b\), we have
$(a\mathbf I+(1-a)\mathbf P)(b\mathbf I+(1-b)\mathbf P)
=ab\mathbf I+(1-ab)\mathbf P$.

}
\begin{equation}
\begin{aligned}
    \bar{A}_t &= \prod_{j=0}^i \Bigl(\rho_{t_j|t_{j-1}}\gamma_{t_j|t_{j-1}} \mathbf I+\tfrac{(1-\rho_{t_j|t_{j-1}})\gamma_{t_j|t_{j-1}}}{K}\mathbf 1\mathbf 1^\top\Bigl) \\
    & = \Bigl(\prod_{j=0}^i \rho_{t_j|t_{j-1}}\gamma_{t_j|t_{j-1}}\Bigr) \mathbf  I  \\
    &+\frac{1}{K}\Bigl(\prod_{j=0}^i\gamma_{t_j|t_{j-1}} -\prod_{j=0}^i \rho_{t_j|t_{j-1}}\gamma_{t_j|t_{j-1}}\Bigr) \mathbf{1} \mathbf {1}^{\top} \\
    &= \gamma_t\left(\rho_t I + \tfrac{1}{K}(1 - \rho_t)\mathbf 1\mathbf 1^\top\right),
\end{aligned}\nonumber 
\end{equation}
Therefore, substituting the closed-form expression of $\bar Q_t$ into
\eqref{eq:forwardMarginal}, we obtain that, for any ordinary token
$\mathbf z_t\neq \mathbf m$,
\[
q(\mathbf z_t\mid \mathbf x)
=
\gamma_t\left(
\rho_t \mathbf 1_{\{\mathbf z_t=\mathbf x\}}
+
\frac{1-\rho_t}{K}
\right),
\]
while
\[
q(\mathbf z_t=\mathbf m\mid \mathbf x)=1-\gamma_t.
\]
This is exactly the marginal distribution specified in \eqref{eq:marginal}.
\end{proof}

\paragraph{Forward Process in Continuous Time.} The discrete-time Markov chain introduced above can be extended to continuous time via a continuous-time Markov chain (CTMC) model. Although deriving SCDD doesn't require CTMC theory, we include this analysis for the sake of completeness. Moreover, when we borrow the forward transition rate concept, we obtain a better interpretation of the noise schedules of our model. As before, we assume that $\gamma_t$ and $\rho_t$ are differentiable, decreasing functions of time $t\in[0,1]$. 

Let $s$ and $t$ be two adjacent time points with $\Delta t = t-s > 0$. We define the infinitesimal generator $R_t(\cdot,\cdot)$ by
\begin{align}
R_t(\mathbf z_t,\mathbf z_s)
:=
\begin{cases}
\left(\frac{\gamma_t'}{\gamma_t} + \frac{\rho'_t}{\rho_t}\right) \mathbf z_t^\top \mathbf z_s \\
\ - \mathbf z_t^\top\Bigl(
\tfrac{\rho'_t}{\rho_t}\mathbf{u}
+\tfrac{\gamma_t'}{\gamma_t}\mathbf{m}
\Bigr), &\mathbf z_s\neq \mathbf m\\
0, &\mathbf z_s=\mathbf m.
\end{cases}
\label{eq:forward_rate}
\end{align}
The following lemma proves that \eqref{eq:forward_rate} is the forward transition rate of the process \eqref{eq:forward}. We leave the complete proof to Appendix \ref{sec:forward-rate-proof}.
\begin{lemma}\label{lem:forward-rate}
Let $R_t$ be defined as in \eqref{eq:forward_rate}. Then, $R_t$ is a valid infinitesimal generator (rate matrix) in the sense of \citet{lipman2024flow, gat2024discrete}, i.e.,
$$
R_t(\mathbf z_t,\mathbf z_s)\ge 0,\forall \mathbf z_t\neq \mathbf z_s,
\ \sum_{\mathbf z_t} R_t(\mathbf z_t,\mathbf z_s)=0.
$$
Moreover, the discrete-time forward kernel \eqref{eq:forward} admits the continuous-time expansion:
$$
q(\mathbf z_t \mid \mathbf z_s)=\delta_{\mathbf z_t,\mathbf z_s}+\Delta t\,R_t(\mathbf z_t,\mathbf z_s)+o(\Delta t).
$$
Therefore, the probability evolution induced by $R_t$ is the continuous-time limit of the discrete Markov chain \eqref{eq:forward}.
\end{lemma}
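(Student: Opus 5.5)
The plan is to verify the two generator properties directly from \eqref{eq:forward_rate}, and then Taylor-expand the kernel \eqref{eq:forward} in $\Delta t$ case by case.

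\textbf{Generator properties.} For $\mathbf z_s=\mathbf m$ the column is identically zero, so both properties are trivial. For $\mathbf z_s\neq\mathbf m$ I split into three cases.

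\emph{Case $\mathbf z_t\neq\mathbf z_s$, $\mathbf z_t\neq\mathbf m$.} The rate is $-\tfrac{1}{K}\tfrac{\rho_t'}{\rho_t}$, since $\mathbf z_t^\top\mathbf u=1/K$ and $\mathbf z_t^\top\mathbf m=0$.

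\emph{Case $\mathbf z_t=\mathbf m$.} The rate is $-\tfrac{\gamma_t'}{\gamma_t}$.

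Both of these are $\ge 0$ because $\rho_t,\gamma_t$ are decreasing, so $\rho_t',\gamma_t'\le 0$ while $\rho_t,\gamma_t>0$ on $[0,1)$.

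\emph{Zero column sum.} Summing over all $\mathbf z_t\in\mathcal V$ uses $\sum_{\mathbf z_t}\mathbf z_t^\top\mathbf z_s=1$, $\sum_{\mathbf z_t}\mathbf z_t^\top\mathbf u=1$ and $\sum_{\mathbf z_t}\mathbf z_t^\top\mathbf m=1$. This gives
\[
\left(\tfrac{\gamma_t'}{\gamma_t}+\tfrac{\rho_t'}{\rho_t}\right)-\tfrac{\rho_t'}{\rho_t}-\tfrac{\gamma_t'}{\gamma_t}=0.
\]

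\textbf{Expansion of the kernel.} With $s=t-\Delta t$ and differentiability, I write
\[
\rho_{t|s}=\frac{\rho_t}{\rho_{t-\Delta t}}=1+\Delta t\,\frac{\rho_t'}{\rho_t}+o(\Delta t),
\]
and similarly $\gamma_{t|s}=1+\Delta t\,\gamma_t'/\gamma_t+o(\Delta t)$. These follow from $\rho_{t-\Delta t}=\rho_t-\Delta t\rho_t'+o(\Delta t)$ and the expansion $1/(1-x)=1+x+O(x^2)$. I substitute them into \eqref{eq:forward} case by case.

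\emph{Diagonal, $\mathbf z_t=\mathbf z_s\neq\mathbf m$.} We get $\gamma_{t|s}(\rho_{t|s}+\tfrac{1-\rho_{t|s}}{K})=1+\Delta t\bigl(\tfrac{\gamma_t'}{\gamma_t}+\tfrac{\rho_t'}{\rho_t}-\tfrac1K\tfrac{\rho_t'}{\rho_t}\bigr)+o(\Delta t)$. This matches $1+\Delta t R_t(\mathbf z_s,\mathbf z_s)$.

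\emph{Off-diagonal, non-mask target.} We get $\gamma_{t|s}\tfrac{1-\rho_{t|s}}{K}=-\tfrac{\Delta t}{K}\tfrac{\rho_t'}{\rho_t}+o(\Delta t)$, using that the product of the $O(\Delta t)$ factor with $\gamma_{t|s}-1$ is $o(\Delta t)$.

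\emph{Mask target.} We get $1-\gamma_{t|s}=-\Delta t\,\gamma_t'/\gamma_t+o(\Delta t)$.

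\emph{Mask source.} The kernel gives exactly $\delta_{\mathbf z_t,\mathbf z_s}$, matching the zero rate.

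Together these yield $q(\mathbf z_t\mid\mathbf z_s)=\delta_{\mathbf z_t,\mathbf z_s}+\Delta t\,R_t(\mathbf z_t,\mathbf z_s)+o(\Delta t)$. The continuous-time-limit statement then follows because the Kolmogorov forward equation is the $\Delta t\to0$ limit of the Chapman–Kolmogorov recursion with this one-step kernel.

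\textbf{Main obstacle.} The delicate point is the boundary behavior: the ratios $\rho_t'/\rho_t$ and $\gamma_t'/\gamma_t$ blow up as $t\to1$ since $\rho_1=\gamma_1=0$. I would therefore state the expansion for $t\in[0,1)$, or for $t$ in compact subsets where $\rho_t,\gamma_t>0$, so that the $o(\Delta t)$ terms are uniform. I would also note that evaluating the derivatives at $t$ rather than at $s$ only changes the result by $o(\Delta t)$, given continuity of the derivatives.
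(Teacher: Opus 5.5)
Your proposal is correct and follows essentially the same route as the paper: expand $\gamma_{t|s}$ and $\rho_{t|s}$ to first order at $t$ and match the resulting kernel entries against $R_t$, with the generator properties checked directly. Two points are small refinements rather than a different approach. First, you track probabilities per target state, which makes the $-\frac{1}{K}\frac{\rho_t'}{\rho_t}$ contribution to the diagonal explicit; the paper instead matches at the level of the retain/uniform/mask events. Second, you add a caveat about the singularity of $\rho_t'/\rho_t$ and $\gamma_t'/\gamma_t$ as $t\to 1$, which the paper does not address.
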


\begin{remark}
When $\mathbf z_s \neq \mathbf m$, $\rho_t$ and $\gamma_t$ govern the forward transition rates for sampling $\mathbf z_t$ from $\mathbf u$ and setting $\mathbf z_t$ to \textsc{[mask]}, respectively. Thus, in addition to their SNR interpretation in the marginals, they act as explicit and independent controllers of the two noise mechanisms. In contrast, under the same marginal representation, GIDD couples the absorbing-mask and uniform-transition components in its forward rates, preventing independent control without compromising the clean marginal form. SCDD preserves this marginal structure while decoupling the two transition rates. See Table \ref{tab:generator_comparison} for a generator-level comparison and Appendices \ref{correction_mdlm} and \ref{sec:scdd-gidd} for further discussion.
\end{remark}
\subsection{Backward Denoising Process}
Follow the same notations, the true posterior $q(\mathbf z_s \mid \mathbf z_t, \mathbf x)$ derived from Bayes' rule is given by: 
\begin{align}
&q(\mathbf z_{s}\mid \mathbf z_{t},\mathbf x)=\nonumber\\
&\begin{cases}
\frac{\mathbf \rho_s \mathbf z_s^\top \mathbf x+(1-\rho_s)\frac{1}{K}}{\rho_t \mathbf z_t^\top \mathbf x+(1-\rho_t)\frac{1}{K}}\bigl(\frac{\rho_t}{\rho_s}\mathbf z_s^\top \mathbf z_t+\frac{\rho_s-\rho_{t}}{\rho_s}\frac{1}{K}\bigr), &\mathbf z_t\neq\mathbf m\\[6pt]
(1-\mathbf z_{s}^\top \mathbf m) \frac{\gamma_s-\gamma_t}{1-\gamma_t}\left(\rho_s \mathbf z_s^\top \mathbf x+(1-\rho_s)\tfrac{1}{K}\right)\\[2pt]
\quad+\mathbf z_s^\top\mathbf m\frac{1-\gamma_s}{1-\gamma_t},
 &\mathbf z_t=\mathbf{m}
\end{cases}\label{eq:backward}
\end{align}
We refer to the Appendix \ref{sec:backward} for the details of the derivation. 
\vskip 0.2in
\paragraph{Efficient Parallel Self-Correction.} Since we have \textsc{[mask]} being as an absorbing state in the forward process, it follows that there is no transition from non-\textsc{[mask]} tokens to \textsc{[mask]}, i.e., no remasking, in the backward process. By eliminating the redundant remasking step, SCDD acquires more correction capacity for any given number of inference steps. In particular, SCDD can be twice as efficient as purely remasking-based self-correcting discrete diffusion models, since it only needs one step to correct a token instead of two. As a result, it potentially improves the generation quality, especially in few-step generation scenarios.

Since we don't know the true clean data $\mathbf x$, we follow previous work \cite{sohl2015deep, sahoo2024simple} to parameterize the backward process as
\begin{align}&p_\theta(\mathbf z_s\mid\mathbf z_t):=q(\mathbf z_{s}\mid \mathbf z_{t},\mathbf x=\mathbf x_\theta(\mathbf z_t,t))\nonumber\\
&=\begin{cases}
\frac{\mathbf \rho_s \mathbf z_s^\top \mathbf x_\theta+(1-\rho_s)\frac{1}{K}}{\rho_t \mathbf z_t^\top \mathbf x_\theta+(1-\rho_t)\frac{1}{K}}\bigl(\frac{\rho_t}{\rho_s}\mathbf z_s^\top \mathbf z_t+\frac{\rho_s-\rho_{t}}{\rho_s}\frac{1}{K}\bigr), &\mathbf z_t\neq\mathbf m,\\[6pt]
(1-\mathbf z_{s}^\top \mathbf m) \frac{\gamma_s-\gamma_t}{1-\gamma_t}\left(\rho_s \mathbf z_s^\top \mathbf x_\theta+(1-\rho_s)\tfrac{1}{K}\right)\\\quad+\mathbf z_s^\top\mathbf m\frac{1-\gamma_s}{1-\gamma_t},
 &\mathbf z_t=\mathbf{m},
\end{cases}\label{eq:backwardParam}\end{align}
where
$$
\begin{aligned}
\mathbf x_\theta(\cdot,\cdot):
\mathcal{V}\times[0,1]&\to\Delta^{K+1},\nonumber \\
(\mathbf z_t,t)\ &\mapsto\mathbf x_\theta(\mathbf z_t,t),\nonumber
\end{aligned}
$$
is a denoising neural network that predicts clean data $\mathbf x$ given the latent variable $\mathbf z_t$ at time $t$. We omit the arguments of $\mathbf x_\theta$ when no confusion arises. Such parameterization indeed defines a valid density $p_\theta(\cdot\mid\mathbf z_t)$ conditional $\mathbf z_t$, see Appendix \ref{sec:model} for more details. We also derive the corresponding backward transition rate under CTMC theory in Appendix \ref{sec:backwardrate} for completeness.

We further impose the \textit{Zero Masking Probabilities} constraint on the denoising network $\mathbf x_\theta$ as in MDLM \cite{sahoo2024simple}, i,e., $$\mathbf x_\theta(\mathbf z_t,t)^\top \mathbf m = 0,\quad \forall\ \mathbf z_t, t.$$ However, we release the \textit{Carry-Over Unmasking} constraint in MDLM to allow token self-correction during generation: even if $\mathbf z_t \neq \mathbf m$,  we still allow $\mathbf x_\theta$ to assign nonzero probability mass to tokens other than $\mathbf z_t$ so that previously unmasked tokens can be revised in later denoising steps.

Additionally, we set the prior distribution to be $p_\theta(\mathbf z_1):=\mathbf 1(\mathbf z_1 = \mathbf m)$, indicating that we start the backward process from \textsc{[mask]}. With these parameterizations, we define the generative model of $\mathbf x$ as:
\begin{align}
    p_\theta(\mathbf x)&:= \int p_\theta(\mathbf z_1)p_\theta(\mathbf x\mid\mathbf z_0)\prod_{i=1}^T p_\theta(\mathbf z_{t_{i-1}}\mid\mathbf z_{t_i})\mathrm d\mathbf z_{0:1}\label{eq:p_theta_marginal}
\end{align}



\subsection{ELBO}
To train a diffusion model, we leverage variational inference and minimize the usual negative evidence lower bound (NELBO) as in previous works \cite{sohl2015deep,ho2020denoising, sahoo2024simple}. The discrete-time NELBO for finite $T$ is given by: 
\begin{align}
&\mathcal L^T_{\text{NELBO}}
:=
\underbrace{
-\mathbb{E}_{q} \big[ \log p_\theta(\mathbf x \mid \mathbf z_0) \big]
}_{\mathcal L^T_{\text{reconstruction}}} \nonumber \\
&+
\underbrace{
\mathbb E_{q}\left[D_{\mathrm{KL}}
\big(
q(\mathbf z_1 \mid \mathbf x)
\|
p_\theta(\mathbf z_1)
\big)\right]
}_{\mathcal L_{\text{prior}}}\nonumber\\
&+
\underbrace{
\mathbb E_{q}\left[\sum_{i=1}^T
D_{\mathrm{KL}}
\big(
q(\mathbf z_{t_{i-1}} \mid \mathbf z_{t_i}, \mathbf x)
\|
p_\theta(\mathbf z_{t_{i-1}} \mid \mathbf z_{t_i})
\big)\right]
}_{\mathcal L^T_{\text{diffusion}}},
\label{eq:NELBO}
\end{align}
\begin{figure*}[t]
\begin{align}
\mathcal{L}^T_{\text{diffusion}}=\begin{cases}
-\mathbb E_{t\sim\mathcal U\{t_1,\dots,t_T\}}\mathbb E_{q}\left[ T\sum_{\mathbf v\neq \mathbf m}
\frac{\rho_s\mathbf v^\top \mathbf x
+(1-\rho_s)\frac{1}{K}}{\rho_t \mathbf z_{t}^\top \mathbf x
+(1-\rho_t)\frac{1}{K}}
\Big(\frac{\rho_t}{\rho_s}\mathbf v^\top \mathbf z_{t}+\frac{\rho_s-\rho_t}{\rho_s}\tfrac{1}{K}\Big)
\log\frac{\rho_s\mathbf v^\top \mathbf x_\theta + (1-\rho_s)\frac{1}{K}}
{\rho_t \mathbf z_{t}^\top \mathbf x_\theta +(1-\rho_t)\frac{1}{K}}\right],
&\text{if }\mathbf z_t\neq\mathbf m,\\[4pt]
-\mathbb E_{t\sim\mathcal U\{t_1,\dots,t_T\}}\mathbb E_{q}\left[T\sum_{\mathbf v\neq \mathbf m}\frac{\gamma_{s}-\gamma_t}{1-\gamma_t}(\rho_s\mathbf v^{\top}\mathbf x+(1-\rho_s)\frac{1}{K})\log \left(\rho_s\mathbf v^\top \mathbf x_\theta+(1-\rho_s)\frac{1}{K}\right)\right], &\text{if }\mathbf z_t=\mathbf m.
\end{cases}\label{eq:loss_zt}
\end{align}
\end{figure*}
\begin{figure*}
    \begin{align}
        \mathcal{L}^\infty_{\text{diffusion}}
=
\begin{cases}
\mathbb E_{t\sim\mathcal U[0,1]}\mathbb E_q\Bigl[
\sum_{\mathbf v\neq \mathbf z_t,\mathbf m }\left(
\frac{(\rho_t \mathbf v^\top \mathbf x+(1-\rho_t)\frac{1}{K})\frac{\rho'_t}{\rho_t}\frac{1}{K}}
{\rho_t \mathbf z_t^\top \mathbf x+(1-\rho_t)\frac{1}{K}}
\log
\frac{\rho_t \mathbf v^\top \mathbf x_\theta+(1-\rho_t)\frac{1}{K}}
{\rho_t \mathbf z_t^\top \mathbf x_\theta+(1-\rho_t)\frac{1}{K}}\right)
-
\frac{\rho'_t(-\mathbf z_t^\top \mathbf x_\theta+\frac{1}{K})}
{\rho_t \mathbf z_t^\top \mathbf x_\theta+(1-\rho_t)\frac{1}{K}}\Bigr],
& \text{if } \mathbf z_t\neq \mathbf m,
\\
\mathbb E_{t\sim\mathcal U[0,1]} \mathbb E_q\left[\sum_{\mathbf v\neq\mathbf m}
\frac{\gamma_t'}{1-\gamma_t}
(\rho_t \mathbf v^\top \mathbf x+(1-\rho_t)\frac{1}{K})
\log(\rho_t \mathbf v^\top \mathbf x_\theta+(1-\rho_t)\frac{1}{K})\right],
& \text{if } \mathbf z_t=\mathbf m.
\end{cases}
\label{eq:contLoss}
\end{align}
\end{figure*}
where $q$ is the abbreviation of 
$q(\mathbf x, \mathbf z_{0:1})$ defined in \eqref{eq:jointDist}, and $\mathcal L^T_{\text{diffusion}}$ is given by \eqref{eq:loss_zt} up to $\theta$-independent additive constants. A distinctive feature of our loss function is that the latent state $\mathbf z_t$ contributes to the gradient regardless of whether it is masked ($\mathbf z_t = \mathbf m$) or unmasked ($\mathbf z_t \neq \mathbf m$), which is similar to GIDD \cite{rutte2025generalized}. Note that if $\rho_t\equiv1$, the diffusion loss $\eqref{eq:loss_zt}$ is reduced to MDLM diffusion loss \cite{sahoo2024simple} since the forward noising process doesn't involve uniform noise anymore. We left the derivation of discrete-time ELBO to Appendix \ref{sec:loss}, and the discussion of the relationship with MDLM to Appendix \ref{sec:mdlm-ours}.
\vspace{-0.5cm}
\paragraph{Continuous-time ELBO.} By taking $T\to\infty$, the discrete-time negative ELBO will converge to its continuous-time extension, denoted by $\mathcal L^\infty_{\text{NELBO}}$. This will give tight approximation to ELBO \cite{kingma2021variational, sahoo2024simple}.  
Among the three components in \eqref{eq:NELBO}, $\mathcal L_{\text{prior}} \equiv 0$ by construction. When $T\to\infty$, $\mathcal \lim_{T\to\infty}\mathcal L^T_{\text{reconstruction}}=0$ due to the fact that $\mathbf z_0$ converges in distribution to $\mathbf x$ when $T\to\infty$. Therefore, the only component left in $\mathcal L^\infty_{\text{NELBO}}$ is $\mathcal L^\infty_{\text{diffusion}}:=\lim_{T\to\infty}\mathcal L^T_{\text{diffusion}}$, which is given in \eqref{eq:contLoss} up to $\theta$-independent additive constants. See Appendix \ref{sec:loss} for derivation of continuous-time ELBO.
\vskip 0.2in
\subsection{Generalization to Sequences of Tokens}
In practice, we operate on sequences of tokens rather than single tokens. Our generalization follows \citet{austin2021structured, sahoo2024simple, rutte2025generalized}. Let $\mathbf x^{1:L}$ be a length-$L$ sequence of clean data, and $\mathbf z_t^{1:L}$ be a length-$L$ sequence of latent variables at time $t$. The forward noising process is applied to each of the $L$ positions in the sequence independently to corrupt the clean data $\mathbf x^{1:L}$. The model $\mathbf x_\theta$ now accepts a whole sequence $\mathbf z_t^{1:L}$ as input, and outputs $\mathbf x_\theta(\mathbf z_t^{1:L},t) = (\mathbf x^l_\theta(\mathbf z_t^{1:L},t))_{l=1}^L$, where $\mathbf x^l_\theta(\mathbf z_t^{1:L},t)$ is the predicted distribution of clean data at position $l$. The denoising process admits a factorization over all positions, i.e., $p_\theta(\mathbf z_s^{1:L}\mid\mathbf z_t^{1:L}) = \prod_{l=1}^Lp_\theta\left(\mathbf{z}_s^{\ell} \mid \mathbf{z}_t^{1: L}\right)$. The final training objective is defined as the sum of the per-token NELBOs across all positions.

\subsection{Sampling}
We perform the usual ancestral sampling as in other works in this field \cite{shi2024simplified,sahoo2024simple, rutte2025generalized, wang2025remasking}. To sample from the model $p_\theta(\mathbf x)$, the backward process is initiated from a sequence with \textsc{[mask]} at all positions, and runs iteratively according to $p_\theta(\mathbf z_s^{1:L}\mid\mathbf z_t^{1:L})$ following the prefixed sampling time schedule. Given the current intermediate sample $\mathbf z_t^{1:L}$, the model samples in parallel across all positions in the sequence for the next time step $s<t$ in the schedule. 
\section{Experiments}
\subsection{Experiment Setup}
We focus on applying SCDD to language modeling tasks, and select two standard datasets, LM1B (One Billion Words Dataset, \citet{chelba2013onebillion}) and the OWT (OpenWebText, \citet{Gokaslan2019OpenWeb}), to evaluate SCDD. We use  GPT-2 tokenizer, and adopt a DiT \cite{peebles2023scalable}  backbone at \textsc{small} scale throughout the experiments. The model is trained on 33B tokens for LM1B, and 131B tokens for OWT. 

To ensure fair comparison, we use the noise schedule that induces the same marginal distribution as GIDD during SCDD training. As for baselines, we retrain GIDD \cite{rutte2025generalized} with $p_u \in \{0.1,0.2\}$ and MDLM \cite{sahoo2024simple} under similar settings, where $p_u$ is the maximum uniform transition noise ratio (see Appendix \ref{sec:noise_schedule_scdd}). We also implemented ReMDM-cap and ReMDM-conf \cite{wang2025remasking} to retrained MDLM as two of the baselines. See Appendix \ref{sec:experiments} for training details.
\subsection{Likelihood Evaluation}
\begin{table}[t]
  \caption{Validation perplexity (Val PPL) on LM1B and OWT. $^\dagger$Reported in \citet{sahoo2024simple}. $^*$Total number of tokens seen, calculated as double of the average number of tokens seen as reported in \citet{sahoo2024simple}.}
  \vskip -0.1in
  \label{tab:valppl}
  \begin{center}
      \begin{sc}
        \begin{tabular}{lcc}
          \toprule
  Model (\textsc{small}) & \multicolumn{2}{c}{Val PPL. ($\downarrow$)} \\
  & LM1B & OWT  \\
          \midrule
          $\text{MDLM}^\dagger$(66B$^*$) & 27.04 & - \\
          $\text{MDLM}^\dagger$(524B$^*$) & - &  23.21\\
          \midrule
          MDLM                &  32.98   & 24.72          \\
          GIDD+ ($p_u=0.1$)   &  39.98   & 31.54          \\
          GIDD+ ($p_u=0.2$)   &  40.70   & 32.19          \\
          SCDD ($p_u=0.1$, ours)    &  39.16   & 28.41          \\
          SCDD ($p_u=0.2$, ours)    &  46.54   & 32.49          \\
          \bottomrule
        \end{tabular}
      \end{sc}
  \end{center}
  \vskip -0.2in
\end{table}
Table \ref{tab:valppl} shows the validation perplexity of each model trained on LM1B and OWT dataset. Similar to the findings in \citet{rutte2025generalized}, we see the models with uniform noise added during training exhibit a degradation in validation perplexity due to the increased difficulty of learning transitions between non-\textsc{[mask]} tokens. However, by comparing the best models of SCDD and GIDD, we still see a 3.7\% and 9.9\% decrease in validation perplexity on LM1B and OWT, respectively. Indeed, SCDD is trained without having to learn the transitions from non-\textsc{[mask]} tokens to the \textsc{[mask]} token in the backward process, which slightly eases the training task. We will see later that introducing uniform noise substantially enhances the model’s self-correction capability with this modest increase in validation perplexity.
\subsection{Unconditional Language Generation}
To evaluate the model’s ability to learn from large-scale language data and to assess the effectiveness of self-correction at inference time, we perform unconditional text generation and report the generative perplexity (Gen PPL), a common quality metric for text generation, evaluated by the GPT2-large model in Table \ref{tab:genppl-comparison}. As noted in \citet{zheng2025masked}, Gen PPL can be extremely low if the model produces highly repetitive and redundant tokens. Therefore, we also report the unigram entropy metric in Appendix \ref{sec:addexp} in addition to Gen PPL, serving as a sanity check for the diversity of generated texts. 

SCDD ($p_u=0.2$) consistently outperforms the best GIDD+ baseline across all denoising steps on LM1B and OWT with a comparable entropy. Notably, SCDD has significant improvements over baselines at few-step parallel generation scenarios, with 55\% and 9.2\% decrease in Gen PPL when compared to ReMDM-cap and GIDD+ at 32-step generation, respectively. 
We attribute the performance gain to two primary factors. First, GIDD+ utilizes a dynamic weighting scheme to mitigate training loss explosion. However, this scheme downweights the training samples at boundaries ($t\to0$ and $t\to1$), which may impede the model from learning critical final corrections during generation. Second, GIDD+ does not eliminate remasking during inference, which likely reduces the capacity of self-correction compared to SCDD. 
\begin{table*}[t]

\caption{Generative perplexity (Gen PPL) on LM1B and OWT datasets across sampling steps. Lower is better. Bold values indicate the best performance per column. $^\dagger$We train all models on OWT with a context length of 512 to be consistent with \citet{rutte2025generalized}, different from the 1024 context length in \citet{sahoo2024simple, wang2025remasking}.}
\label{tab:genppl-comparison}
\begin{center}
\begin{sc}
\vskip -0.1in
\renewcommand{\arraystretch}{0.85}
\setlength{\tabcolsep}{5pt}
\begin{tabular}{lccccccccccc}
\toprule
\qquad Gen. PPL ($\downarrow$) & \multicolumn{5}{c}{LM1B (Steps)} & \multicolumn{6}{c}{OWT (Steps)} \\
\cmidrule(r){2-6} \cmidrule(l){7-12}
Model & 16 & 32 & 64 & 128 & 256 & 32 & 64 & 128 & 256 & 512 & 1024 \\
\midrule
MDLM$^\dagger$ & 226.0 & 162.6 & 136.7 & 123.0 & 118.6 & 169.9 & 123.6 & 104.7 & 94.8 & 91.9 & 88.5 \\[3pt]
ReMDM-Cap 0.01 & 222.1 & 157.5 & 127.0 & 108.9 & \textbf{96.8} & 166.3 & 120.9 & 95.9 & 81.7 & 73.9 & 68.3 \\[3pt]
ReMDM-Confidence & 221.1 & 159.5 & 129.8 & 122.8 & 120.4 & 167.6 & 118.3 & 98.1 & 87.9 & 83.9 & 80.5 \\[3pt]
GIDD+ ($p_u=0.1$) & 171.1 & 146.4 & 134.9 & 131.9 & 128.7 & 82.1 & 71.4 & 66.7 & 65.0 & 64.8 & 63.8 \\[3pt]
GIDD+ ($p_u=0.2$) & 192.7 & 165.5 & 151.9 & 147.3 & 144.8 & 90.5 & 79.0 & 75.1 & 73.2 & 72.0 & 71.2 \\[3pt]
SCDD ($p_u=0.1$, ours) & 159.8 & 133.5 & 119.2 & 113.7 & 108.9 & 78.6 & 71.8 & 67.6 & 66.0 & 63.6 & 61.3 \\[3pt]
SCDD ($p_u=0.2$, ours) & \textbf{159.2} & \textbf{130.0} & \textbf{115.2} & \textbf{108.4} & 102.6 & \textbf{74.5} & \textbf{67.1} & \textbf{60.7} & \textbf{59.6} & \textbf{58.2} & \textbf{55.7} \\
\bottomrule
\end{tabular}
\end{sc}
\end{center}
\end{table*}
\paragraph{LLM-as-a-judge Evaluation.} 
To provide stronger semantic evidence showing that the SCDD outputs are actually better, we perform \textit{LLM-as-a-judge} evaluation to directly assess correction quality beyond Gen PPL and entropy. For each setting, we corrupt 256 clean OWT sequences from $t=0$ to $t=0.8$ using SCDD's forward process, then let both models denoise from the identical corrupted input. Each model generates outputs at 6 different total step counts with nucleus sampling ($p=0.9$). The GPT-5.4 judge scores each pair on five dimensions (\textit{Clarity}, \textit{Grammaticality}, \textit{Factuality}, \textit{Style}, \textit{Creativity}) on a 1 to 10 scale. Finally, the model decides the "winning text". 

We report per-metric scores with paired $t$-tests and overall win rates with binomial tests ($n=256$). The evaluation prompt for GPT-5.4 is in Figure \ref{fig:evaluation_prompt}, strictly following \citet{rutte2025generalized}. Results are presented in Table \ref{tab:matched_compact}, showing that SCDD consistently outperforms GIDD+ across \textit{Clarity}, \textit{Factuality}, and \textit{Style} metrics, with significant gains in \textit{Clarity} and \textit{Style} at higher step counts. While GIDD+ exhibits a persistent advantage in Creativity, SCDD's ability to maintain higher overall win rates—reaching 60.6\% at 1024 steps—highlights its self-correction efficacy in matched-noise environments. In addition to this matched-schedule setting, we see similar results in a cross-ratio setting, where we compare the best-performing SCDD ($p_u=0.2$) against best GIDD+ ($p_u=0.1$). See Appendix \ref{sec:addexp} and Table \ref{tab:crossratio_compact} for more details.
\begin{table*}[t]

\caption{Matched-schedule Setting --- SCDD ($p_u{=}0.2$) vs GIDD+ ($p_u{=}0.2$). Values are formatted as SCDD (GIDD+). Significance: $^{*} p < 0.05$, $^{**} p < 0.01$. }
\label{tab:matched_compact}
\begin{center}
\begin{sc}
\renewcommand{\arraystretch}{0.8}
\vskip -0.1in
\setlength{\tabcolsep}{3.5pt} 
\begin{tabular}{lcccccc}
\toprule
Metrics & \multicolumn{6}{c}{Steps} \\
\cmidrule(l){2-7}
 & 32 & 64 & 128 & 256 & 512 & 1024 \\
\midrule
Clarity & 1.65 (1.49)$^{**}$ & 1.62 (1.52) & 1.69 (1.50)$^{**}$ & 1.68 (1.56)$^{*}$ & 1.73 (1.50)$^{**}$ & 1.73 (1.48)$^{**}$ \\[2pt]
Gramm. & 1.38 (1.42) & 1.46 (1.49) & 1.45 (1.46) & 1.49 (1.53) & 1.51 (1.47) & 1.57 (1.46)$^{*}$ \\[2pt]
Fact. & 2.20 (2.11)$^{*}$ & 2.13 (2.07) & 2.20 (2.05)$^{**}$ & 2.21 (2.12)$^{*}$ & 2.13 (1.97)$^{**}$ & 2.20 (2.07)$^{**}$ \\[2pt]
Style & 1.62 (1.50)$^{*}$ & 1.63 (1.53) & 1.66 (1.51)$^{**}$ & 1.66 (1.57) & 1.70 (1.52)$^{**}$ & 1.73 (1.50)$^{**}$ \\[2pt]
Creativity & 2.78 (2.99)$^{**}$ & 2.81 (3.02)$^{**}$ & 2.88 (3.14)$^{**}$ & 2.84 (3.14)$^{**}$ & 2.94 (3.14)$^{**}$ & 2.93 (3.15)$^{**}$ \\[2pt]
\midrule
Win rate & 55.9\% & 53.0\% & 55.3\% & 52.0\% & 58.1\%$^{*}$ & 60.6\%$^{**}$ \\
\bottomrule
\end{tabular}
\vskip -0.1in
\end{sc}
\end{center}
\end{table*}

\paragraph{Correction Capacity.}
To compare the correction capacity between GIDD+ and SCDD, we use the models trained on OWT data, and generate 128 samples for each model under different denoising steps, then calculate the average of \textit{Correction Rate} defined as follows:
$$\textit{Correction Rate} = \frac{C}{L},$$
where $C = \text{\# Total Corrections}$, and $L=\text{\# Context Length}$. Indeed, from Table \ref{tab:correction_rate} we see that SCDD not only achieves significantly higher \textit{Correction Rate}, but also scales faster to a \textit{Correction Rate} of $0.75$ at $1024$ steps, thus leveraging additional denoising steps to refine the generated texts more efficiently.

\begin{table}[h]
  \caption{Comparison of \textit{Correction Rate} between GIDD and SCDD across denoising steps ($N$).}
  \label{tab:correction_rate}
  \begin{center}
    \begin{small}
      \begin{sc}
        \begin{tabular}{lccccc}
          \toprule
          Model ($p_u=0.2$) & \multicolumn{5}{c}{Total \# Denoising Steps $N$ ($\uparrow$)} \\
           & 64 & 128 & 256 & 512 & 1024 \\
          \midrule
          GIDD+  & 0.39 & 0.40 & 0.40 & 0.40 & 0.40 \\
          SCDD (Ours)  & 0.69 & 0.71 & 0.72 & 0.73 & 0.75 \\
          \bottomrule
        \end{tabular}
      \end{sc}
    \end{small}
  \end{center}
\end{table}
\subsection{Ablation Study}
We carry out three ablation studies to investigate into self-correction behaviors. Specifically, we want to know: 1) If a larger uniform noise ratio would encourage more aggressive (parallel) self-correction? 2) How does the timing of peak uniform noise affect the temporal strength of self-correction throughout the generation process? 3) Does “higher correction rate” imply more successful self-corrections, or only more frequent but pointless token revisions?
\paragraph{Parallel Self-Correction.}
\begin{figure}[h]
  \begin{center}
\centerline{\includegraphics[width=\columnwidth]{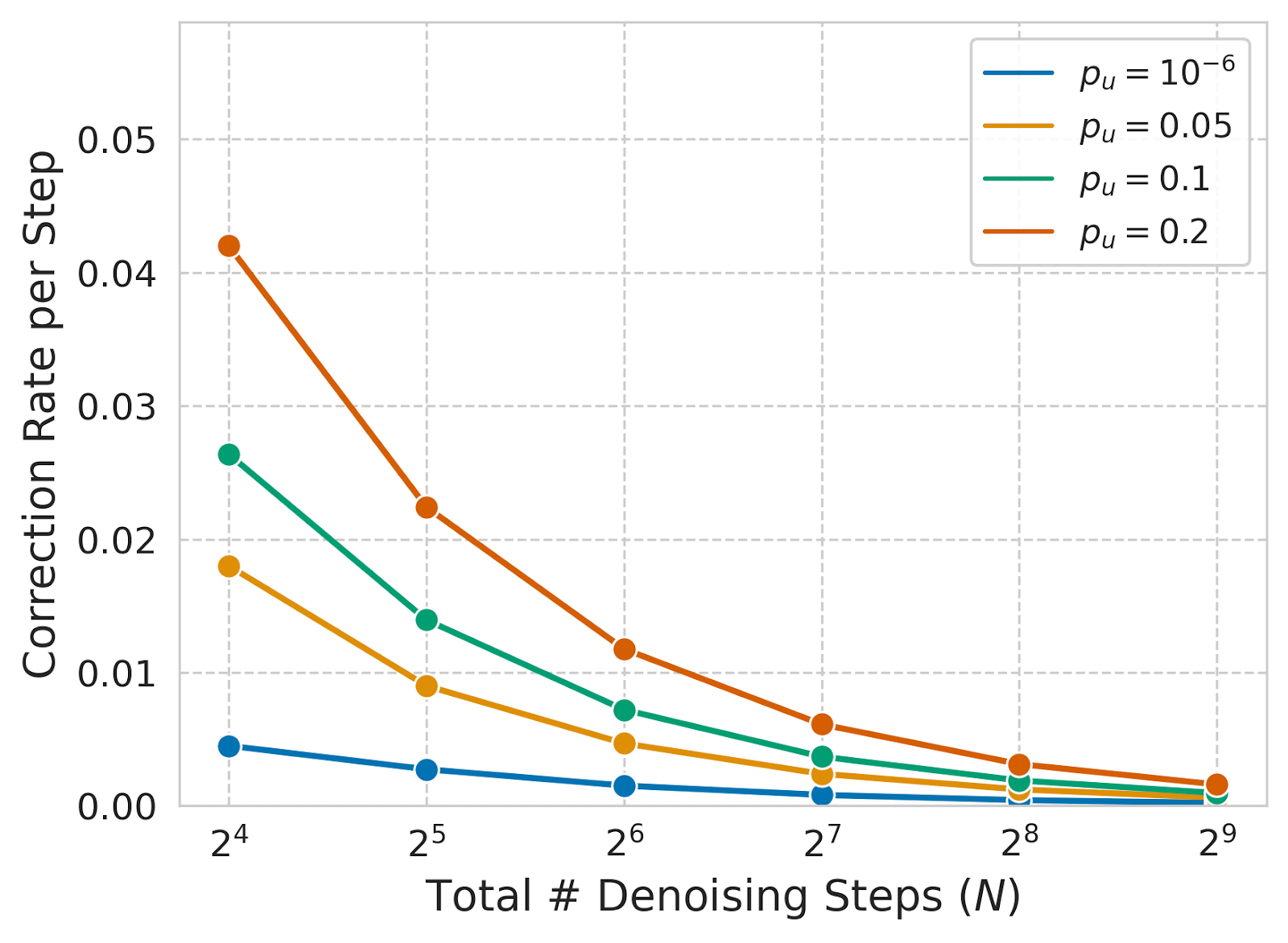}}
    \caption{
      \textit{Correction Rate per Step} versus total number of denoising steps at different maximum uniform noise ratios. Reported values are averaged from 128 independently generated sequences.
    }
    \label{fig:ablation1-1}
  \end{center}
  \vskip -0.1in
\end{figure}
To answer the first question, we train our model on Wikitext-103 \cite{merity2017pointer} with maximum uniform noise ratio $p_u\in\{\text{1e-6},0.05,0.1,0.2\}$, see Appendix \ref{sec:training-details} for training details. Then, we define the following metric:
$$\textit{Correction Rate per Step} = \frac{C}{L\times N},$$
where $C = \text{\# Total Corrections}$, $L=\text{\# Context Length}$, and $N = \text{\# Denoising Steps}$. Note that the \textit{Correction Rate per Step} is derived by applying a normalization factor $N$ to the standard \textit{Correction Rate}. This normalization compensates for the inherent bias toward higher total corrections in generation with more denoising steps as we see in Table \ref{tab:correction_rate}.

For a fixed $N$, Figure \ref{fig:ablation1-1} shows that total corrections increases as $p_u$ increases, consistent with the intuition that more uniform noise in the forward process would incur more corrections in the backward process. Notably, \textit{Correction Rate per Step} is negatively correlated with the number of total denoising steps, indicating more active parallel self-correction at fewer-step generation scenarios. In contrast, self-correction is amortized throughout the whole process in an extended sampling horizon, resulting in a modest increase in total corrections.

\paragraph{Dynamics of Self-Correction.}
\begin{figure}[h]
  \begin{center}
    \centerline{\includegraphics[width=\columnwidth]{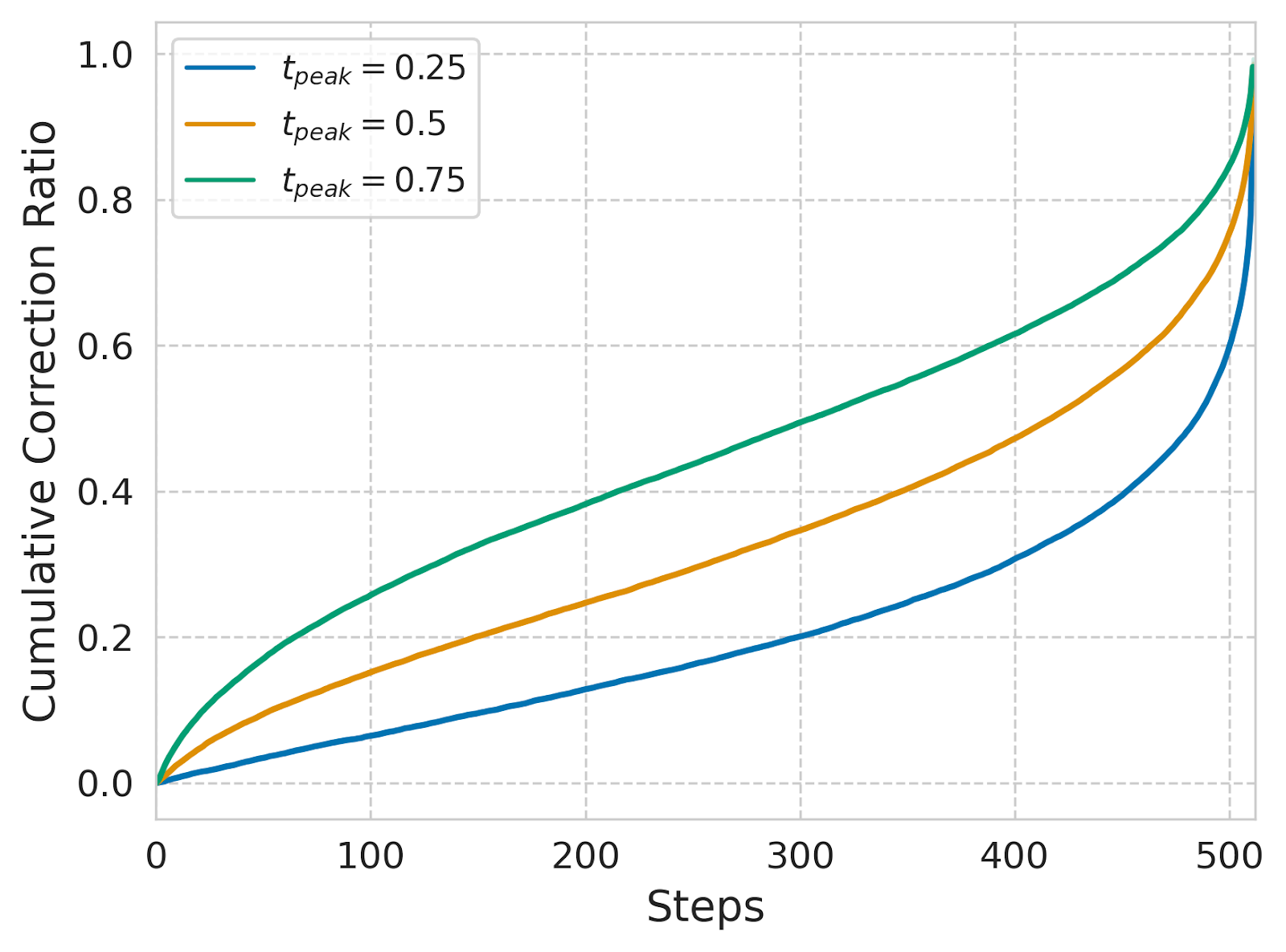}}
    \caption{
        Cumulative correction ratio over 512 denoising steps, defined as the fraction of total corrections completed by step $s$. Results are averaged over 128 independently generated sequences.
    }
    \label{fig:ablation1-2}
  \end{center}
  \vspace{-2em}
\end{figure}

We train the same model on Wikitext-103 data, but with a different noise schedule that attains the maximum uniform noise ratio at a general time point $t_{\text{peak}}$ (see Appendix \ref{sec:noise_schedule_scdd} for a detailed discussion of noise schedules). From Figure \ref{fig:ablation1-2} We see a clear pattern that the model shifts the timing of its self-correction to align with the peak uniform noise time of the training schedule. When the maximum noise ratio occurs later in the forward process (higher $t_{\text{peak}}$), the model tends to self-correct in the early stage during generation, as shown by the concave trajectory of the $t_{\text{peak}} = 0.75$ curve in the first half. In this setting, the model completes nearly 40\% of its total correction within the first 200 steps. Conversely, when the noise peak is shifted to the beginning of the forward process ($t_{\text{peak}} = 0.25$), the correction is significantly delayed, resulting in a convex curve where the majority of the corrections occur only in the final 100 denoising steps. 
\paragraph{Exact Recovery of Corrupted Tokens.}
So far, we have seen that SCDD substantially increases the overall correction rate during generation. To directly verify that these corrections are beneficial rather than spurious revisions, we conduct a controlled corruption--recovery experiment on clean OWT validation sequences. For each sequence, we randomly select \(K\in\{5,10,20,50\}\) positions, replace their tokens with uniformly sampled alternatives, and apply one SCDD (\(p_u=0.2\)) denoising step at the last-step noise level, corresponding to step 127 in a 128-step sampling schedule. We report the \emph{touch rate}, the fraction of corrupted tokens modified by the model, and the \emph{recovery rate}, the fraction of corrupted tokens exactly restored to their original clean values. As shown in Table~\ref{tab:exact-recovery}, SCDD touches nearly all corrupted tokens and exactly recovers \(64.4\%\)--\(69.4\%\) of them. Moreover, under the strongest corruption level \(K=50\), Gen PPL drops from \(154.8\) to \(25.5\) after one denoising step. These results show that SCDD performs meaningful content recovery rather than merely making frequent token edits.

\subsection{Benchmark Performance}
Finally, we evaluate the models on seven standard common sense benchmarks (ARC-E/C \cite{clark2018think}, BoolQ \cite{clark2019boolq}, HellaSwag \cite{zellers2019hellaswag}, PIQA \cite{bisk2020piqa}, OBQA \cite{mihaylov2018can}, Winogrande \cite{sakaguchi2021winogrande}) using the EleutherAI LM Evaluation Harness \cite{eval-harness} with a batch size of 32, see Appendix \ref{sec:ben} for details. Not surprisingly, we see the models trained on uniform and mask noises underperform the mask-only models on all tasks except ARC-c and OBQA. However, it is important to note that these standard benchmarks primarily measure zero-shot likelihood and do not explicitly reflect the self-correction ability observed in our earlier studies. See Appendix \ref{sec:ben} for a detailed discussion of the results.
\begin{table}[t]
\centering
\caption{Recovery rate of corrupted tokens on OWT validation sequences after a single denoising step. Results are averaged over 128 samples with standard errors.}
\label{tab:exact-recovery}
\small
\setlength{\tabcolsep}{3.5pt}
\renewcommand{\arraystretch}{1.35}
\begin{tabular}{ccccc}
\toprule
\(K\) & Touch Rate & Recovery Rate & \multicolumn{2}{c}{Gen PPL} \\
\cmidrule(lr){4-5}
 & &  & Corrupted & Corrected \\
\midrule
5  & \(1.000{\pm}0.000\) & \(0.694{\pm}0.013\) & \(22.0{\pm}0.3\)   & \(23.8{\pm}0.5\) \\
10 & \(1.000{\pm}0.000\) & \(0.652{\pm}0.011\) & \(28.2{\pm}0.4\)   & \(24.0{\pm}0.4\) \\
20 & \(0.999{\pm}0.001\) & \(0.647{\pm}0.008\) & \(44.7{\pm}0.7\)   & \(24.3{\pm}0.5\) \\
50 & \(1.000{\pm}0.000\) & \(0.644{\pm}0.005\) & \(154.8{\pm}2.2\)  & \(25.5{\pm}0.4\) \\
\bottomrule
\end{tabular}
\vspace{-1em}
\end{table}
\section{Conclusion}
In this work, we propose the Self-Correcting Discrete Diffusion (SCDD) model that enhances self-correction to enable more effective parallel generation. In contrast to post-hoc self-corrective samplers, our approach explicitly learns self-correction during pretraining, leading to improved generalization. Compared to GIDD, our forward transition defines clear and explicit state transitions and removes a redundant remasking step, making the model easier to tune and more effective in few-step parallel generation scenarios. Empirically, SCDD demonstrates better generation performance and stronger parallel self-correction capability on LM1B and OWT. For future work, we are interested in scaling up SCDD to acquire more generalizable self-correction ability, as well as exploring reinforcement learning methods to further enhance self-correction ability in reasoning tasks.

\section*{Impact Statement}

This paper presents work whose goal is to advance the field
of Machine Learning. There are many potential societal
consequences of our work, none which we feel must be
specifically highlighted here.

\section*{Acknowledgements}
Guang Lin would like to thank the support of National Science Foundation (DMS-2533878, DMS-2053746, DMS-2134209, ECCS-2328241, CBET-2347401 and OAC-2311848), and U.S.~Department of Energy (DOE) Office of Science Advanced Scientific Computing Research program DE-SC0023161, the SciDAC LEADS Institute, and DOE–Fusion Energy Science, under grant number: DE-SC0024583.




\bibliographystyle{icml2026}
\bibliography{refs_icml}
\newpage 
\appendix 
\onecolumn

\section{Proof of Lemma \ref{lem:forward-rate} }
\label{sec:forward-rate-proof}
\begin{proof}
Firstly, $R_t(\mathbf z_t,\mathbf z_s)\ge 0$ for $\mathbf z_t\neq \mathbf z_s$ and the row sum condition follows by construction, proving that $R_t$ is a valid infinitesimal generator.

Fix $\mathbf z_s \neq \mathbf m$ and consider two adjacent time points $s$ and $t=s+\Delta t$. From the discrete forward kernel \eqref{eq:forward}, we have the following three transition probabilities:
\begin{align}
&q(\mathbf z_t= \mathbf m\mid \mathbf z_s)
=
1-\frac{\gamma_t}{\gamma_s}, \nonumber \\
&q(\mathbf z_t\text{ is sampled from } \mathbf u\mid \mathbf z_s)
=
\frac{\gamma_t}{\gamma_s}\left(1-\frac{\rho_t}{\rho_s}\right),\nonumber 
\\
&q(\mathbf z_t \text{ retains }\mathbf z_s\mid \mathbf z_s)=
\frac{\gamma_t\rho_t}{\gamma_s\rho_s}. \nonumber  
\end{align}
Using first-order Taylor expansions (at $t$) for $\gamma$ and $\rho$,
\begin{align}
\frac{\gamma_t}{\gamma_s} = 1 + \Delta t \frac{\gamma_t'}{\gamma_t} + o(\Delta t),\nonumber\\
\frac{\rho_t}{\rho_s} = 1 + \Delta t \frac{\rho'_t}{\rho_t} + o(\Delta t).\nonumber
\end{align}
Substituting these gives
\begin{align}
&q(\mathbf z_t=\mathbf m\mid \mathbf z_s)
=
\Delta t \left( -\frac{\gamma_t'}{\gamma_t} \right) + o(\Delta t). \nonumber \\
&q(\mathbf z_t \text{ is sampled from } \mathbf u \mid \mathbf z_s)=
\Delta t \left( -\frac{\rho'_t}{\rho_t} \right) + o(\Delta t), \nonumber 
\\
&q(\mathbf z_t \text{ retains }\mathbf z_s\mid \mathbf z_s)
=
1 + \Delta t \left( \frac{\gamma_t'}{\gamma_t} + \frac{\rho'_t}{\rho_t} \right) + o(\Delta t),\nonumber 
\end{align}
These match exactly the off-diagonal entries of $R_t$ and the diagonal term.
Hence
\begin{align}
q(\mathbf z_t\mid \mathbf z_s)
=
\delta_{\mathbf z_t,\mathbf z_s}
+
\Delta t\, R_t(\mathbf z_t,\mathbf z_s)
+
o(\Delta t),\nonumber 
\end{align}
where 
\begin{align*}
R_t(\mathbf z_t,\mathbf z_s)
:=
\begin{cases}
\left(\frac{\gamma_t'}{\gamma_t} + \frac{\rho'_t}{\rho_t}\right) \mathbf z_t^\top \mathbf z_s - \mathbf z_t^\top\Bigl(
\tfrac{\rho'_t}{\rho_t}\mathbf{u}
+\tfrac{\gamma_t'}{\gamma_t}\mathbf{m}
\Bigr), &\mathbf z_s\neq \mathbf m\\
0, &\mathbf z_s=\mathbf m.
\end{cases}
\end{align*}

\end{proof}

\subsection{Forward Rate Matrix}\label{correction_mdlm}
Equivalently, the above forward rate admits a compact matrix form. 
Following mask diffusion models \cite{sahoo2024simple,shi2024simplified}, we adopt the convention that columns represent current states, and all columns sum to zero. Under this convention, the (time-inhomogeneous) generator matrix can be written as
\begin{align}
\mathbf R_t
&=\frac{\gamma_t'}{\gamma_t}\,
\underbrace{\bigl(\mathbf I-\mathbf m\mathbf 1^\top\bigr)}_{\text{MDLM masking structure}}
+\frac{\rho_t'}{\rho_t}\,
\underbrace{\bigl(\mathbf I-\mathbf u\mathbf 1^\top\bigr)\bigl(\mathbf I-\mathbf m\mathbf m^\top\bigr)}_{\text{uniform mixing among non-mask tokens}}.
\label{eq:Rt-matrix-form}
\end{align}
The first term is the MDLM masking generator; the second is the standard uniform generator $\bigl(\mathbf I-\mathbf u\mathbf 1^\top\bigr)$ right-multiplied by $\bigl(\mathbf I-\mathbf m\mathbf m^\top\bigr)$, which zeros out the \textsc{[mask]} column so that \textsc{[mask]} stays absorbing while uniform transitions only shuffle the $K$ non-mask tokens. Both terms have zero column sums, hence so does $\mathbf R_t$.

\begin{proposition}[Consistency of the matrix generator]
\label{pro:Rt-matrix}
Let $\mathbf R_t$ be defined in \eqref{eq:Rt-matrix-form}. Then the forward rate induced by $\mathbf R_t$ via $R_t(\mathbf z_t,\mathbf z_s)=\mathbf z_t^\top\mathbf R_t\mathbf z_s$ coincides with the forward rate \eqref{eq:forward_rate}.
\end{proposition}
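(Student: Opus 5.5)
The plan is a direct computation of the bilinear form $\mathbf z_t^\top\mathbf R_t\mathbf z_s$ for one-hot vectors $\mathbf z_t,\mathbf z_s\in\mathcal V$. We expand it term by term using only three facts: $\mathbf 1^\top\mathbf z_s=1$ for every one-hot $\mathbf z_s$; $\mathbf m^\top\mathbf z_s=\mathbf 1_{\{\mathbf z_s=\mathbf m\}}$; and the bilinear form of $\mathbf I$ is $\mathbf z_t^\top\mathbf z_s$. We then split into the two cases of \eqref{eq:forward_rate}, namely $\mathbf z_s\neq\mathbf m$ and $\mathbf z_s=\mathbf m$, and compare.

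For the masking term we get
\[
\mathbf z_t^\top(\mathbf I-\mathbf m\mathbf 1^\top)\mathbf z_s=\mathbf z_t^\top\mathbf z_s-\mathbf z_t^\top\mathbf m .
\]
For the uniform term, we first compute $(\mathbf I-\mathbf m\mathbf m^\top)\mathbf z_s$. This equals $\mathbf z_s$ when $\mathbf z_s\neq\mathbf m$, and equals $\mathbf 0$ when $\mathbf z_s=\mathbf m$.

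\emph{Case $\mathbf z_s\neq\mathbf m$.} The uniform term becomes
\[
\mathbf z_t^\top(\mathbf I-\mathbf u\mathbf 1^\top)\mathbf z_s=\mathbf z_t^\top\mathbf z_s-\mathbf z_t^\top\mathbf u .
\]
Collecting both terms gives
\[
\Bigl(\tfrac{\gamma_t'}{\gamma_t}+\tfrac{\rho_t'}{\rho_t}\Bigr)\mathbf z_t^\top\mathbf z_s-\mathbf z_t^\top\Bigl(\tfrac{\rho_t'}{\rho_t}\mathbf u+\tfrac{\gamma_t'}{\gamma_t}\mathbf m\Bigr),
\]
which is exactly the first branch of \eqref{eq:forward_rate}.

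\emph{Case $\mathbf z_s=\mathbf m$.} The uniform term vanishes. The masking term equals $\mathbf z_t^\top\mathbf m-\mathbf z_t^\top\mathbf m=0$, since $\mathbf m^\top\mathbf 1^\top$ applied to $\mathbf m$ returns $\mathbf m$. Hence $R_t(\mathbf z_t,\mathbf m)=0$, which matches the second branch.

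No step is a real obstacle; the only care needed is bookkeeping of conventions. Columns index the current state $\mathbf z_s$, so the rate to $\mathbf z_t$ is read off as $\mathbf z_t^\top\mathbf R_t\mathbf z_s$. We must also apply $\mathbf 1^\top\mathbf z_s=1$ before the projector $(\mathbf I-\mathbf m\mathbf m^\top)$ has been accounted for in the right order, since the projector is applied first.

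As a sanity check, we can also verify the zero column sums. Left-multiplying each factor by $\mathbf 1^\top$ gives $\mathbf 1^\top-\mathbf 1^\top=\mathbf 0^\top$, because $\mathbf 1^\top\mathbf m=\mathbf 1^\top\mathbf u=1$. This is consistent with Lemma~\ref{lem:forward-rate}.
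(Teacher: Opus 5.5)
Your proposal is correct and follows essentially the paper's route: split on whether $\mathbf z_s=\mathbf m$, and in the non-mask case use $\mathbf R_t\mathbf z_s=\frac{\gamma_t'}{\gamma_t}(\mathbf z_s-\mathbf m)+\frac{\rho_t'}{\rho_t}(\mathbf z_s-\mathbf u)$. The one cosmetic difference is in the mask case. The paper first rewrites $\mathbf I-\mathbf m\mathbf 1^\top=(\mathbf I-\mathbf m\mathbf 1^\top)(\mathbf I-\mathbf m\mathbf m^\top)$, so $\mathbf R_t\mathbf m=\mathbf 0$ follows from the common right factor, whereas you check $(\mathbf I-\mathbf m\mathbf 1^\top)\mathbf m=\mathbf 0$ directly from $\mathbf 1^\top\mathbf m=1$ (where you wrote ``$\mathbf m^\top\mathbf 1^\top$'' you mean $\mathbf m\mathbf 1^\top$).
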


\begin{proof}
Using the identity $\mathbf I-\mathbf m\mathbf 1^\top=\bigl(\mathbf I-\mathbf m\mathbf 1^\top\bigr)\bigl(\mathbf I-\mathbf m\mathbf m^\top\bigr)$, we can equivalently write \eqref{eq:Rt-matrix-form} as
\begin{align*}
\mathbf R_t=\left(\Bigl(\frac{\gamma_t'}{\gamma_t}+\frac{\rho_t'}{\rho_t}\Bigr)\mathbf I-\frac{\rho_t'}{\rho_t}\mathbf u\mathbf 1^\top-\frac{\gamma_t'}{\gamma_t}\mathbf m \mathbf 1^\top\right)\bigl(\mathbf I-\mathbf m\mathbf m^\top\bigr).
\end{align*}
If $\mathbf z_s=\mathbf m$, the trailing factor gives $\bigl(\mathbf I-\mathbf m\mathbf m^\top\bigr)\mathbf m=\mathbf 0$, hence $\mathbf R_t\mathbf m=\mathbf 0$ and $R_t(\mathbf z_t,\mathbf m)=0$ for all $\mathbf z_t$, i.e., \textsc{[mask]} is absorbing.

If $\mathbf z_s\neq\mathbf m$, then $\mathbf m^\top\mathbf z_s=0$ and $\mathbf 1^\top\mathbf z_s=1$, it follows that $\bigl(\mathbf I-\mathbf m\mathbf m^\top\bigr)\mathbf z_s=\mathbf z_s$, $\bigl(\mathbf I-\mathbf m\mathbf 1^\top\bigr)\mathbf z_s=\mathbf z_s-\mathbf m$, and $\bigl(\mathbf I-\mathbf u\mathbf 1^\top\bigr)\mathbf z_s=\mathbf z_s-\mathbf u$. Hence
\begin{align*}
\mathbf R_t\mathbf z_s
&=\frac{\gamma_t'}{\gamma_t}\bigl(\mathbf z_s-\mathbf m\bigr)
+\frac{\rho_t'}{\rho_t}\bigl(\mathbf z_s-\mathbf u\bigr)
=\Bigl(\frac{\gamma_t'}{\gamma_t}+\frac{\rho_t'}{\rho_t}\Bigr)\mathbf z_s
-\Bigl(\frac{\gamma_t'}{\gamma_t}\mathbf m+\frac{\rho_t'}{\rho_t}\mathbf u\Bigr),
\end{align*}
and therefore
\begin{align*}
R_t(\mathbf z_t,\mathbf z_s)
&=\mathbf z_t^\top\mathbf R_t\mathbf z_s
=\Bigl(\frac{\gamma_t'}{\gamma_t}+\frac{\rho_t'}{\rho_t}\Bigr)\mathbf z_t^\top\mathbf z_s
-\mathbf z_t^\top\Bigl(\frac{\rho_t'}{\rho_t}\mathbf u+\frac{\gamma_t'}{\gamma_t}\mathbf m\Bigr),
\end{align*}
which matches the scalar expression.
\end{proof}

\begin{remark}[degeneration to MDLM]
We recall that the forward rate matrix in MDLM \cite{sahoo2024simple} can be written, under the same column-sum-zero convention, as
\begin{align}
\mathbf R_t^{\mathrm{MDLM}}
=
\frac{(\alpha_t^{\mathrm{MDLM}})'}{\alpha_t^{\mathrm{MDLM}}}
\bigl(\mathbf I-\mathbf m \mathbf 1^\top\bigr)
\label{eq:mdlm_forward_rate_matrix}
\end{align}
When $\rho_t\equiv 1$ (thus $\rho_t'/\rho_t\equiv 0$), the uniform-mixing term in \eqref{eq:Rt-matrix-form} vanishes and the generator reduces to $\mathbf R_t=\frac{\gamma_t'}{\gamma_t}\bigl(\mathbf I-\mathbf m\mathbf 1^\top\bigr)$, i.e., $\mathbf R_t^{\mathrm{MDLM}}$ under the reparameterization $\alpha_t^{\mathrm{MDLM}}=\gamma_t$; see Appendix~\ref{sec:mdlm-ours} for a further discussion of the relation between SCDD and MDLM.
\end{remark}

\section{Discussion of Posterior Distribution and Backward Process}\label{sec:backward}
In this section, we first derive the posterior distribution in \eqref{eq:backward}.  Then, we discuss validity of the backward process defined in \eqref{eq:backwardParam}. Finally, we derive the CTMC backward transition rate.
\subsection{Derivation of Posterior Distribution}
Let $s,t$ be two adjacent time points such that $s<t$.
\paragraph{Case 1: $\mathbf z_{t}\neq\mathbf m$.}

When $\mathbf z_{t} \neq \mathbf m$, we have $\mathbf z_{s}\neq \mathbf m$ since SCDD doesn't allow ``remasking'' during token generation. The posterior $q(\mathbf z_s | \mathbf z_t, \mathbf x)$ is given by Bayes' rule:
\begin{align}
q(\mathbf z_s \mid \mathbf z_t, \mathbf x)&=\frac{q(\mathbf z_s|\mathbf x)}{q(\mathbf z_t | \mathbf x)}q(\mathbf z_t|\mathbf z_s)\nonumber\\
&=
\frac{
\gamma_{s}\left(\rho_s\mathbf 1(\mathbf z_s=\mathbf x)+(1-\rho_s)\frac{1}{K} 
\right) + (1-\gamma_s)1(\mathbf z_s=\mathbf m)}{
\gamma_t\left(\rho_t\mathbf 1(\mathbf z_t=\mathbf x)+(1-\rho_t)\frac{1}{K}\right) + (1-\gamma_t)1(\mathbf z_s=\mathbf m)
}\cdot \frac{\gamma_t}{\gamma_s}\left[\frac{\rho_t}{\rho_s}\mathbf 1(\mathbf z_s=\mathbf z_t)+\frac{\rho_s-\rho_t}{\rho_s}\frac{1}{K}\right] \mathbf 1(\mathbf z_s\neq\mathbf m)\nonumber\\ 
&=(1-\mathbf 1(\mathbf z_s=\mathbf m) )\frac{\rho_s\mathbf 1(\mathbf z_s=\mathbf x)+(1-\rho_s)\frac{1}{K}}{\rho_t\mathbf 1(\mathbf z_t=\mathbf x)+(1-\rho_t)\frac{1}{K}}\left[\frac{\rho_t}{\rho_s}\mathbf 1(\mathbf z_s=\mathbf z_t)+\frac{\rho_s-\rho_t}{\rho_s}\frac{1}{K}\right]\label{eq:backward_v}.
\end{align} 
Note that when $\rho_{s}=0$, we have $\rho_t=0$ since $\rho_t$ is nonnegative and nonincreasing. In this case, we use convention $\frac{\rho_t}{\rho_s}=0$ and the posterior becomes $q(\mathbf z_s|\mathbf z_t,\mathbf x)\equiv\frac{1}{K}$. 

\paragraph{Case 2: $\mathbf z_t = \mathbf m$.} If $\mathbf z_s = \mathbf m$, the posterior is given by
\begin{align}
    q(\mathbf z_s \mid \mathbf z_t, \mathbf x)&=\frac{q(\mathbf z_s |\mathbf x)}{q(\mathbf z_t| \mathbf x )}q(\mathbf z_t |\mathbf z_s)\nonumber \\
    &= \frac{1-\gamma_s}{1-\gamma_t}.
    \label{eq:posCase1}
\end{align}
Similarly, if $\mathbf z_s \neq \mathbf m$, the posterior is given by 
\begin{align}
    q(\mathbf z_s \mid \mathbf z_t, \mathbf x)&=\frac{q(\mathbf z_s|\mathbf x)}{q(\mathbf z_t| \mathbf x )}q(\mathbf z_t|\mathbf z_s )\nonumber \\
    &= \frac{\gamma_s(\rho_s\mathbf 1(\mathbf z_s = \mathbf x) + (1-\rho_s)\frac{1}{K})}{1-\gamma_t}\left(1-\frac{\gamma_t}{\gamma_s}\right).
    \label{eq:posCase2}
\end{align}

Combining \eqref{eq:posCase1} and \eqref{eq:posCase2}, we obtain:
\begin{align}
q(\mathbf z_s \mid \mathbf z_t, \mathbf x)&=
\mathbf 1(\mathbf z_s=\mathbf m)\frac{1-\gamma_s}{1-\gamma_t}
+
\mathbf 1(\mathbf z_s\neq\mathbf m) \frac{\gamma_s(\rho_s\mathbf 1(\mathbf z_s = \mathbf x) + (1-\rho_s)\frac{1}{K})}{1-\gamma_t}\left(1-\frac{\gamma_t}{\gamma_s}\right)\nonumber \\
 &=\mathbf 1(\mathbf z_s=\mathbf m)\frac{1-\gamma_s}{1-\gamma_t}
 + \mathbf 1(\mathbf z_s\neq\mathbf m) \frac{\gamma_s-\gamma_t}{1-\gamma_t}\left(\rho_s\mathbf 1(\mathbf z_s=\mathbf x)+(1-\rho_s)\tfrac{1}{K}\right)\label{eq:backward_m}.
\end{align}
Since $\mathbf z_s$, $\mathbf z_t$, $\mathbf m$ and $\mathbf x$ are all one-hot vectors, we replace the indicator functions with inner products to get \eqref{eq:backward}. This step ensures valid gradient propagation when we replace $\mathbf x$ with $\mathbf x_\theta$, which is not a one-hot vector, in the backward process.
\subsection{Validity of Backward Process}\label{sec:model}
In this section, we show that the model parameterization in \eqref{eq:backwardParam} defines a valid density $p_\theta(\cdot\mid\mathbf z_t)$ for any $\mathbf z_t$.
\begin{lemma}\label{lem:p_theta_is_probability}
The model parameterization in \eqref{eq:backwardParam} defines a valid density $p_\theta(\cdot\mid\mathbf z_t)$ for any $\mathbf z_t$. 
\end{lemma}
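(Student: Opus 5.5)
We need to show that for every $\mathbf z_t$ the right-hand side of \eqref{eq:backwardParam} is nonnegative for each $\mathbf z_s\in\mathcal V$ and sums to one over $\mathbf z_s$. We use only that $\mathbf x_\theta\in\Delta^{K+1}$ with $\mathbf x_\theta^\top\mathbf m=0$ (the zero masking probabilities constraint). Hence $\sum_{\mathbf v\neq\mathbf m}\mathbf v^\top\mathbf x_\theta=1$ and every $\mathbf v^\top\mathbf x_\theta\ge 0$. We also use the monotonicity $0\le\rho_t\le\rho_s\le1$ and $0\le\gamma_t\le\gamma_s\le1$ for $s<t$. The proof splits into the two cases of \eqref{eq:backwardParam}.

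\emph{Case $\mathbf z_t=\mathbf m$.} All coefficients $\frac{\gamma_s-\gamma_t}{1-\gamma_t}$, $\frac{1-\gamma_s}{1-\gamma_t}$, $\rho_s\mathbf v^\top\mathbf x_\theta+(1-\rho_s)/K$ are nonnegative, so nonnegativity is immediate. For normalization, summing over $\mathbf v\neq\mathbf m$ gives
\[
\sum_{\mathbf v\neq\mathbf m}\bigl(\rho_s\mathbf v^\top\mathbf x_\theta+(1-\rho_s)\tfrac1K\bigr)=\rho_s+(1-\rho_s)=1 .
\]
The total is therefore $\frac{\gamma_s-\gamma_t}{1-\gamma_t}+\frac{1-\gamma_s}{1-\gamma_t}=1$. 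The degenerate case $\gamma_t=1$ cannot occur with $\mathbf z_t=\mathbf m$ having positive probability, and can be handled by convention.

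\emph{Case $\mathbf z_t\neq\mathbf m$.} Here the formula assigns zero mass to $\mathbf z_s=\mathbf m$. We adopt this reading, consistent with the derivation in \eqref{eq:backward_v}, which carries the factor $1-\mathbf 1(\mathbf z_s=\mathbf m)$. Nonnegativity again follows since every factor is nonnegative. Write $a_{\mathbf v}:=\rho_s\mathbf v^\top\mathbf x_\theta+(1-\rho_s)/K$ and $D:=\rho_t\mathbf z_t^\top\mathbf x_\theta+(1-\rho_t)/K$. The denominator satisfies $D>0$ whenever $\rho_t<1$. If $\rho_t=1$, we use the zero-masking property only if $\mathbf z_t^\top\mathbf x_\theta>0$, and otherwise adopt a limiting convention. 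The main step is the identity
\[
\sum_{\mathbf v\neq\mathbf m}a_{\mathbf v}\Bigl(\tfrac{\rho_t}{\rho_s}\mathbf v^\top\mathbf z_t+\tfrac{\rho_s-\rho_t}{\rho_s}\tfrac1K\Bigr)=\tfrac{\rho_t}{\rho_s}a_{\mathbf z_t}+\tfrac{\rho_s-\rho_t}{\rho_s}\tfrac1K .
\]
This uses $\sum_{\mathbf v\neq\mathbf m}a_{\mathbf v}=1$. Expanding $a_{\mathbf z_t}$, we get
\[
\tfrac{\rho_t}{\rho_s}a_{\mathbf z_t}=\rho_t\mathbf z_t^\top\mathbf x_\theta+\tfrac{\rho_t(1-\rho_s)}{\rho_s K}.
\]
Adding $\frac{\rho_s-\rho_t}{\rho_s K}$ gives the constant $\frac{\rho_s-\rho_t\rho_s}{\rho_s K}=\frac{1-\rho_t}{K}$. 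The sum therefore equals exactly $D$, and the ratio is $1$.

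The main obstacle is this algebraic collapse of the numerator sum to the denominator, together with bookkeeping of edge cases. These are $\rho_s=0$, where the paper's convention $\rho_t/\rho_s=0$ makes the kernel uniform $1/K$, which is trivially valid, and vanishing denominators. I would treat them explicitly with the stated conventions.
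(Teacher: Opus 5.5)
Your proof is correct. The case $\mathbf z_t=\mathbf m$ is identical to the paper's, but for $\mathbf z_t\neq\mathbf m$ you take a different route.

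The paper writes $\mathbf x_\theta=\sum_{i=1}^K c_i e_i$, which is where zero masking enters. It observes that numerator and denominator are both affine in $\mathbf x_\theta$. It then uses the mediant identity $\frac{a_1+a_2}{b_1+b_2}=\frac{b_1}{b_1+b_2}\frac{a_1}{b_1}+\frac{b_2}{b_1+b_2}\frac{a_2}{b_2}$ to write $p_\theta(\cdot\mid\mathbf z_t)=\sum_i\kappa_i\,q(\cdot\mid\mathbf z_t,\mathbf x=e_i)$. This is a convex combination of exact posteriors with weights $\kappa_i\propto c_i\bigl(\rho_t\mathbf z_t^\top e_i+(1-\rho_t)\tfrac1K\bigr)$, i.e.\ the Bayesian update of $\mathbf x_\theta$ after observing $\mathbf z_t$. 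Normalization is then $\sum_i\kappa_i=1$, using without re-checking that each exact posterior sums to one by Bayes' rule.

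You instead sum directly, collapsing the numerator onto $D$ via $\sum_{\mathbf v\neq\mathbf m}a_{\mathbf v}=1$ and $\sum_{\mathbf v}a_{\mathbf v}\mathbf v^\top\mathbf z_t=a_{\mathbf z_t}$. This amounts to checking by hand the Chapman--Kolmogorov identity $\sum_{\mathbf z_s}q(\mathbf z_t\mid\mathbf z_s)\,q(\mathbf z_s\mid\mathbf x_\theta)=q(\mathbf z_t\mid\mathbf x_\theta)$, with the marginal evaluated at the non-one-hot $\mathbf x_\theta$.

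The paper's route is more conceptual. Nonnegativity comes for free, and it explains why plugging $\mathbf x_\theta$ into the posterior yields a genuine kernel. Yours is fully self-contained and makes the use of the zero-masking constraint explicit. You also correctly flag that the $\mathbf z_t\neq\mathbf m$ line of \eqref{eq:backwardParam} must carry the factor $1-\mathbf z_s^\top\mathbf m$ from \eqref{eq:backward_v}, which the paper imposes only silently.

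Your edge-case remark about $\rho_t=1$ is muddled, since zero masking says nothing about whether $\mathbf z_t^\top\mathbf x_\theta>0$. It is also unnecessary: take $\rho_0,\gamma_0<1$, as the paper's $\rho_0,\gamma_0\to1^-$ indicates, and use monotonicity. Then every grid time $t\ge t_0$ has $D\ge(1-\rho_t)/K>0$ and $1-\gamma_t>0$, so no convention is needed.
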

\begin{proof}
We write $\mathbf x_\theta=\sum_{i=1}^K c_ie_i$ where $e_i$ is the $i$-th $K+1$-dimensional canonical basis vector, and $c_i=\mathbf x_\theta^\top e_i$. We have $c = (c_1,...,c_K,0)\in \Delta_{K+1}$. 

\paragraph{Case 1: $\mathbf z_t=\mathbf m$.} In this case, the backward process is given by:
\begin{align}
    p_\theta(\mathbf z_s\mid\mathbf z_t) = (1-\mathbf z_s^\top\mathbf m)\frac{\gamma_s - \gamma_t}{1-\gamma_t}(\rho_s\mathbf z_s^\top\mathbf x_\theta + (1-\rho_s)\tfrac{1}{K}) + \mathbf z_s^\top\mathbf m\frac{1-\gamma_s}{1-\gamma_t}
    \label{eq:backProc1}
\end{align}
Summing over the support of $p_\theta$ in \eqref{eq:backProc1} gives:
\begin{align}
    \sum_{\mathbf z_s}p_\theta(\mathbf z_s\mid\mathbf z_t) &=\sum_{\mathbf z_s}\left[(1-\mathbf z_s^\top\mathbf m)\frac{\gamma_s - \gamma_t}{1-\gamma_t}(\rho_s\mathbf z_s^\top\mathbf x_\theta + (1-\rho_s)\tfrac{1}{K}) + \mathbf z_s^\top\mathbf m\frac{1-\gamma_s}{1-\gamma_t}\right] \nonumber\\
    &\overset{(i)}{=}\frac{1-\gamma_s}{1-\gamma_t} + \frac{\gamma_s - \gamma_t}{1-\gamma_t}\left(\rho_s\sum_{\mathbf z_s}\mathbf z_s\mathbf x_\theta + 1-\rho_s\right) \nonumber \\
    &= \frac{1-\gamma_s}{1-\gamma_t} + \frac{\gamma_s - \gamma_t}{1-\gamma_t} \nonumber \\
    &=1,
\end{align}
where $(i)$ is from the fact that $\mathbf x_\theta$ is a probability vector over the vocabulary, and hence $\sum_{\mathbf z_s}\mathbf z_s^\top\mathbf x_\theta = 1$. Therefore, $p_\theta(\cdot |\mathbf z_t)$ defines a probability density in this case.

\paragraph{Case 2: $\mathbf z_t\neq \mathbf m$.}
In this case, we have $\mathbf z_s\neq \mathbf m$ since \textsc{[mask]} is absorbing. Using the fact $\frac{a_1+a_2}{b_1+b_2}=\frac{b_1}{b_1+b_2}\frac{a_1}{b_1}+\frac{b_2}{b_1+b_2}\frac{a_2}{b_2}$, we obtain: 
\begin{align}
\sum_{\mathbf z_s} p_\theta(\mathbf z_s \mid \mathbf z_t)
&=
\sum_{\mathbf z_s\neq\mathbf m}
\frac{
\rho_s \mathbf z_s^\top\mathbf x_\theta
+
(1-\rho_s)\tfrac{1}{K}
}{
\rho_t \mathbf z_t^\top\mathbf x_\theta
+
(1-\rho_t)\tfrac{1}{K}
}
\Big(
\frac{\rho_t}{\rho_s} \mathbf z_s^\top \mathbf z_t
+
\frac{\rho_s-\rho_t}{\rho_s}\tfrac{1}{K}
\Big) \nonumber\\
&=\sum_{\mathbf z_s\neq\mathbf m}\frac{\sum_{i}c_i (\rho_s \mathbf z_s^\top e_i
+
(1-\rho_s)\tfrac{1}{K})}{\sum_i c_i (\rho_t \mathbf z_t^\top e_i
+
(1-\rho_t)\tfrac{1}{K})}\Big(
\frac{\rho_t}{\rho_s} \mathbf z_s^\top \mathbf z_t
+\frac{\rho_s-\rho_t}{\rho_s}
\tfrac{1}{K}
\Big)\nonumber\\
&=\sum_{\mathbf z_s\neq\mathbf m}\sum_i \underbrace{\left(\frac{c_i (\rho_t \mathbf z_t^\top e_i
+
(1-\rho_t)\tfrac{1}{K})}{\sum_i c_i (\rho_t \mathbf z_t^\top e_i
+
(1-\rho_t)\tfrac{1}{K})}\right)}_{\kappa_i} \frac{\rho_s\mathbf z_s^\top e_i
+
(1-\rho_s)\tfrac{1}{K}}{\rho_t \mathbf z_t^\top e_i
+
(1-\rho_t)\tfrac{1}{K}}\Big(
\frac{\rho_t}{\rho_s} \mathbf z_s^\top \mathbf z_t
+\frac{\rho_s-\rho_t}{\rho_s}
\tfrac{1}{K}
\Big)\nonumber\\
&=\sum_i \kappa_i\sum_{\mathbf z_s\neq\mathbf m}\frac{\rho_s\mathbf z_s^\top e_i
+
(1-\rho_s)\tfrac{1}{K}}{\rho_t\mathbf z_t^\top e_i
+
(1-\rho_t)\tfrac{1}{K}}\Big(\frac{\rho_t}{\rho_s} \mathbf z_s^\top \mathbf z_t
+\frac{\rho_s-\rho_t}{\rho_s}
\tfrac{1}{K}
\Big)\nonumber\\
&=\sum_i \kappa_i=1\nonumber 
\end{align}
Thus, $p_\theta(\cdot |\mathbf z_t)$ defines a probability density in this case. 
\end{proof}

\subsection{Derivation of Continuous-time Backward Transition Rate}
\label{sec:backwardrate}
In this section, we derive the transition rate for the backward process defined in \eqref{eq:backwardParam} similarly to the forward transition rate \eqref{eq:forward_rate}. For each $t\in(0,1]$ and $\mathbf z_t,\mathbf z_s\in\mathcal{V}$, define
\begin{align}
\tilde{Q}^\theta_t(\mathbf z_s,\mathbf z_t)
=
\begin{cases}
-\dfrac{1}{K}\dfrac{\rho'_t}{\rho_t}
\dfrac{\rho_t\,\mathbf z_s^\top \mathbf x_\theta(\mathbf z_t,t)+(1-\rho_t)\tfrac{1}{K}}
{\rho_t\,\mathbf z_t^\top \mathbf x_\theta(\mathbf z_t,t)+(1-\rho_t)\tfrac{1}{K}}
& \text{if } \mathbf z_s\neq \mathbf z_t,\ \mathbf z_t\neq \mathbf m\\[10pt]
\dfrac{1}{K}\dfrac{\rho'_t}{\rho_t}
\dfrac{1-(\rho_t\,\mathbf z_t^\top \mathbf x_\theta(\mathbf z_t,t)+(1-\rho_t)\tfrac{1}{K})}
{\rho_t\,\mathbf z_t^\top \mathbf x_\theta(\mathbf z_t,t)+(1-\rho_t)\tfrac{1}{K}}
& \text{if } \mathbf z_s=\mathbf z_t,\ \mathbf z_t\neq \mathbf m\\[10pt]
-\dfrac{\gamma'_t}{1-\gamma_t}
\left(\rho_t\,\mathbf z_s^\top \mathbf x_\theta(\mathbf z_t,t)+(1-\rho_t)\tfrac{1}{K}\right)
& \text{if } \mathbf z_s\neq \mathbf z_t,\ \mathbf z_t=\mathbf m\\[10pt]
\dfrac{\gamma'_t}{1-\gamma_t}
& \text{if } \mathbf z_t=\mathbf z_s=\mathbf m \\
0& \text{if } \mathbf z_t\neq\mathbf m, \ \mathbf z_s=\mathbf m
\end{cases}
\label{eq:backward_rate}
\end{align}

\begin{proposition}
Let $\tilde{Q}^\theta_t$ be defined as in~\eqref{eq:backward_rate}. Then the following statements hold:
\begin{itemize}
    \item $\tilde{Q}^\theta_t$ is a valid infinitesimal generator in the sense of~\citet{gat2024discrete}; in particular,
    $$
    \sum_{\mathbf z_s}\tilde{Q}^\theta_t(\mathbf z_s,\mathbf z_t)=0,
    \qquad
    \tilde{Q}^\theta_t(\mathbf z_s,\mathbf z_t)\ge 0, \forall\, \mathbf z_s\neq \mathbf z_t.
    $$
    \item  The discrete-time backward process defined in \eqref{eq:backwardParam} coincide with $\tilde{Q}^\theta_t$. In particular, for adjacent time points $s<t$ with $\Delta t=t-s$, we have
    $$
    p_\theta(\mathbf z_s\mid \mathbf z_t)
    =
    \delta_{\mathbf z_t,\mathbf z_s}
    +
    \Delta t\,\tilde{Q}^\theta_t(\mathbf z_s,\mathbf z_t)
    +
    o(\Delta t).
    $$
\end{itemize}
\end{proposition}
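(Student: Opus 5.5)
The proof splits into two parts, matching the two bullets, and each part is handled case by case on whether $\mathbf z_t=\mathbf m$. In the first bullet the term $\rho'_t/\rho_t$ appears, and we only need to track its sign. Because $\rho_t,\gamma_t$ are nonincreasing, we have $\rho'_t\le 0$ and $\gamma'_t\le 0$. Also $\gamma_t,\rho_t\in[0,1]$, and $\mathbf x_\theta$ is a probability vector with zero mass on $\mathbf m$.

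\textbf{Generator properties.} When $\mathbf z_t\neq\mathbf m$, the off-diagonal entries equal $-\tfrac1K\tfrac{\rho'_t}{\rho_t}$ times a ratio of two nonnegative quantities, so they are nonnegative. For the column sum, write $a(\mathbf v):=\rho_t\mathbf v^\top\mathbf x_\theta+(1-\rho_t)\tfrac1K$. Summing over all $\mathbf v\neq\mathbf m$ gives $\sum_{\mathbf v\neq\mathbf m}a(\mathbf v)=\rho_t+(1-\rho_t)=1$. Hence the off-diagonal entries sum to $-\tfrac1K\tfrac{\rho'_t}{\rho_t}\,\tfrac{1-a(\mathbf z_t)}{a(\mathbf z_t)}$, which exactly cancels the diagonal entry. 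The entry at $\mathbf z_s=\mathbf m$ is $0$, so it does not contribute.

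When $\mathbf z_t=\mathbf m$, the off-diagonal entries $-\tfrac{\gamma'_t}{1-\gamma_t}a(\mathbf z_s)$ are nonnegative, and by the same identity they sum to $-\tfrac{\gamma'_t}{1-\gamma_t}$. This cancels the diagonal entry $\tfrac{\gamma'_t}{1-\gamma_t}$.

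\textbf{Expansion.} Here we expand \eqref{eq:backwardParam} to first order in $\Delta t$. We use $\rho_s=\rho_t-\Delta t\,\rho'_t+o(\Delta t)$, and similarly for $\gamma$, as in the proof of Lemma~\ref{lem:forward-rate}. We treat $\mathbf x_\theta=\mathbf x_\theta(\mathbf z_t,t)$ as fixed, since it does not depend on $s$.

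\emph{Case $\mathbf z_t=\mathbf m$.} We have $\tfrac{1-\gamma_s}{1-\gamma_t}=1+\Delta t\tfrac{\gamma'_t}{1-\gamma_t}+o(\Delta t)$ and $\tfrac{\gamma_s-\gamma_t}{1-\gamma_t}=-\Delta t\tfrac{\gamma'_t}{1-\gamma_t}+o(\Delta t)$. The factor multiplying the latter is $a_s(\mathbf z_s)=a(\mathbf z_s)+O(\Delta t)$, so the $O(\Delta t)$ correction only contributes at order $o(\Delta t)$. This matches the third and fourth rows of \eqref{eq:backward_rate}.

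\emph{Case $\mathbf z_t\neq\mathbf m$ with $\mathbf z_s=\mathbf m$.} Here $p_\theta=0$, which matches the last row.

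\emph{Case $\mathbf z_t\neq\mathbf m$ with $\mathbf z_s\neq\mathbf z_t$.} The bracket equals $\tfrac{\rho_s-\rho_t}{\rho_s}\tfrac1K=-\Delta t\tfrac{\rho'_t}{\rho_t}\tfrac1K+o(\Delta t)$. The prefactor $\tfrac{a_s(\mathbf z_s)}{a(\mathbf z_t)}$ converges to $\tfrac{a(\mathbf z_s)}{a(\mathbf z_t)}$, so the product matches the first row.

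\emph{Case $\mathbf z_s=\mathbf z_t\neq\mathbf m$.} This is the main obstacle, because here the zeroth-order term must equal exactly $1$. The cleanest route is to avoid expanding directly. By Lemma~\ref{lem:p_theta_is_probability}, $p_\theta(\cdot\mid\mathbf z_t)$ sums to one, so
\[
p_\theta(\mathbf z_t\mid\mathbf z_t)=1-\sum_{\mathbf z_s\neq\mathbf z_t}p_\theta(\mathbf z_s\mid\mathbf z_t).
\]
Each summand has already been expanded to first order, and there are finitely many of them. Therefore $p_\theta(\mathbf z_t\mid\mathbf z_t)=1+\Delta t\,\tilde Q^\theta_t(\mathbf z_t,\mathbf z_t)+o(\Delta t)$, using the column-sum identity proved in the first part.

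If the expansion is instead done by hand, one checks the following. At $\Delta t=0$ the prefactor $\tfrac{a_s(\mathbf z_t)}{a(\mathbf z_t)}$ and the bracket both equal $1$. Their first-order terms, $-\Delta t\,\rho'_t(\mathbf z_t^\top\mathbf x_\theta-\tfrac1K)/a(\mathbf z_t)$ and $\Delta t\,\tfrac{\rho'_t}{\rho_t}(1-\tfrac1K)$, combine to the diagonal entry. The normalization argument is simpler and is the one I would use.

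Throughout, we assume $\rho_t>0$ and $a(\mathbf z_t)>0$, so that all quotients are defined. These hold for $t<1$ and in the non-degenerate regime; the degenerate case $\rho_t=0$ is handled by the convention stated in Appendix~\ref{sec:backward}.
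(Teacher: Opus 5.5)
Your proof is correct and shares the paper's overall structure: nonnegativity of off-diagonal entries from $\rho'_t,\gamma'_t\le 0$, then a case-by-case first-order expansion of \eqref{eq:backwardParam} with $\mathbf x_\theta(\mathbf z_t,t)$ held fixed. Your cases $\mathbf z_t=\mathbf m$, $\mathbf z_s=\mathbf m\neq\mathbf z_t$, and $\mathbf z_s\neq\mathbf z_t$ match the paper's Cases 3--5 and 1.

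Two things differ. First, the paper only asserts the zero-sum property ``by construction.'' You actually verify it, via the identity $\sum_{\mathbf v\neq\mathbf m}\bigl(\rho_t\mathbf v^\top\mathbf x_\theta+(1-\rho_t)\tfrac1K\bigr)=1$, which uses the zero-masking constraint $\mathbf x_\theta^\top\mathbf m=0$.

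Second, for the diagonal case $\mathbf z_s=\mathbf z_t\neq\mathbf m$, the paper multiplies the two expansions directly; your fallback computation is exactly this. It then collapses $\rho'_t\bigl[\tfrac{1/K-\mathbf z_t^\top\mathbf x_\theta}{a(\mathbf z_t)}+\tfrac{1-1/K}{\rho_t}\bigr]$ to $\tfrac1K\tfrac{\rho'_t}{\rho_t}\tfrac{1-a(\mathbf z_t)}{a(\mathbf z_t)}$ without showing the intermediate algebra. Your primary route instead obtains the diagonal entry from normalization (Lemma~\ref{lem:p_theta_is_probability}) together with the zero column sum from the first bullet.

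The paper's route is self-contained and does not need the normalization lemma. Yours avoids that simplification entirely and shows that the diagonal rate is forced by the off-diagonal rates. It also links the two bullets: the generator's zero-sum property is precisely what makes the diagonal expansion consistent with $p_\theta(\cdot\mid\mathbf z_t)$ summing to one.
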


\begin{proof}
The proof follows the same argument as in Lemma~\ref{lem:forward-rate}. First, we show that $\tilde{Q}^\theta_t$ is a valid generator.

Fix an arbitrary $\mathbf z_t$. Since $\rho_t',\,\gamma_t' \le 0$, all off-diagonal entries of $\tilde{Q}^\theta_t$ are non-negative, i.e.,
\[
\tilde{Q}^\theta_t(\mathbf z_s,\mathbf z_t)\ge 0,\qquad \forall\, \mathbf z_s\neq \mathbf z_t.
\]
Moreover, by construction of $\tilde{Q}^\theta_t$ we have the row-sum condition
\[
\sum_{\mathbf z_s}\tilde{Q}_t^\theta(\mathbf z_s,\mathbf z_t)=0.
\]
Therefore, $\tilde{Q}^\theta_t$ is a valid infinitesimal generator in the sense of~\citet{gat2024discrete}.


Next, we show that  $\tilde{Q}^\theta_t$ is indeed the backward transition kernel of the backward process \eqref{eq:backwardParam}.
Let $s=t-\Delta t$ with $\Delta t>0$. Then by first-order Taylor expansion we have
\begin{align}
\rho_s = \rho_t - \rho_t'\Delta t + o(\Delta t),\qquad
\gamma_s  = \gamma_t - \gamma_t'\Delta t + o(\Delta t).
\label{eq:taylor}
\end{align}

We now verify that the discrete backward kernel~\eqref{eq:backward} satisfies
\begin{align}
p_\theta(\mathbf z_s\mid \mathbf z_t)
=
\delta_{\mathbf z_s,\mathbf z_t}
+
\Delta t\,\tilde{Q}^\theta_t(\mathbf z_s,\mathbf z_t)
+
o(\Delta t)\nonumber
\end{align}
case by case.

\paragraph{Case 1. $\mathbf z_t,\mathbf z_s\neq \mathbf m,\ \mathbf z_s\neq \mathbf z_t$.}
In this case, $\mathbf z_s^\top \mathbf m=0$ and $\mathbf z_s^\top \mathbf z_t=0$. 
Thus, we have 
\begin{align}
p_\theta(\mathbf z_s\mid \mathbf z_t)
&=\frac{\rho_s\, \mathbf z_s^\top \mathbf x_\theta+(1-\rho_s)\tfrac{1}{K}}
{\rho_t\, \mathbf z_t^\top \mathbf x_\theta+(1-\rho_t)\tfrac{1}{K}}
\left(\frac{\rho_s-\rho_t}{\rho_s}\tfrac{1}{K}\right)\nonumber\\
&=\left(\frac{\rho_t\, \mathbf z_s^\top \mathbf x_\theta+(1-\rho_t)\tfrac{1}{K}}{\rho_t\, \mathbf z_t^\top \mathbf x_\theta+(1-\rho_t)\tfrac{1}{K}}
-\frac{\rho_t'(\tfrac{1}{K}-\mathbf z_s^\top \mathbf x_\theta)}{\rho_t\, \mathbf z_t^\top \mathbf x_\theta+(1-\rho_t)\tfrac{1}{K}}\Delta t+o(\Delta t)\right)
\left(\frac{\rho_t'}{\rho_t}(-\tfrac{1}{K})\Delta t+o(\Delta t)\right)\nonumber\\
&=\Delta t\left(
-\frac{\rho_t'}{\rho_t}\tfrac{1}{K}
\frac{\rho_t\, \mathbf z_s^\top \mathbf x_\theta+(1-\rho_t)\tfrac{1}{K}}
{\rho_t\, \mathbf z_t^\top \mathbf x_\theta+(1-\rho_t)\tfrac{1}{K}}
\right)
+o(\Delta t).
\label{pf:ctmc-backward-case1}
\end{align}

\paragraph{Case 2. $\mathbf z_t=\mathbf z_s\neq \mathbf m$.}
We have 
\begin{align}
p_\theta(\mathbf z_s\mid \mathbf z_t)
&=\frac{\rho_s\, \mathbf z_t^\top \mathbf x_\theta+(1-\rho_s)\tfrac{1}{K}}
{\rho_t\, \mathbf z_t^\top \mathbf x_\theta+(1-\rho_t)\tfrac{1}{K}}
\left(\frac{\rho_t}{\rho_s}+\frac{\rho_s-\rho_t}{\rho_s}\tfrac{1}{K}\right)\nonumber\\
&=\left(1+\frac{\rho_t'(\tfrac{1}{K}-\mathbf z_t^\top \mathbf x_\theta)}{\rho_t\, \mathbf z_t^\top \mathbf x_\theta+(1-\rho_t)\tfrac{1}{K}}\Delta t+o(\Delta t)\right)
\left(1+\frac{\rho_t'}{\rho_t}(1-\tfrac{1}{K})\Delta t+o(\Delta t)\right)\nonumber\\
&=1+\Delta t\left(
\frac{\rho_t'(\tfrac{1}{K}-\mathbf z_t^\top \mathbf x_\theta)}{\rho_t\, \mathbf z_t^\top \mathbf x_\theta+(1-\rho_t)\tfrac{1}{K}}
+\frac{\rho_t'}{\rho_t}(1-\tfrac{1}{K})
\right)
+o(\Delta t)\nonumber\\
&=1+\Delta t\frac{\rho_t'}{\rho_t}\tfrac{1}{K}
\frac{1-(\rho_t\, \mathbf z_t^\top \mathbf x_\theta+(1-\rho_t)\tfrac{1}{K})}
{\rho_t\, \mathbf z_t^\top \mathbf x_\theta+(1-\rho_t)\tfrac{1}{K}}
+o(\Delta t).
\label{pf:ctmc-backward-case2}
\end{align}

\paragraph{Case 3. $\mathbf z_s\neq \mathbf m,\ \mathbf z_t=\mathbf m$.}
We have 
\begin{align}
p_\theta(\mathbf z_s\mid \mathbf z_t)
&=\frac{\gamma_s-\gamma_t}{1-\gamma_t}
\bigl(\rho_s\, \mathbf z_s^\top \mathbf x_\theta+(1-\rho_s)\tfrac{1}{K}\bigr)\nonumber\\
&=-\Delta t\frac{\gamma_t'}{1-\gamma_t}
\bigl(\rho_t\, \mathbf z_s^\top \mathbf x_\theta+(1-\rho_t)\tfrac{1}{K}
+\rho_t'(\tfrac{1}{K}-\mathbf z_s^\top \mathbf x_\theta)\Delta t\bigr)\nonumber\\
&=-\Delta t\frac{\gamma_t'}{1-\gamma_t}
\bigl(\rho_t\, \mathbf z_s^\top \mathbf x_\theta+(1-\rho_t)\tfrac{1}{K}\bigr)
+o(\Delta t).
\label{pf:ctmc-backward-case3}
\end{align}

\paragraph{Case 4. $\mathbf z_s=\mathbf z_t=\mathbf m$.}
We have 
\begin{align}
p_\theta(\mathbf z_s\mid \mathbf z_t)
&=\frac{1-\gamma_s}{1-\gamma_t}
=1+\Delta t\frac{\gamma_t'}{1-\gamma_t}+o(\Delta t).
\label{pf:ctmc-backward-case4}
\end{align}
\paragraph{Case 5: $\mathbf z_t \neq \mathbf m$, $\mathbf z_s = \mathbf m$.} The last case holds from the fact that SCDD doesn't allow remasking.

In all cases, $p_\theta(\mathbf z_s\mid \mathbf z_t)$ aligns with the process implied by $\tilde Q^\theta_t(\mathbf z_t,\mathbf z_s)$ in~\eqref{eq:backward_rate}, which completes the proof.
\end{proof}

\section{Analysis of Training Loss}\label{sec:loss}
\subsection{Discrete-time NELBO}
Following standard variational arguments on diffusion models \cite{sohl2015deep, sahoo2024simple}, we start from the classical ELBO:
\begin{align}
\mathbb E[-\log p_\theta(\mathbf x)]
&\leq
\mathbb{E}_{q}
\left[
-\log p_\theta(\mathbf x, \mathbf z_{0:1})
+
\log q(\mathbf z_{0:1} \mid \mathbf x)
\right] \nonumber\\
&=
\underbrace{
-\mathbb{E}_{q(\mathbf x,\mathbf z_{0:1})} \big[ \log p_\theta(\mathbf x \mid \mathbf z_0) \big]
}_{\mathcal L^T_{\text{reconstruction}}} +
\underbrace{
\mathbb E_{q(\mathbf x,\mathbf z_{0:1})}\left[D_{\mathrm{KL}}
\big(
q(\mathbf z_1 \mid \mathbf x)
\|
p_\theta(\mathbf z_1)
\big)\right]
}_{\mathcal L_{\text{prior}}} \nonumber\\&+
\underbrace{
\mathbb E_{q(\mathbf x,\mathbf z_{0:1})}\left[\sum_{i=1}^T
D_{\mathrm{KL}}
\big(
q(\mathbf z_{t_{i-1}} \mid \mathbf z_{t_i}, \mathbf x)
\|
p_\theta(\mathbf z_{t_{i-1}} \mid \mathbf z_{t_i})
\big)\right]
}_{\mathcal L^T_{\text{diffusion}}}=:\mathcal L^T_{\text{NELBO}}.\label{eq:elbo}
\end{align}
In this section, we explicitly calculate the the diffusion loss $\mathcal L^T_{\text{diffusion}}$. As before, we use $t,s$ to denote two adjacent time points with $t>s$. We first rewrite the summation over $i$ as an expectation over all time points: 
\begin{align}
    \mathcal L^T_{\text{diffusion}}&=\mathbb E_{q}\left[\sum_{i=1}^T
D_{\mathrm{KL}}
\big(
q(\mathbf z_{t_{i-1}} \mid \mathbf z_{t_i}, \mathbf x)
\|
p_\theta(\mathbf z_{t_{i-1}} \mid \mathbf z_{t_i})
\big)\right] \nonumber\\
&=\frac{1}{T}\sum_{i=1}^T\mathbb E_{q}\left[
TD_{\mathrm{KL}}
\big(
q(\mathbf z_{t_{i-1}} \mid \mathbf z_{t_i}, \mathbf x)
\|
p_\theta(\mathbf z_{t_{i-1}} \mid \mathbf z_{t_i})
\big)\right] \nonumber\\
&=\mathbb E_{t\sim\mathcal U\{t_1,...,t_T\}}\mathbb E_{q}\left[
TD_{\mathrm{KL}}
\big(
q(\mathbf z_s \mid \mathbf z_t, \mathbf x)
\|
p_\theta(\mathbf z_s \mid \mathbf z_t)
\big)\right]
\end{align}

Since
$$
\mathcal D_{\mathrm{KL}}(q\|p_\theta)
=
\mathbb E_q[\log q] - \mathbb E_q[\log p_\theta],
$$
the first (negative entropy) term in KL-divergence is independent of $\theta$, and minimizing $\mathcal{L}^T_{\text{diffusion}}$ is equivalent to minimizing the expectation of second cross entropy term. With a little abuse of notation, we drop the first term and redefine the $\mathcal L^T_{\text{diffusion}}$ as

\begin{align}
\mathcal L^T_{\text{diffusion}}
&=
-\mathbb E_{t\sim\mathcal U\{t_1,...,t_T\}}\mathbb E_{q}\left[
T
\log p_\theta(\mathbf z_s \mid \mathbf z_t)
\right]
\end{align}
When $\mathbf z_s=\mathbf m$, we have $\mathbf z_t=\mathbf m$ and
$p_\theta(\mathbf z_s\mid \mathbf z_t)=\frac{1-\gamma_s}{1-\gamma_t}$.
This case is independent of $\theta$ and will be ignored in optimization. We therefore restrict to the case $\mathbf z_s\neq\mathbf m$.

\paragraph{Case 1: $\mathbf z_t\neq\mathbf m$.}
Substituting $p_\theta(\mathbf z_s\mid \mathbf z_t)$ yields
\begin{align}
\log p_\theta(\mathbf z_s\mid \mathbf z_t)
&=
\log\left(
\frac{\rho_s\mathbf z_s^\top \mathbf x_\theta + (1-\rho_s)\tfrac{1}{K}}
{\rho_t\mathbf z_t^\top \mathbf x_\theta + (1-\rho_t)\tfrac{1}{K}}
\right)
+
\log\left(\frac{\rho_t}{\rho_s} \mathbf z_s^\top \mathbf z_t +\frac{\rho_s-\rho_t}{\rho_s}\tfrac{1}{K}\right)
\nonumber
\end{align}

The second term in $\log p_\theta(\mathbf z_s\mid\mathbf z_t)$ is independent of $\theta$ and drops from the loss.
Thus, up to an additive constant,
\begin{align}
\mathcal L^T_{\text{diffusion}}\mid_{\mathbf z_t\neq \mathbf m}\nonumber
&=-\mathbb E_{t\sim\mathcal U\{t_1,...,t_T\}}\mathbb E_{q} 
\left[
T\sum_{\mathbf z_s\neq\mathbf m}
\frac{\rho_s\mathbf z_s^\top \mathbf x
+(1-\rho_s)\tfrac{1}{K}}{\rho_t \mathbf z_t^\top \mathbf x
+(1-\rho_t)\tfrac{1}{K}}
\Big(\frac{\rho_t}{\rho_s}\mathbf z_s^\top \mathbf z_t+\frac{\rho_s-\rho_t}{\rho_s}\tfrac{1}{K}\Big)
\log\frac{\rho_s\mathbf z_s^\top \mathbf x_\theta + (1-\rho_s)\tfrac{1}{K}}{\rho_t \mathbf z_t^\top \mathbf x_\theta +(1-\rho_t)\tfrac{1}{K}} 
\right]\\
&=-\mathbb E_{t\sim\mathcal U\{t_1,\dots,t_T\}}\mathbb E_{q}\left[ T\sum_{\mathbf v\neq \mathbf m}
\frac{\rho_s\mathbf v^\top \mathbf x
+(1-\rho_s)\frac{1}{K}}{\rho_t \mathbf z_{t}^\top \mathbf x
+(1-\rho_t)\frac{1}{K}}
\Big(\frac{\rho_t}{\rho_s}\mathbf v^\top \mathbf z_{t}+\frac{\rho_s-\rho_t}{\rho_s}\tfrac{1}{K}\Big)
\log\frac{\rho_s\mathbf v^\top \mathbf x_\theta + (1-\rho_s)\frac{1}{K}}
{\rho_t \mathbf z_{t}^\top \mathbf x_\theta +(1-\rho_t)\frac{1}{K}}\right],
\label{eq:loss_v} 
\end{align}
up to some $\theta$-independent additive constant. We replaced $\mathbf z_s$ with $\mathbf v$ in the last line to make it clear that the summation doesn't depend on the value of $\mathbf z_s$.
\paragraph{Case 2: $\mathbf z_t=\mathbf m$.}
In this case,
$$
\log p_\theta(\mathbf z_s\mid \mathbf z_t = \mathbf m)
=\log\Big(\rho_s \mathbf z_s^\top \mathbf x_\theta+(1-\rho_s)\tfrac{1}{K}\Big)
+\log\Big(\tfrac{\gamma_s-\gamma_t}{1-\gamma_t}\Big),
$$

Dropping the second term in $\log p_\theta(\mathbf z_s\mid\mathbf z_t)$ and substituting into the cross-entropy objective yields
\begin{align}
\mathcal L^T_{\text{diffusion}}\mid_{\mathbf z_t}
&=-\mathbb E_{t\sim\mathcal U\{t_1,\dots,t_T\}}\mathbb E_{q}\left[T\sum_{\mathbf z_s\neq\mathbf m}\frac{\gamma_s(\rho_s\mathbf z_s^{\top}\mathbf x+(1-\rho_s)\frac{1}{K})}{1-\gamma_t}\frac{\gamma_s-\gamma_t}{\gamma_s}\log\Big(\rho_s \mathbf z_s^\top \mathbf x_\theta+(1-\rho_s)\tfrac{1}{K}\Big)\right]\nonumber\\ 
&=-\mathbb E_{t\sim\mathcal U\{t_1,\dots,t_T\}}\mathbb E_{q}\left[T\sum_{\mathbf v\neq\mathbf m}\frac{\gamma_s-\gamma_t}{1-\gamma_t}(\rho_s\mathbf v^{\top}\mathbf x+(1-\rho_s)\tfrac{1}{K})\log (\rho_s\mathbf v^\top \mathbf x_\theta+(1-\rho_s)\tfrac{1}{K})\right],\label{eq:loss_m}
\end{align}
up to some $\theta$-independent additive constant. 
\subsection{Continuous-time NELBO}
Next, we show that reconstruction loss and prior loss vanishes as $T\to\infty$, and explicitly derive $\mathcal L^\infty_{\text{diffusion}}$.
\paragraph{Reconstruction Loss} As $T\to\infty$, $\rho_0,\gamma_0\to1^-$ by the design of noising schedule. Therefore, we have 
\begin{align}
\mathbf z_0 &\sim \text{Cat}\Bigl(\cdot;\gamma_0 \bigl(\rho_0 \mathbf x+(1-\rho_0)\mathbf u\bigr)+(1-\gamma_0)\mathbf{m}\Bigr) \nonumber \\
&\overset{d}{\Longrightarrow}\text{Cat}\Bigl(\cdot;\mathbf x\Bigr),
\end{align}
where ``$\overset{d}{\Longrightarrow}$'' stands for ``converge in distribution''. Therefore, we have the reconstruction loss vanishes as follows:
\begin{align}
    \lim_{T\to\infty}\mathcal L^T_{\text{reconstruction}} &= \lim_{T\to\infty}\mathbb E_q[-\log p_\theta(\mathbf x\mid\mathbf z_0)]\nonumber\\
    &\overset{(i)}{=}\lim_{T\to\infty}\mathbb E_q\left[-\log\frac{\mathbf \rho_{-1} \mathbf x^\top \mathbf x_\theta(\mathbf z_0,0)+(1-\rho_{-1})\frac{1}{K}}{\rho_0 \mathbf z_0^\top \mathbf x_\theta(\mathbf z_0,0)+(1-\rho_0)\frac{1}{K}}\left(\frac{\rho_0}{\rho_{-1}}\mathbf x^\top \mathbf z_0+\frac{\rho_{-1}-\rho_0}{\rho_{-1}}\frac{1}{K}\right)\right]\nonumber \\
    &=\mathbb E_q\left[-\log\frac{\mathbf x^\top \mathbf x_\theta(\mathbf x,0)}{\mathbf x^\top \mathbf x_\theta(\mathbf x,0)}\left(\mathbf x^\top \mathbf x\right)\right]\nonumber \\
    &=\mathbb E_q[-\log 1] = 0,
\end{align}
where $(i)$ is from plugging $\mathbf z_t = \mathbf z_0, t=0$ into \eqref{eq:backwardParam}.
\paragraph{Prior Loss}
At $t=1$, the forward marginal distribution becomes $q(\mathbf z_1 \mid \mathbf x) = \mathbf 1(\mathbf z_1 = \mathbf m)$, indicating that the forward noising process ultimately transforms clean data $\mathbf x$ to $\mathbf m$ at $t=1$. We also choose the prior to be $p_\theta(\mathbf z_1) = q(\mathbf z_1 \mid \mathbf x) = \mathbf 1(\mathbf z_1 = \mathbf m)$ so that the backward process starts from $\mathbf m$. Therefore, we have $\mathcal L^T_{\text{prior}}\equiv0$. 
\paragraph{Diffusion Loss}
In this section, we discuss the continuous-time version of the loss function in \eqref{eq:loss_zt}. We assume that $\rho_t$ and $\gamma_t$ are decreasing, differentiable functions of time.

\begin{proposition}
Pick adjacent time points $s<t$ and let $\Delta t=t-s$ be the time step of the above discrete stochastic process.
Then as $\Delta t \to 0$ or, equivalently,  $T\to\infty$, we have
\begin{align}
\mathcal{L}^\infty_{\text{diffusion}}
=
\begin{cases}
\mathbb E_{t\sim\mathcal U[0,1]}\mathbb E_q\left[
\sum_{\mathbf v\neq \mathbf z_t,\mathbf m }\left(
\frac{(\rho_t \mathbf v^\top \mathbf x+(1-\rho_t)\frac{1}{K})\frac{\rho'_t}{\rho_t}\frac{1}{K}}
{\rho_t \mathbf z_t^\top \mathbf x+(1-\rho_t)\frac{1}{K}}
\log
\frac{\rho_t \mathbf v^\top \mathbf x_\theta+(1-\rho_t)\frac{1}{K}}
{\rho_t \mathbf z_t^\top \mathbf x_\theta+(1-\rho_t)\frac{1}{K}}\right)
-
\frac{\rho'_t(-\mathbf z_t^\top \mathbf x_\theta+\frac{1}{K})}
{\rho_t \mathbf z_t^\top \mathbf x_\theta+(1-\rho_t)\frac{1}{K}}\right],
& \text{if } \mathbf z_t\neq \mathbf m,
\\[10pt]
\mathbb E_{t\sim\mathcal U[0,1]} \mathbb E_q\left[\sum_{\mathbf v\neq\mathbf m}
\frac{\gamma_t'}{1-\gamma_t}
(\rho_t \mathbf v^\top \mathbf x+(1-\rho_t)\frac{1}{K})
\log(\rho_t \mathbf v^\top \mathbf x_\theta+(1-\rho_t)\frac{1}{K})\right],
& \text{if } \mathbf z_t=\mathbf m.
\end{cases}\label{eq:loss_continuous}
\end{align}
\end{proposition}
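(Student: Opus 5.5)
Starting from the discrete-time expressions \eqref{eq:loss_v} and \eqref{eq:loss_m}, I would write $\mathcal L^T_{\text{diffusion}}$ as $\mathbb E_{t\sim\mathcal U\{t_1,\dots,t_T\}}\mathbb E_q[T\,\ell_T(t)]$, where $\ell_T(t)$ is the cross-entropy term at step $(s,t)$ with $s=t-\Delta t$ and $T=1/\Delta t$. The task then splits into two parts. First, compute the pointwise limit of $\ell_T(t)/\Delta t$ as $\Delta t\to0$. Second, justify replacing the uniform distribution on the grid by $\mathcal U[0,1]$. The second part is a Riemann-sum argument: the integrand is continuous in $t$, since $\rho,\gamma$ are $C^1$ and $q(\mathbf z_t\mid\mathbf x)$ is continuous. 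I would treat it briefly, assuming enough regularity (e.g.\ $\rho_t$ bounded away from $0$ on compacts of $[0,1)$ and bounded derivatives) for dominated convergence. I would also freely drop $\theta$-independent terms, as the paper does throughout.

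\textbf{Case $\mathbf z_t=\mathbf m$.} This case is direct. Using the Taylor expansions \eqref{eq:taylor}, $\gamma_s-\gamma_t=-\gamma_t'\Delta t+o(\Delta t)$, and $\rho_s\to\rho_t$ inside both the weight and the logarithm. So $T\cdot\frac{\gamma_s-\gamma_t}{1-\gamma_t}(\cdots)\log(\cdots)$ converges to $-\frac{\gamma_t'}{1-\gamma_t}(\rho_t\mathbf v^\top\mathbf x+(1-\rho_t)\tfrac1K)\log(\rho_t\mathbf v^\top\mathbf x_\theta+(1-\rho_t)\tfrac1K)$. Combined with the leading minus sign, this gives the stated second line.

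\textbf{Case $\mathbf z_t\neq\mathbf m$.} This is the main obstacle: the summand is $O(1)$ and has to be shown to be $O(\Delta t)$ up to a $\theta$-independent piece. I would split the sum over $\mathbf v$ into $\mathbf v\neq\mathbf z_t$ and $\mathbf v=\mathbf z_t$.

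For $\mathbf v\neq\mathbf z_t$, the posterior weight carries the factor $\frac{\rho_s-\rho_t}{\rho_s}\frac1K=-\frac{\rho_t'}{\rho_t}\frac1K\Delta t+o(\Delta t)$, as in \eqref{pf:ctmc-backward-case1}. Multiplying by $T$ and letting $\rho_s\to\rho_t$ in the remaining factors yields exactly the first summand of the claimed formula, sign included.

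For $\mathbf v=\mathbf z_t$, the weight is $1+O(\Delta t)$ and the log is $\log\frac{\rho_s a+(1-\rho_s)/K}{\rho_t a+(1-\rho_t)/K}$ with $a=\mathbf z_t^\top\mathbf x_\theta$. Expanding the ratio as in \eqref{pf:ctmc-backward-case2}, it equals $1-\frac{\rho_t'(1/K-a)}{\rho_t a+(1-\rho_t)/K}\Delta t+o(\Delta t)$. Hence the log is $-\frac{\rho_t'(1/K-a)}{\rho_ta+(1-\rho_t)/K}\Delta t+o(\Delta t)$. Since the log is already $O(\Delta t)$, only the zeroth-order weight (namely $1$) survives after multiplying by $T$. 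With the leading minus, this gives the final term $-\frac{\rho_t'(-a+1/K)}{\rho_ta+(1-\rho_t)/K}$, with signs to be tracked carefully.

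The delicate points are two. First, the $O(\Delta t)$ correction to the weight in the $\mathbf v=\mathbf z_t$ term must multiply an $O(\Delta t)$ log, so it is $o(\Delta t)$. Second, the $o(\Delta t)$ remainders must be uniform enough in $t$ to pass under the expectation, which I would secure by a mean-value bound on $\log$ near $1$ using $\rho_ta+(1-\rho_t)/K\ge(1-\rho_t)/K>0$ on $(0,1]$.
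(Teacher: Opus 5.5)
Your proposal follows the paper's proof. You Taylor-expand $\rho_s,\gamma_s$ about $t$, then split the $\mathbf z_t\neq\mathbf m$ sum into two parts: $\mathbf v\neq\mathbf z_t$, where the posterior weight carries the $O(\Delta t)$ factor $\frac{\rho_s-\rho_t}{\rho_s}\frac1K$, and $\mathbf v=\mathbf z_t$, where the logarithm is $O(\Delta t)$ and only the zeroth-order weight survives. You then cancel $T\Delta t=1$. Your Riemann-sum and dominated-convergence remarks add rigor the paper omits, since it simply lets $T\to\infty$.

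There is one sign slip in the $\mathbf v=\mathbf z_t$ term. Since $\rho_s-\rho_t=-\rho_t'\Delta t+o(\Delta t)$, you get $\rho_s a+(1-\rho_s)\frac1K=\rho_t a+(1-\rho_t)\frac1K+\rho_t'\bigl(\frac1K-a\bigr)\Delta t+o(\Delta t)$. So the ratio is $1+\frac{\rho_t'(1/K-a)}{\rho_t a+(1-\rho_t)/K}\Delta t+o(\Delta t)$, with a plus sign, as in \eqref{pf:ctmc-backward-case2}, which you cite. Your minus sign, propagated through the leading $-T$, would give the negative of the stated final term. With the correct sign you get exactly $-\frac{\rho_t'(-a+1/K)}{\rho_t a+(1-\rho_t)/K}$.
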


\begin{proof}
We distinguish two cases of $\mathbf z_t$ to calculate the continuous-time loss function.
\paragraph{Case 1: $\mathbf z_t\neq \mathbf m$.} If $\mathbf z_t\neq \mathbf m$, using the first-order Taylor expansion of $\rho_s$ and $\gamma_s$, we have
\begin{align}
\rho_s \mathbf z_s^\top\mathbf x+(1-\rho_s)\tfrac{1}{K}
&=
(\rho_t-\rho'_t\Delta t)\mathbf z_s^\top\mathbf x
+(1-\rho_t+\rho'_t\Delta t)\tfrac{1}{K}
+o(\Delta t)
\nonumber\\
&=
(\rho_t \mathbf z_s^\top\mathbf x+(1-\rho_t)\tfrac{1}{K})
+\Delta t\rho'_t(-\mathbf z_s^\top\mathbf x+\tfrac{1}{K})
+o(\Delta t),
\nonumber\\
\log
\frac{\rho_s \mathbf z_s^\top\mathbf x_\theta+(1-\rho_s)\tfrac{1}{K}}
{\rho_t \mathbf z_t^\top\mathbf x_\theta+(1-\rho_t)\tfrac{1}{K}}
&=
\log
\frac{\rho_t \mathbf z_s^\top\mathbf x_\theta+(1-\rho_t)\frac{1}{K}}
{\rho_t \mathbf z_t^\top\mathbf x_\theta+(1-\rho_t)\frac{1}{K}}
+
\Delta t
\frac{\rho'_t(-\mathbf z_s^\top\mathbf x_\theta+\frac{1}{K})}
{\rho_t \mathbf z_s^\top\mathbf x_\theta+(1-\rho_t)\frac{1}{K}}
+o(\Delta t),
\nonumber\\
\frac{\rho_t}{\rho_s}\mathbf z_s^\top \mathbf z_t
+
\frac{\rho_s-\rho_t}{\rho_s}\tfrac{1}{K}
&=
\mathbf z_s^\top \mathbf z_t
+
\frac{\rho'_t}{\rho_t}\Delta t
(\mathbf z_s^\top \mathbf z_t-\tfrac{1}{K})
+o(\Delta t).
\end{align}

Substituting the above expansions into the discrete loss \eqref{eq:loss_v}, 
\begin{align}
\mathcal L^T_{\text{diffusion}}\mid_{\mathbf z_t\neq \mathbf m} = -&T\mathbb E_{t\sim\mathcal U\{t_1,\dots,t_T\}}\mathbb E_q \sum_{\mathbf z_s\neq\mathbf m}\Bigg[ 
\mathbf z_s^\top \mathbf z_t
\frac{\rho_t \mathbf z_s^\top\mathbf x+(1-\rho_t)\tfrac{1}{K}}
{\rho_t \mathbf z_t^\top\mathbf x+(1-\rho_t)\tfrac{1}{K}}
\Bigg(
\log
\frac{\rho_t \mathbf z_s^\top\mathbf x_\theta+(1-\rho_t)\frac{1}{K}}
{\rho_t \mathbf z_t^\top\mathbf x_\theta+(1-\rho_t)\frac{1}{K}}
+
\Delta t
\frac{\rho'_t(-\mathbf z_s^\top\mathbf x_\theta+\frac{1}{K})}
{\rho_t \mathbf z_s^\top\mathbf x_\theta+(1-\rho_t)\frac{1}{K}}
\Bigg)
\nonumber\\
&+
\Delta t
\frac{
\rho'_t(-\mathbf z_s^\top\mathbf x+\frac{1}{K})\mathbf z_s^\top \mathbf z_t
+
(\rho_t \mathbf z_s^\top\mathbf x+(1-\rho_t)\frac{1}{K})
\frac{\rho'_t}{\rho_t}(\mathbf z_s^\top \mathbf z_t-\frac{1}{K})
}
{\rho_t \mathbf z_t^\top\mathbf x+(1-\rho_t)\frac{1}{K}}
\log
\frac{\rho_t \mathbf z_s^\top\mathbf x_\theta+(1-\rho_t)\frac{1}{K}}
{\rho_t \mathbf z_t^\top\mathbf x_\theta+(1-\rho_t)\frac{1}{K}}
+o(\Delta t)\Bigg].\nonumber 
\end{align}

We distinguish two cases when doing summation over $\mathbf z_s$ to simplify the above expansion: (i) if $\mathbf z_s=\mathbf z_t$, all logarithmic terms vanish and the loss reduces to
\begin{align}
-T\Delta t\mathbb E_{t\sim\mathcal U\{t_1,\dots,t_T\}}\mathbb E_q \left[
\frac{\rho'_t(-\mathbf z_t^\top\mathbf x_\theta+\frac{1}{K})}
{\rho_t \mathbf z_t^\top\mathbf x_\theta+(1-\rho_t)\frac{1}{K}}\right] + o(1) = -\mathbb E_{t\sim\mathcal U\{t_1,\dots,t_T\}}\mathbb E_q\left[
\frac{\rho'_t(-\mathbf z_t^\top\mathbf x_\theta+\frac{1}{K})}
{\rho_t \mathbf z_t^\top\mathbf x_\theta+(1-\rho_t)\frac{1}{K}}\right] + o(1);\nonumber 
\end{align}
(ii) if $\mathbf z_s\neq \mathbf z_t$, the first term inside of expectation and summation vanishes and we obtain
\begin{align}
\mathbb E_{t\sim\mathcal U\{t_1,\dots,t_T\}}\mathbb E_q\sum_{\mathbf z_s \neq \mathbf z_t, \mathbf m}\left[
\frac{(\rho_t \mathbf z_s^\top\mathbf x+(1-\rho_t)\frac{1}{K})
\frac{\rho'_t}{\rho_t}\frac{1}{K}}
{\rho_t \mathbf z_t^\top\mathbf x+(1-\rho_t)\frac{1}{K}}
\log
\frac{\rho_t \mathbf z_s^\top\mathbf x_\theta+(1-\rho_t)\frac{1}{K}}
{\rho_t \mathbf z_t^\top\mathbf x_\theta+(1-\rho_t)\frac{1}{K}}\right] + o(1).
\end{align}

Combining the two cases, letting $T\to\infty$, and replacing the $\mathbf z_s$ notation with $\mathbf v$ yields the desired limit for $\mathbf z_t\neq\mathbf m$:

$$\mathcal L^\infty_{\text{diffusion}}\mid_{\mathbf z_t \neq \mathbf m} = \mathbb E_{t\sim\mathcal U[0,1]}\mathbb E_q\left[
\sum_{\mathbf v\neq \mathbf z_t,\mathbf m }\left(
\frac{(\rho_t \mathbf v^\top\mathbf x+(1-\rho_t)\frac{1}{K})\frac{\rho'_t}{\rho_t}\frac{1}{K}}
{\rho_t \mathbf z_t^\top\mathbf x+(1-\rho_t)\frac{1}{K}}
\log
\frac{\rho_t \mathbf v^\top\mathbf x_\theta+(1-\rho_t)\frac{1}{K}}
{\rho_t \mathbf z_t^\top\mathbf x_\theta+(1-\rho_t)\frac{1}{K}}\right)
-
\frac{\rho'_t(-\mathbf z_t^\top\mathbf x_\theta+\frac{1}{K})}
{\rho_t \mathbf z_t^\top\mathbf x_\theta+(1-\rho_t)\frac{1}{K}}\right].$$

\paragraph{Case 2: $\mathbf z_t=\mathbf m$.} Using the first-order Taylor expansions to get
\begin{align}
\frac{\gamma_t-\gamma_s}{1-\gamma_t}
&=
\Delta t\frac{\gamma_t'}{1-\gamma_t},
\nonumber\\
\rho_s \mathbf z_s^\top\mathbf x+(1-\rho_s)\tfrac{1}{K}
&=
(\rho_t \mathbf z_s^\top\mathbf x+(1-\rho_t)\tfrac{1}{K})
+\Delta t\rho'_t(-\mathbf z_s^\top\mathbf x+\tfrac{1}{K})
+o(\Delta t),
\nonumber\\
\log(\rho_s \mathbf z_s^\top\mathbf x_\theta+(1-\rho_s)\tfrac{1}{K})
&=
\log(\rho_t \mathbf z_s^\top\mathbf x_\theta+(1-\rho_t)\tfrac{1}{K})
+
\Delta t
\frac{\rho'_t(-\mathbf z_s^\top\mathbf x_\theta+\frac{1}{K})}
{\rho_t \mathbf z_s^\top\mathbf x_\theta+(1-\rho_t)\frac{1}{K}}
+o(\Delta t).
\end{align}

Substituting into the discrete loss \eqref{eq:loss_m} yields
\begin{align}
\mathcal L^\infty_{\text{diffusion}}\mid_{\mathbf z_t = \mathbf m} = \mathbb E_{t\sim\mathcal U[0,1]}\mathbb E_q\left[
\sum_{\mathbf z_s\neq\mathbf m}
\frac{\gamma_t'}{1-\gamma_t}
(\rho_t \mathbf z_s^\top\mathbf x+(1-\rho_t)\tfrac{1}{K})
\log(\rho_t \mathbf z_s^\top\mathbf x_\theta+(1-\rho_t)\tfrac{1}{K})\right],
\end{align}
which completes the proof.

\end{proof}
\section{Connections Between SCDD and Other Models}

\subsection{SCDD and MDLM} \label{sec:mdlm-ours}
In this section, we discuss the relation between SCDD and MDLM \cite{sahoo2024simple}, and show SCDD can be reduced to MDLM under specific parameter settings. Firstly, recall several facts of MDLM (see \citet{sahoo2024simple} for full derivations):

\paragraph{Marginal Distribution.} The marginal distribution at time $t\in[0,1]$ is
\begin{align}
q(\mathbf z_t|\mathbf x)=\text{Cat}(\mathbf z_t;\alpha_t^{\text{MDLM}}\mathbf x + (1-\alpha^{\text{MDLM}}_t)\mathbf m),\label{eq:marginal_mdm} 
\end{align}
where $\alpha_t^{\text{MDLM}} \in [0,1]$.
\paragraph{Forward Process.}
\begin{align}
    q(\mathbf z_t\mid\mathbf z_s ) = \text{Cat}\left(\mathbf z_t;\alpha^{\text{MDLM}}_{t|s}\mathbf z_s + (1-\alpha^{\text{MDLM}}_{t|s})\mathbf m \right),
    \label{eq:mdlm-forward}
\end{align}
where $\alpha_{t|s} = \alpha_t/\alpha_s$.
\paragraph{Backward Process.}
\begin{align}
p_\theta\left(\mathbf{z}_s \mid \mathbf{z}_t\right)\begin{cases}\text{Cat}\left(\mathbf{z}_s ; \mathbf{z}_t\right), & \mathbf{z}_t \neq \mathbf{m}, \\ \text{Cat}\left(\mathbf{z}_s ; \frac{\left(1-\alpha_s\right) \mathbf{m}+\left(\alpha^{\text{MDLM}}_s-\alpha^{\text{MDLM}}_t\right) \mathbf{x}_\theta\left(\mathbf{z}_t, t\right)}{1-\alpha^{\text{MDLM}}_t}\right) . & \mathbf{z}_t=\mathbf{m},\end{cases}
\end{align}
\paragraph{Training Loss.}
\begin{align}
\mathcal{L}^T_{\text{diffusion}}&=-T\mathbb E_t\mathbb{E}_{q}\left[\frac{\alpha_s^{\text{MDLM}}-\alpha_t^{\text{MDLM}}}{1-\alpha_t^{\text{MDLM}}}\log(\mathbf x_\theta^\top\mathbf x)\right]\label{eq:loss_mdm},&\text{discrete version}\\
\mathcal{L}^\infty_{\text{diffusion}}&=\mathbb E_t\mathbb{E}_{q}\left[\frac{(\alpha^{\text{MDLM}}_t)'}{1-\alpha_t^{\text{MDLM}}}\log (\mathbf x_\theta^\top\mathbf x)\right],\label{eq:loss_mdm_continuous} &\text{continuous version}
\end{align}

\begin{proposition}\label{pro:mdlm-ours}
If we set $\rho_t\equiv 1$ and $\gamma_t \equiv \alpha_t^{\text{MDLM}}$ in \eqref{eq:marginal}, SCDD can recover MDLM. In particular, the marginal distribution, forward process \& posterior, and loss function can recover MDLM. 
\end{proposition}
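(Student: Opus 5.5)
The plan is direct substitution of $\rho_t\equiv 1$ (hence $\rho_s=\rho_t=1$, $\rho_{t|s}=1$, $\rho'_t\equiv 0$) and $\gamma_t\equiv\alpha_t^{\text{MDLM}}$ into each SCDD object, checking each against the corresponding MDLM fact recalled above. I will treat four objects in turn: the marginal, the forward kernel, the posterior and backward model, and the loss.

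\textbf{Marginal.} In \eqref{eq:marginal} the uniform component has weight $1-\rho_t=0$. The probability vector therefore reduces to $\alpha_t^{\text{MDLM}}\mathbf x+(1-\alpha_t^{\text{MDLM}})\mathbf m$, which is \eqref{eq:marginal_mdm}.

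\textbf{Forward kernel.} In \eqref{eq:forward} the non-mask entry becomes $\gamma_{t|s}\mathbf 1_{\{\mathbf z_s=\mathbf z_t\}}$. The remaining entries, $1-\gamma_{t|s}$ for masking and $1$ for staying at the mask, are unchanged. This is exactly $\text{Cat}(\mathbf z_t;\alpha_{t|s}\mathbf z_s+(1-\alpha_{t|s})\mathbf m)$ in \eqref{eq:mdlm-forward}, treating $\mathbf z_s=\mathbf m$ separately.

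\textbf{Posterior and backward model.} Substitute into \eqref{eq:backward} and \eqref{eq:backwardParam}, splitting on $\mathbf z_t$.

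For $\mathbf z_t\neq\mathbf m$, the posterior becomes $\frac{\mathbf z_s^\top\mathbf x}{\mathbf z_t^\top\mathbf x}\,\mathbf z_s^\top\mathbf z_t$. Since $\mathbf z_t=\mathbf x$ almost surely under $q$ in this case, this equals $\mathbf 1(\mathbf z_s=\mathbf z_t)$. For the model $p_\theta$, the same expression gives $\frac{\mathbf z_s^\top\mathbf x_\theta}{\mathbf z_t^\top\mathbf x_\theta}\mathbf z_s^\top\mathbf z_t=\mathbf 1(\mathbf z_s=\mathbf z_t)$. This holds whenever $\mathbf z_t^\top\mathbf x_\theta>0$, so it recovers $\text{Cat}(\mathbf z_s;\mathbf z_t)$ without needing the carry-over constraint.

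I expect this step to be the main subtlety: a $0/0$ ambiguity appears when $\mathbf z_t^\top\mathbf x_\theta=0$. I would resolve it by noting that the ratio is identically $1$ on the diagonal, so the convention is harmless. Alternatively, I would argue that this case never contributes, because MDLM's carry-over unmasking is recovered automatically.

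For $\mathbf z_t=\mathbf m$, the posterior is $(1-\mathbf z_s^\top\mathbf m)\frac{\gamma_s-\gamma_t}{1-\gamma_t}\mathbf z_s^\top\mathbf x_\theta+\mathbf z_s^\top\mathbf m\frac{1-\gamma_s}{1-\gamma_t}$. Using the zero-masking constraint $\mathbf x_\theta^\top\mathbf m=0$, this is the MDLM categorical with probability vector $\frac{(1-\alpha_s)\mathbf m+(\alpha_s-\alpha_t)\mathbf x_\theta}{1-\alpha_t}$.

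\textbf{Loss.} For $\mathbf z_t\neq\mathbf m$, the summand in \eqref{eq:loss_zt} becomes $\frac{\mathbf v^\top\mathbf x}{\mathbf z_t^\top\mathbf x}\mathbf v^\top\mathbf z_t\log\frac{\mathbf v^\top\mathbf x_\theta}{\mathbf z_t^\top\mathbf x_\theta}$. The factors force $\mathbf v=\mathbf z_t$, so the logarithm is $0$. In the continuous loss \eqref{eq:contLoss} every term carries a factor $\rho'_t=0$. Either way the unmasked contribution vanishes, matching MDLM.

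For $\mathbf z_t=\mathbf m$, the sum $\sum_{\mathbf v\neq\mathbf m}\mathbf v^\top\mathbf x\log(\mathbf v^\top\mathbf x_\theta)$ collapses to $\log(\mathbf x_\theta^\top\mathbf x)$. This yields \eqref{eq:loss_mdm} with weight $T\frac{\gamma_s-\gamma_t}{1-\gamma_t}$, and \eqref{eq:loss_mdm_continuous} with weight $\frac{\gamma_t'}{1-\gamma_t}$.

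All other steps are routine substitutions, so the proof should close once these four checks are done.
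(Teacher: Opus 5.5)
Your proposal is correct and takes essentially the same route as the paper. Both substitute $\rho_t\equiv 1$ and $\gamma_t\equiv\alpha_t^{\text{MDLM}}$ into the marginal, the forward kernel, the backward model, and both the discrete and continuous diffusion losses; the unmasked contribution vanishes because only $\mathbf v=\mathbf z_t$ survives (discrete) or because $\rho_t'\equiv 0$ (continuous).

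The only difference is in the backward model. You check it case by case, noting that carry-over unmasking is recovered automatically and that the zero-masking constraint is what makes the $\mathbf z_t=\mathbf m$ case match. The paper instead appeals to Bayes' rule with the same parameterization. Your $0/0$ caveat, which the paper ignores, never arises when $\mathbf x_\theta$ is a softmax with strictly positive non-mask entries.
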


\begin{proof}
It's easy to see the marginal \eqref{eq:marginal} becomes \eqref{eq:marginal_mdm} and the forward process \eqref{eq:forward} becomes \eqref{eq:mdlm-forward} whenever $\rho_t\equiv 1$ and $\gamma_t \equiv \alpha_t^{\text{MDLM}}$. Therefore, the backward process of MDLM and SCDD also align based on Bayes' Rule and the  same model parameterization. 
Finally, we verify the training loss. If $\mathbf z_t=\mathbf m$, the diffusion loss \eqref{eq:loss_m} becomes 
\begin{align}
\mathcal{L}^T_{\text{diffusion}}\mid_{\mathbf z_t=\mathbf m}&=-T\mathbb E_t\mathbb E_q\sum_{\mathbf v\neq\mathbf m}\frac{\gamma_s-\gamma_t}{1-\gamma_t}(\rho_s\mathbf v^{\top}\mathbf x+(1-\rho_s)\tfrac{1}{K})\log (\rho_t\mathbf v^\top\mathbf x_\theta+(1-\rho_t)\tfrac{1}{K})\nonumber\\
&=-T\mathbb E_t\mathbb E_q\left[\frac{\gamma_s-\gamma_t}{1-\gamma_t}\log \mathbf x_\theta^\top\mathbf x\right].\nonumber 
\end{align}
If $\mathbf z_t\neq\mathbf m$, then $\mathbf z_t = \mathbf z_s=\mathbf x$. From \eqref{eq:loss_v}, we have 
\begin{align}
\mathcal{L}^T_{\text{diffusion}}\mid_{\mathbf z_t=\mathbf x}&=-T\mathbb E_t\mathbb E_q\left[\frac{\mathbf x^\top \mathbf z_s}{\mathbf x^\top \mathbf z_t }(\mathbf z_s^\top \mathbf z_t) \log \frac{\mathbf z_s^\top\mathbf x_\theta}{\mathbf z_t^\top\mathbf x_\theta}\right]\nonumber\\
&=-T\log 1 =0.\nonumber 
\end{align}
Therefore, we recover \eqref{eq:loss_mdm}.

For the continuous loss, if $\mathbf z_t\neq \mathbf{m}$,  \eqref{eq:loss_continuous} vanishes since $\rho'_t\equiv0$. If $\mathbf z_t=\mathbf{m}$, substituting $\rho_t\equiv1$ yields 
$$\mathcal L^\infty_{\text{diffusion}}\mid_{\mathbf z_t = \mathbf m}=\mathbb E_t\mathbb E_q\left[\frac{\gamma'_t}{1-\gamma_t}\log(\mathbf x^\top\mathbf x_\theta)\right],$$
which coincides with \eqref{eq:loss_mdm_continuous}. 
\end{proof}

\subsection{SCDD and GIDD}
\label{sec:scdd-gidd}
In this section, we discuss the relation between the SCDD and GIDD \cite{rutte2025generalized}, and derive the noise schedule of SCDD that recovers the marginal distribution of GIDD. Firstly, recall several facts of GIDD (see \citet{rutte2025generalized} for full derivations):

\paragraph{Marginal Distribution.}
The marginal distribution at time $t\in[0,1]$ is
\begin{equation}
q^{\text{GIDD}}_t(\mathbf z_t \mid\mathbf x)
=
\text{Cat}\left(\mathbf z_t; \alpha_t^{\text{GIDD}}\mathbf x + \beta_t^{\text{GIDD}} \pi_t\right),
\label{eq:gidd-marginal}
\end{equation}
where $\beta_t^{\text{GIDD}} = 1 - \alpha^{\text{GIDD}}_t$ and $\alpha^{\text{GIDD}}_t \in [0,1]$.

\paragraph{Forward Process.}
Given a mixing rate $\alpha_t$ and a time-dependent mixing distribution $\pi_t$, the forward Markov chain is defined such that the cumulative transition from $s$ to $t$ satisfies
\begin{equation}
q^{\text{GIDD}}_{t|s}(\mathbf z_t \mid \mathbf z_s)
=
\text{Cat}\left(\mathbf z_t; \alpha_{t|s}^{\text{GIDD}} \mathbf z_s + \beta_{t|s}^{\text{GIDD}} \pi_{t|s}\right),
\label{eq:gidd-forward}
\end{equation}
where
\begin{equation}
\alpha_{t|s}^{\text{GIDD}} = \frac{\alpha_t^{\text{GIDD}}}{\alpha_s^{\text{GIDD}}}, 
\qquad
\beta_{t|s}^{\text{GIDD}} \pi_{t|s} = \beta_t^{\text{GIDD}} \pi_t - \frac{\alpha_t^{\text{GIDD}}}{\alpha_s^{\text{GIDD}}} \beta_s^{\text{GIDD}} \pi_s.\label{eq:gidd-params}
\end{equation}
\paragraph{Forward Transition Rate}
The forward rate matrix of GIDD is given by
\begin{align}
  R_t^{\text{GIDD}}(\mathbf z_s,\mathbf z_t)=\frac{(\alpha^{\text{GIDD}}_t)'}{\alpha^{\text{GIDD}}_t}\delta_{\mathbf z_s,\mathbf z_t}+\mathbf z_t^\top(\beta^{\text{GIDD}}_t \pi_t'-\frac{(\alpha^{\text{GIDD}}_t)'}{\alpha^{\text{GIDD}}_t}\pi_t),\label{eq:gidd-rate}  
\end{align}
and thus 
$$q^{\text{GIDD}}_{t|s}(\mathbf z_t|\mathbf z_s)=\delta_{\mathbf z_s,\mathbf z_t}+R^{\text{GIDD}}_t(\mathbf z_s,\mathbf z_t)\Delta t+o(\Delta t),$$
where $\Delta t = t-s$.

\paragraph{Backward process.}
The backward process is parameterized as
\begin{equation}
p^{\text{GIDD}}_\theta(\mathbf z_s \mid \mathbf z_t)
=
\frac{q^{\text{GIDD}}_{t|s}(\mathbf z_t \mid \mathbf z_s) q^{\text{GIDD}}_s(\mathbf z_s \mid\mathbf x_\theta)}{q^{\text{GIDD}}_t(\mathbf z_t \mid\mathbf x_\theta)}.
\label{eq:gidd-backward}
\end{equation}

\paragraph{Training Loss.}
The continuous-time negative ELBO is given by
\begin{equation}
-\log q(x)
\le
\mathbb{E}_{t,\mathbf z_t}
\left[
w_t(\mathbf z_t,x)
\big(
D_{\mathrm{KL}}(q^{\text{GIDD}}_t(\cdot \mid\mathbf x)\|q^{\text{GIDD}}_t(\cdot \mid\mathbf x_\theta))
+
D_{\mathrm{IS}}(q^{\text{GIDD}}_t(\mathbf z_t \mid\mathbf x)\|q^{\text{GIDD}}_t(\mathbf z_t \mid\mathbf x_\theta))
\big)
\right]
+
C,
\label{eq:gidd-loss}
\end{equation}
with weighting term
\begin{equation}
w_t(\mathbf z_t,\mathbf x)
=
\frac{1}{q^{\text{GIDD}}_t(\mathbf z_t \mid\mathbf x)}
\mathbf z_t^\top
\left(
\beta_t^{\text{GIDD}} \pi'_t
-
\frac{(\alpha^{\text{GIDD}}_t)'}{\alpha_t^{\text{GIDD}}}\pi_t
\right),
\label{eq:gidd-weight}
\end{equation}
where $D_{\text{IS}}$ denotes the Itakura--Saito divergence and $C$ is the ELBO constant.

The next proposition shows that GIDD can be rewritten under our marginal parameterization, but with a non-absorbing \textsc{[mask]} token and more coupled forward transition kernel.
\begin{proposition}[Reparameterization of GIDD]\label{pro:gidd-ours}
Consider the forward transition kernel
\begin{align}
q(\mathbf z_t|\mathbf z_s)
&= \mathrm{Cat}\Bigl(
\mathbf z_t;
\gamma_{t\mid s}\rho_{t\mid s}\mathbf z_s
+\gamma_t(1-\rho_{t\mid s})\mathbf u
+\bigl((1-\gamma_t)-\gamma_{t\mid s}\rho_{t\mid s}(1-\gamma_s)\bigr)\mathbf m
\Bigr).
\label{eq:gidd_forward2}
\end{align}
This kernel induces the marginal distribution in \eqref{eq:marginal}.
Its continuous-time transition rate is
\begin{align}
R_t(\mathbf z_t,\mathbf z_s)
&=
\left(\frac{\rho'_t}{\rho_t}+\frac{\gamma_t'}{\gamma_t}\right)
\delta_{\mathbf z_s,\mathbf z_t}
-\mathbf z_t^{\top}
\left[
\gamma_t \frac{\rho'_t}{\rho_t}\mathbf u
+
\left((1-\gamma_t)\frac{\rho_t'}{\rho_t}
+\frac{\gamma_t'}{\gamma_t}\right)\mathbf m
\right].
\label{eq:gidd-rate2}
\end{align}
Moreover, under the reparameterization
\begin{equation}
\begin{cases}
    \alpha_t^{\mathrm{GIDD}} = \rho_t \gamma_t, \\
    \beta_t^{\mathrm{GIDD}} = 1 - \alpha_t^{\mathrm{GIDD}},\\
    \beta_t^{\mathrm{GIDD}} \pi_t
    =
    \gamma_t(1 - \rho_t) \mathbf u
    + (1 - \gamma_t) \mathbf m,
\end{cases}
\label{eq:giddreparam}
\end{equation}
the forward process induced by \eqref{eq:gidd_forward2} coincides with
the GIDD forward process in \eqref{eq:gidd-forward}. Consequently, the
marginal distribution and continuous-time transition rate induced by
\eqref{eq:gidd_forward2} coincide with the GIDD marginal distribution and
transition rate in \eqref{eq:gidd-marginal} and \eqref{eq:gidd-rate},
respectively.
\end{proposition}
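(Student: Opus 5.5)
The plan is to treat \eqref{eq:gidd_forward2} as a single linear map on probability vectors, in the same spirit as the matrix proof of the forward marginal earlier in the paper. The statement has four parts, which I will take in order: the kernel is a valid distribution, it has the right marginals, it has the right rate, and it agrees with GIDD.

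\textbf{Validity.} The coefficient of $\mathbf z_s$ is $\gamma_{t|s}\rho_{t|s}\in[0,1]$, and the coefficient of $\mathbf u$ is $\gamma_t(1-\rho_{t|s})\ge 0$, both by monotonicity. For the mask coefficient I bound $\rho_{t|s}\le 1$ to get
\[
(1-\gamma_t)-\gamma_{t|s}\rho_{t|s}(1-\gamma_s)\;\ge\;(1-\gamma_t)-\tfrac{\gamma_t}{\gamma_s}(1-\gamma_s)\;=\;\tfrac{\gamma_s-\gamma_t}{\gamma_s}\;\ge\;0 .
\]
The three coefficients sum to
\[
\gamma_{t|s}\rho_{t|s}\gamma_s+1-\gamma_t\rho_{t|s}=1 .
\]
I will note in passing that the row $\mathbf z_s=\mathbf m$ puts mass $\gamma_t(1-\rho_{t|s})$ on non-mask tokens. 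So \textsc{[mask]} is not absorbing, which is the contrast the proposition is meant to highlight.

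\textbf{Marginals.} I will argue by induction over the grid, with base case $s=t_{-1}$ where $\rho_{-1}=\gamma_{-1}=1$. Since the kernel mean is affine in $\mathbf z_s$, pushing the marginal
\[
\gamma_s\rho_s\mathbf x+\gamma_s(1-\rho_s)\mathbf u+(1-\gamma_s)\mathbf m
\]
through it multiplies the old mixture by $\gamma_{t|s}\rho_{t|s}$ and adds the $\mathbf u$ and $\mathbf m$ terms of the kernel. This uses $\mathbf u^\top$-invariance: applying the map to $\mathbf u$ or $\mathbf m$ just returns mixtures of $\mathbf x$-free vectors. The resulting coefficients are as follows.
\begin{itemize}
\item The $\mathbf x$ coefficient becomes $\gamma_t\rho_t$.
\item The $\mathbf u$ coefficient becomes $\gamma_t\tfrac{\rho_t}{\rho_s}(1-\rho_s)+\gamma_t-\gamma_t\tfrac{\rho_t}{\rho_s}=\gamma_t(1-\rho_t)$.
\item The $\mathbf m$ coefficient is then $1-\gamma_t$ by normalization.
\end{itemize}
This recovers \eqref{eq:marginal}.

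\textbf{Rate.} Following the proof of Lemma~\ref{lem:forward-rate}, I use $\gamma_s=\gamma_t-\gamma_t'\Delta t+o(\Delta t)$, and likewise for $\rho_s$. This gives:
\begin{itemize}
\item $\gamma_{t|s}\rho_{t|s}=1+\Delta t\bigl(\tfrac{\gamma_t'}{\gamma_t}+\tfrac{\rho_t'}{\rho_t}\bigr)+o(\Delta t)$;
\item $\gamma_t(1-\rho_{t|s})=-\Delta t\,\gamma_t\tfrac{\rho_t'}{\rho_t}+o(\Delta t)$;
\item for the mask coefficient, with $1-\gamma_s=1-\gamma_t+\gamma_t'\Delta t$,
\[
-\Delta t\Bigl[(1-\gamma_t)\Bigl(\tfrac{\gamma_t'}{\gamma_t}+\tfrac{\rho_t'}{\rho_t}\Bigr)+\gamma_t'\Bigr]=-\Delta t\Bigl[(1-\gamma_t)\tfrac{\rho_t'}{\rho_t}+\tfrac{\gamma_t'}{\gamma_t}\Bigr].
\]
\end{itemize}
Writing the kernel as $\delta_{\mathbf z_s,\mathbf z_t}$ plus these first-order terms yields \eqref{eq:gidd-rate2}.

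\textbf{GIDD identification.} Under \eqref{eq:giddreparam}, $\alpha^{\mathrm{GIDD}}_{t|s}=\rho_{t|s}\gamma_{t|s}$ holds directly. By \eqref{eq:gidd-params},
\[
\beta^{\mathrm{GIDD}}_{t|s}\pi_{t|s}=\gamma_t(1-\rho_t)\mathbf u+(1-\gamma_t)\mathbf m-\tfrac{\gamma_t\rho_t}{\gamma_s\rho_s}\bigl[\gamma_s(1-\rho_s)\mathbf u+(1-\gamma_s)\mathbf m\bigr].
\]
This simplifies to $\gamma_t(1-\rho_{t|s})\mathbf u+\bigl((1-\gamma_t)-\gamma_{t|s}\rho_{t|s}(1-\gamma_s)\bigr)\mathbf m$, which is exactly the non-$\mathbf z_s$ part of \eqref{eq:gidd_forward2}. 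The two kernels are therefore identical. Equality of the marginals then follows, since $\alpha^{\mathrm{GIDD}}_t\mathbf x+\beta^{\mathrm{GIDD}}_t\pi_t$ is \eqref{eq:marginal}. Equality of the rates follows because both rates are the first-order coefficient of the same kernel. As a cross-check, I will also plug \eqref{eq:giddreparam} into \eqref{eq:gidd-rate}: differentiating $\beta_t\pi_t$ and using $\beta_t\pi_t'=(\beta_t\pi_t)'+\alpha_t'\pi_t$ should reproduce \eqref{eq:gidd-rate2}.

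I expect the main obstacle to be bookkeeping in the \textsc{[mask]} coefficient, both in the first-order expansion and in the $(\beta\pi)'$ cross-check, where sign and $\gamma$-versus-$\gamma_{t|s}$ slips are easy. I will handle it by always recovering that coefficient from normalization where possible, rather than expanding it directly.
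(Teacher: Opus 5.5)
Your proposal is correct and follows essentially the same route as the paper. You push the time-$s$ marginal through the affine kernel and read off the $\mathbf x$, $\mathbf u$, $\mathbf m$ coefficients, Taylor-expand the three kernel coefficients to get the rate, and substitute the reparameterization into $\beta^{\mathrm{GIDD}}_t\pi_t-\alpha^{\mathrm{GIDD}}_{t|s}\beta^{\mathrm{GIDD}}_s\pi_s$ to match the kernels term by term. Your extras are all correct but go slightly beyond what the paper records: the validity check (nonnegative mask coefficient, coefficients summing to one, which justifies getting the $\mathbf m$ coefficient by normalization), the explicit base case $\rho_{-1}=\gamma_{-1}=1$, and the $(\beta_t\pi_t)'$ cross-check of the rate. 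The ``$\mathbf u^\top$-invariance'' remark is unnecessary, since linearity of the kernel mean in $\mathbf z_s$ already suffices.
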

\begin{remark}
Proposition~\ref{pro:gidd-ours} shows that the forward kernel in
\eqref{eq:gidd_forward2} provides a reparameterized form of the GIDD
forward process under the parameter translation in
\eqref{eq:giddreparam}. This kernel induces the same marginal
distribution as \eqref{eq:marginal}, but unlike the SCDD forward kernel,
it does not impose the absorbing-mask condition
$q(\mathbf z_t=\mathbf m\mid \mathbf z_s=\mathbf m)=1$.
Therefore, SCDD and GIDD can be expressed under a common marginal
parameterization, while differing in their forward transition kernels.
This distinction explains why the uniform-transition and masking
components are coupled in the GIDD transition rate, whereas they are
controlled separately in SCDD.
\end{remark}

\begin{proof}
We prove the proposition in three steps.

\textbf{Step 1: Marginal distribution.}
We first verify that the transition kernel in \eqref{eq:gidd_forward2}
induces the marginal distribution in \eqref{eq:marginal}. Suppose that
at time \(s\),
\[
q(\mathbf z_s\mid \mathbf x)
=
\mathrm{Cat}\Bigl(
\mathbf z_s;
\gamma_s\bigl(\rho_s\mathbf x+(1-\rho_s)\mathbf u\bigr)
+(1-\gamma_s)\mathbf m
\Bigr).
\]
By the forward transition kernel \eqref{eq:gidd_forward2}, the marginal
at time \(t\) has probability vector
\begin{align}
&\gamma_{t\mid s}\rho_{t\mid s}
\Bigl[
\gamma_s\bigl(\rho_s\mathbf x+(1-\rho_s)\mathbf u\bigr)
+(1-\gamma_s)\mathbf m
\Bigr]
+\gamma_t(1-\rho_{t\mid s})\mathbf u  \notag\\
&\qquad
+\Bigl((1-\gamma_t)
-\gamma_{t\mid s}\rho_{t\mid s}(1-\gamma_s)\Bigr)\mathbf m.
\end{align}
Using \(\gamma_{t\mid s}=\gamma_t/\gamma_s\) and
\(\rho_{t\mid s}=\rho_t/\rho_s\), the coefficient of \(\mathbf x\) is
\[
\gamma_{t\mid s}\rho_{t\mid s}\gamma_s\rho_s
=
\gamma_t\rho_t.
\]
The coefficient of \(\mathbf u\) is
\begin{align}
\gamma_{t\mid s}\rho_{t\mid s}\gamma_s(1-\rho_s)
+\gamma_t(1-\rho_{t\mid s})
&=
\gamma_t\rho_t\frac{1-\rho_s}{\rho_s}
+\gamma_t\left(1-\frac{\rho_t}{\rho_s}\right)  \notag\\
&=
\gamma_t(1-\rho_t).
\end{align}
The coefficient of \(\mathbf m\) is
\[
\gamma_{t\mid s}\rho_{t\mid s}(1-\gamma_s)
+
(1-\gamma_t)
-\gamma_{t\mid s}\rho_{t\mid s}(1-\gamma_s)
=
1-\gamma_t.
\]
Therefore,
\[
q(\mathbf z_t\mid \mathbf x)
=
\mathrm{Cat}\Bigl(
\mathbf z_t;
\gamma_t\bigl(\rho_t\mathbf x+(1-\rho_t)\mathbf u\bigr)
+(1-\gamma_t)\mathbf m
\Bigr),
\]
which is exactly \eqref{eq:marginal}.

\textbf{Step 2: Forward transition rate.}
Let \(t=s+\Delta t\). From \eqref{eq:gidd_forward2}, the probability that
\(\mathbf z_t\) retains \(\mathbf z_s\) is
\[
\gamma_{t\mid s}\rho_{t\mid s}
=
\frac{\gamma_t\rho_t}{\gamma_s\rho_s}.
\]
Using first-order Taylor expansions,
\[
\frac{\gamma_t}{\gamma_s}
=
1+\frac{\gamma_t'}{\gamma_t}\Delta t+o(\Delta t),
\qquad
\frac{\rho_t}{\rho_s}
=
1+\frac{\rho_t'}{\rho_t}\Delta t+o(\Delta t),
\]
we obtain
\[
\gamma_{t\mid s}\rho_{t\mid s}
=
1+
\left(
\frac{\gamma_t'}{\gamma_t}
+
\frac{\rho_t'}{\rho_t}
\right)\Delta t
+o(\Delta t).
\]
Similarly, the coefficient of the uniform transition component is
\[
\gamma_t(1-\rho_{t\mid s})
=
-\gamma_t\frac{\rho_t'}{\rho_t}\Delta t
+o(\Delta t),
\]
and the coefficient of the masking component is
\begin{align}
(1-\gamma_t)
-\gamma_{t\mid s}\rho_{t\mid s}(1-\gamma_s)
&=
-\left[
(1-\gamma_t)\frac{\rho_t'}{\rho_t}
+
\frac{\gamma_t'}{\gamma_t}
\right]\Delta t
+o(\Delta t).
\end{align}
Thus,
\[
q(\mathbf z_t\mid \mathbf z_s)
=
\delta_{\mathbf z_s,\mathbf z_t}
+
\Delta t\,R_t(\mathbf z_t,\mathbf z_s)
+
o(\Delta t),
\]
where
\[
R_t(\mathbf z_t,\mathbf z_s)
=
\left(\frac{\rho'_t}{\rho_t}+\frac{\gamma_t'}{\gamma_t}\right)
\delta_{\mathbf z_s,\mathbf z_t}
-\mathbf z_t^{\top}
\left[
\gamma_t \frac{\rho'_t}{\rho_t}\mathbf u
+
\left((1-\gamma_t)\frac{\rho_t'}{\rho_t}
+\frac{\gamma_t'}{\gamma_t}\right)\mathbf m
\right],
\]
which proves \eqref{eq:gidd-rate2}.

\textbf{Step 3: Equivalence with GIDD.}
It remains to connect the forward process in \eqref{eq:gidd_forward2}
with the original GIDD parameterization. Under the reparameterization
\[
\alpha_t^{\mathrm{GIDD}}=\rho_t\gamma_t,
\qquad
\beta_t^{\mathrm{GIDD}}=1-\alpha_t^{\mathrm{GIDD}},
\qquad
\beta_t^{\mathrm{GIDD}}\pi_t
=
\gamma_t(1-\rho_t)\mathbf u+(1-\gamma_t)\mathbf m,
\]
we have
\[
\alpha_{t\mid s}^{\mathrm{GIDD}}
=
\frac{\alpha_t^{\mathrm{GIDD}}}{\alpha_s^{\mathrm{GIDD}}}
=
\frac{\rho_t\gamma_t}{\rho_s\gamma_s}
=
\rho_{t\mid s}\gamma_{t\mid s}.
\]
Moreover,
\begin{align}
\beta_t^{\mathrm{GIDD}}\pi_t
-
\alpha_{t\mid s}^{\mathrm{GIDD}}
\beta_s^{\mathrm{GIDD}}\pi_s
&=
\gamma_t(1-\rho_t)\mathbf u
+(1-\gamma_t)\mathbf m  \notag\\
&\quad
-\gamma_{t\mid s}\rho_{t\mid s}
\Bigl[
\gamma_s(1-\rho_s)\mathbf u
+(1-\gamma_s)\mathbf m
\Bigr] \notag\\
&=
\gamma_t(1-\rho_{t\mid s})\mathbf u
+
\bigl((1-\gamma_t)
-\gamma_{t\mid s}\rho_{t\mid s}(1-\gamma_s)\bigr)\mathbf m.
\end{align}
Therefore, substituting \eqref{eq:giddreparam} into the GIDD forward
kernel \eqref{eq:gidd-forward} gives
\[
\alpha_{t\mid s}^{\mathrm{GIDD}}\mathbf z_s
+
\beta_t^{\mathrm{GIDD}}\pi_t
-
\alpha_{t\mid s}^{\mathrm{GIDD}}
\beta_s^{\mathrm{GIDD}}\pi_s
=
\gamma_{t\mid s}\rho_{t\mid s}\mathbf z_s
+\gamma_t(1-\rho_{t\mid s})\mathbf u
+\bigl((1-\gamma_t)
-\gamma_{t\mid s}\rho_{t\mid s}(1-\gamma_s)\bigr)\mathbf m,
\]
which is exactly the probability vector in \eqref{eq:gidd_forward2}.
Hence, the forward process induced by \eqref{eq:gidd_forward2}
coincides with the GIDD forward process. Consequently, the marginal
distribution and continuous-time transition rate induced by
\eqref{eq:gidd_forward2} coincide with the GIDD marginal distribution
and transition rate in \eqref{eq:gidd-marginal} and
\eqref{eq:gidd-rate}, respectively.
\end{proof}

\begin{remark}\label{rm:gidd-ours} [Comparison of Forward Transition Rates between SCDD and GIDD.]

Under CTMC theory and using the same parameterization of marginal distribution, the GIDD's forward transition rate is given by: 
$$R^{\text{GIDD}}_t(\mathbf z_s,\mathbf z_t)=\left(\frac{\gamma_t'}{\gamma_t}+\frac{\rho'_t}{\rho_t}\right)\mathbf z_s^\top\mathbf z_t-\mathbf z_t^\top\left(\gamma_t\frac{\rho_t'}{\rho_t}\mathbf u+\left((1-\gamma_t)\frac{\rho_t'}{\rho_t} + \frac{\gamma_t'}{\gamma_t}\right)\mathbf m\right), $$
and SCDD's transition rate, when $\mathbf z_s \neq \mathbf m$, is given by equation \eqref{eq:forward_rate}: 

$$R^{\text{SCDD}}_t(\mathbf z_s,\mathbf z_t)=\left(\frac{\gamma_t'}{\gamma_t}+\frac{\rho'_t}{\rho_t}\right)\mathbf z_s^\top \mathbf z_t-\mathbf z_t^\top\left(\frac{\rho_t'}{\rho_t}\mathbf u+\frac{\gamma_t'}{\gamma_t}\mathbf m\right).$$

Our formulation yields a simpler forward process by making $\mathbf m$ an absorbing state. In particular, the uniform transition noise $\mathbf u$ and the absorbing mask noise $\mathbf m$ become \textbf{decoupled}, and their rates are independently controlled by the marginal parameters $\rho_t$ and $\gamma_t$, respectively. In contrast, GIDD couples these two noise channels, making the parameters' respective effects less explicit. This decoupling makes the forward dynamics more interpretable and substantially reduces the algebraic complexity of the backward formulas and the resulting training loss, which in turn simplifies implementation and lowers maintenance cost in practice.
\end{remark}

\section{Experimental Details}
\subsection{Training Details.}\label{sec:training-details}
For fair comparison, we retrained MDLM \citep{sahoo2024simple} and GIDD \citep{rutte2025generalized} along with our SCDD model. The detailed hyper-parameter setting for different datasets can be found in Table \ref{tab:param-config}. We train MDLM and SCDD with a diffusion process and loss function defined over $T=1000$ discrete time steps, and train GIDD using the continuous-time loss with dynamic weighting to guarantee the best performance, see \citep{rutte2025generalized} for the impact of weighting function on GIDD training. We also align the marginal distribution of SCDD with GIDD to guarantee two models see distributionally equivalent samples during training. Both of the models are trained using a noise schedule that achieves uniform transition peak ratio of $p_u\in\{0.1,0.2\}$ at $t=0.5$. See Appendix \ref{sec:noise_schedule_scdd} for detailed discussion.  

All the models use the DiT \cite{peebles2023scalable} structure as the backbone of denoising networks, and GPT-2 tokenizer \cite{radford2019language}. Following MDLM and GIDD, we use the \textsc{small} variant of DiT as the denoising network with 12 Transformer blocks, 12 attention heads per block, and 768 hidden dimension, yielding 166M trainable parameters. Time conditioning is projected to a dimension of 128 before being injected into the network. No dropout is applied during training. 

For LM1B \cite{chelba2013onebillion} dataset, all the model are trained with a batch size of 512, context length of 128, for 500k steps, yielding 33B ($512\times128\times500$k) training tokens in total. For OWT \cite{Gokaslan2019OpenWeb} dataset, all the models are trained with batch size of 256, context length of 512, for 1M steps, yielding 131B ($256\times512\times1$M) training tokens in total. We leave the last 100k docs as validation as in MDLM training. For ablation study, we train the model on Wikitext-103 \cite{merity2017pointer} with a batch size of 128, context length 512, for 100k steps, yielding 7B ($128\times512\times100$k) training tokens in total. 

For optimization, we use the AdamW optimizer \cite{adam2014method} with hyperparameters $\beta_1=0.9$, $\beta_2=0.999$, and $\epsilon=1\text{e}-9$. The learning rate increases linearly from $1 \times 10^{-6}$ to the peak rate $5 \times 10^{-4}$  over the first 10k steps, then follows a cosine-decay schedule to $5 \times 10^{-5}$ for the remainder of the training. To stabilize training, we use \texttt{bfloat16} mixed precision and maintain an Exponential Moving Average of model parameters with a decay rate of $0.9999$. Training are completed on a single node of 4 (Wikitext-103) or 8 (LM1B and OWT) NVIDIA H100-80GB GPUs.

\begin{table}[!ht]
    \centering
     \caption{Hyper-parameter configuration for each dataset.}
    \label{tab:param-config}
    \begin{tabular}{r|rrr}
    \toprule
         & Wikitext-103 & LM1B & OWT \\
    \midrule
    Context Length & 512 & 128 & 512  \\
    Training Steps $N$ & 100K & 500K & 1M \\
    Batch Size & 128 & 512 & 256 \\
    Number of GPUs & 4 & 8 & 8 \\
    Learning Rate & \multicolumn{3}{c}{5e-4} \\
    T  & \multicolumn{3}{c}{1000}\\

    \midrule
    Training and testing hardware  &  \multicolumn{3}{c}{H100-80GB GPUs} \\
    \bottomrule
    \end{tabular}
   
\end{table}

\label{sec:experiments}
\subsection{Unconditional Text Generation Details.}
\label{sec:addexp}
As noted by \citet{zheng2025masked}, 32-bit floating-point Gumbel-max categorical sampling suffers from numerical precision issues. Therefore, we use 64-bit floating-point for all text generations. ReMDM-cap and ReMDM-conf \cite{wang2025remasking} are two customized samplers that can be readily applied to pretrained MDLM model to elicit self-correction. For ReMDM-cap we use the default $\eta_{cap}=0.01$. We apply nucleus sampling \cite{Holtzman2020The} with $p=0.9$ throughout the experiments as it improves the quality of generated texts.

\paragraph{Entropy Results.}
We report the unigram entropy results in Table \ref{tab:entropy-comparison} in addition to the generative perplexity results as a sanity check. 
\begin{table*}[!ht]
\caption{Entropy on LM1B and OWT datasets across sampling steps. Two decimal places are shown.}
\label{tab:entropy-comparison}
\vskip 0.15in
\begin{center}
\begin{small}
\begin{sc}
\renewcommand{\arraystretch}{0.85}
\setlength{\tabcolsep}{5pt}
\begin{tabular}{lccccccccccc}
\toprule
\qquad Entropy & \multicolumn{5}{c}{LM1B (Steps)} & \multicolumn{6}{c}{OWT (Steps)} \\
\cmidrule(r){2-6} \cmidrule(l){7-12}
 & 16 & 32 & 64 & 128 & 256 & 32 & 64 & 128 & 256 & 512 & 1024 \\
\midrule
MDLM & 4.37 & 4.36 & 4.36 & 4.36 & 4.36 & 5.26 & 5.23 & 5.21 & 5.19 & 5.19 & 5.17 \\[2pt]
ReMDM Cap 0.01 & 4.37 & 4.36 & 4.36 & 4.36 & 4.35 & 5.26 & 5.23 & 5.19 & 5.17 & 5.15 & 5.13 \\[2pt]
ReMDM Confidence & 4.37 & 4.36 & 4.35 & 4.35 & 4.36 & 5.26 & 5.22 & 5.19 & 5.17 & 5.15 & 5.13 \\[2pt]
GIDD ($p_u=0.1$) & 4.34 & 4.35 & 4.35 & 4.36 & 4.36 & 5.09 & 5.07 & 5.06 & 5.06 & 5.05 & 5.05 \\[2pt]
GIDD ($p_u=0.2$) & 4.34 & 4.35 & 4.36 & 4.36 & 4.36 & 5.09 & 5.08 & 5.07 & 5.07 & 5.06 & 5.06 \\[2pt]
SCDD ($p_u=0.1$) & 4.21 & 4.23 & 4.24 & 4.24 & 4.24 & 4.83 & 4.86 & 4.86 & 4.86 & 4.86 & 4.85 \\[2pt]
SCDD ($p_u=0.2$) & 4.22 & 4.24 & 4.24 & 4.25 & 4.25 & 4.85 & 4.87 & 4.86 & 4.87 & 4.86 & 4.85 \\
\bottomrule
\end{tabular}
\end{sc}
\end{small}
\end{center}
\vskip -0.1in
\end{table*}
\paragraph{LLM-as-a-judge.} In the LLM-as-a-judge experiment, we borrow the LLM prompt from \citet{rutte2025generalized}, as in Figure \ref{fig:evaluation_prompt}. 
\begin{figure*}[t]
\begin{tcolorbox}[
    colback=gray!5, 
    colframe=gray!50, 
    title=\textbf{Evaluation Prompt},
    fonttitle=\bfseries,
    sharp corners
]
\small
\texttt{1. Clarity and coherence: Keeping in mind that the text may be cut off in the beginning and at the end due to it being an excerpt, how clear and understandable is the text? \\ 2. Grammaticality: Are there any grammatical errors in the text? \\ 3. Factuality: If applicable, is the factually verifiable information stated in the text (e.g. facts about geography, history, etc.) accurate and reliable? \\ 4. Writing style: How well is the text written in terms of style and fluency? Do the sentences flow well, is the vocabulary appropriate? \\ 5. Creativity: How original and creative is the text? \\ For each category, give a short justification before providing the final score. Your answer should be following the JSON format, with one top-level key for each aspect (‘ clarity‘, ‘grammaticality‘, ‘factuality‘, ‘style‘, and ‘creativity‘). Each aspect, in turn, should be a JSON object consisting of a ‘reasoning‘ and ‘score‘ key in that order. The ‘reasoning‘ key should contain a short justification for the score, and the ‘score‘ key should contain the score itself. \\ Please keep the following in mind: - Give your justification first before deciding on a final score. - Only output the JSON containing the justifications and scores and nothing else. - Keep in mind that the presented paragraph may be an excerpt from a longer document, so it may not be fully self-contained. Do not deduct points for issues arising from this. \\ The text to be graded is as follows: ‘‘‘ {text} ‘‘‘
}
\end{tcolorbox}
\caption{Evaluation Prompt used for LLM-as-a-judge experiment. }
\label{fig:evaluation_prompt}
\end{figure*}

Beyond the matched-schedule setting, we also evaluate SCDD and GIDD+ under a cross-ratio setting by comparing the best-performing SCDD ($p_u=0.2$) with the best GIDD+ ($p_u=0.1$). In this setup, clean OWT sequences are still corrupted using SCDD’s forward noising process; consequently, GIDD+ ($p_u=0.1$) must reconstruct inputs that are ``dirtier'' than those seen during its original training. Table \ref{tab:crossratio_compact} shows that SCDD consistently maintains statistically significant better performance in \textit{Clarity}, \textit{Factuality}, and \textit{Style} across all sampling steps. Despite GIDD+ showing an advantage in \textit{Creativity}, SCDD achieves a higher overall win rate. These results confirm that GIDD+ ($p_u=0.1$), which is trained at a lower noise level, lacks the capacity to effectively correct texts at higher corruption levels.
\begin{table*}[t]

\caption{Cross-ratio Setting --- SCDD ($p_u{=}0.2$) vs GIDD+ ($p_u{=}0.1$). Values are formatted as SCDD (GIDD+). Significance: $^{*} p < 0.05$, $^{**} p < 0.01$.}
\label{tab:crossratio_compact}
\begin{center}
\begin{sc}
\renewcommand{\arraystretch}{0.8}
\setlength{\tabcolsep}{3.5pt} 
\begin{tabular}{lcccccc}
\toprule
Metrics & \multicolumn{6}{c}{Steps} \\
\cmidrule(l){2-7}
 & 32 & 64 & 128 & 256 & 512 & 1024 \\
\midrule
Clarity & 1.64 (1.52)$^{*}$ & 1.66 (1.51)$^{*}$ & 1.70 (1.46)$^{**}$ & 1.70 (1.54)$^{**}$ & 1.73 (1.54)$^{**}$ & 1.73 (1.52)$^{**}$ \\[2pt]
Gramm. & 1.39 (1.48) & 1.45 (1.51) & 1.49 (1.44) & 1.52 (1.51) & 1.54 (1.53) & 1.51 (1.49) \\[2pt]
Fact. & 2.25 (2.13)$^{**}$ & 2.16 (2.07)$^{*}$ & 2.16 (2.01)$^{**}$ & 2.22 (2.08)$^{**}$ & 2.20 (2.05)$^{**}$ & 2.12 (2.01)$^{*}$ \\[2pt]
Style & 1.64 (1.54) & 1.65 (1.54)$^{*}$ & 1.67 (1.49)$^{**}$ & 1.67 (1.55)$^{*}$ & 1.71 (1.57)$^{*}$ & 1.72 (1.52)$^{**}$ \\[2pt]
Creativity & 2.86 (3.21)$^{**}$ & 2.95 (3.27)$^{**}$ & 2.91 (3.25)$^{**}$ & 2.93 (3.26)$^{**}$ & 2.98 (3.31)$^{**}$ & 2.98 (3.27)$^{**}$ \\[2pt]
\midrule
Win rate & 52.4\% & 53.9\% & 57.8\%$^{*}$ & 56.0\% & 55.1\% & 58.0\%$^{*}$ \\
\bottomrule
\end{tabular}
\end{sc}
\end{center}
\end{table*}

\subsection{Benchmark Performance.}
\label{sec:ben}
For each zero-shot task, we rank candidate completions by estimating the conditional log-likelihood of the answer $y$ given the question $x$. Because diffusion models are non-autoregressive, we score each candidate using the ELBO of the concatenated sequence $[x; y]$. Since $P(x)$ is identical across candidates, maximizing $\log P(x,y)$ is equivalent to maximizing $\log P(y \mid x)$.

Concretely, for each $[x; y]$ pair, we compute its contribution to the diffusion variational loss, which is stochastic due to sampling $t \sim \mathcal{U}(0,1)$, and the corresponding forward noising process. We therefore use a Monte Carlo estimate, averaging the ELBO over $N=10$ independent forward passes, and select the candidate with the highest mean ELBO. At last, we calculate the accuracy of selection. Results are reported in Table \ref{tab:benchmark_accuracy}.

Since each of zero-shot tasks ranks answer candidates by their estimated ELBO, there is no self-correction during evaluation, making these benchmarks a measure of zero-shot likelihood rather than language generation. Uniform-noise diffusion models (including both SCDD and GIDD+) typically exhibit worse likelihood but stronger abilities in self-correction, few-step sampling, and generation tasks \cite{sahoo2026scaling}. Our generative PPL results (Table \ref{tab:genppl-comparison}) confirm that SCDD produces higher-quality text than MDLM/GIDD at equal sampling steps, demonstrating that worse likelihood estimation does not translate to weaker language generation.
\begin{table}[h]
  \caption{Zero-shot accuracy on various language modeling benchmarks. Comparison between MDLM, GIDD, and SCDD at different noise levels ($p_u$).}
  \label{tab:benchmark_accuracy}
  \begin{center}
    \begin{small}
      \begin{sc}
        \begin{tabular}{lcccccccc}
          \toprule
          Model & ARC-e & ARC-c & BoolQ & Hellaswag & OBQA & PIQA & WinoG & Avg. \\
          \midrule
          MDLM & 27.90 & 21.16 & 47.25 & 27.76 & 19.20 & 52.01 & 50.43 & 35.10 \\
          GIDD ($p_u=0.1$) & 26.85 & 22.01 & 48.99 & 26.36 & 15.60 & 50.76 & 50.27 & 34.41 \\
          GIDD ($p_u=0.2$) & 26.43 & 21.92 & 48.29 & 26.55 & 15.80 & 51.09 & 50.51 & 34.37 \\
          SCDD ($p_u=0.1$, ours) & 26.64 & 22.01 & 48.13 & 26.73 & 17.20 & 50.05 & 48.78 & 34.22 \\
          SCDD ($p_u=0.2$, ours) & 26.52 & 24.16 & 46.64 & 26.20 & 20.40 & 49.67 & 49.01 & 34.66 \\
          \bottomrule
        \end{tabular}
      \end{sc}
    \end{small}
  \end{center}
  \vskip -0.1in
\end{table}
\subsection{Noise Schedule of SCDD}
\label{sec:noise_schedule_scdd}
\citet{rutte2025generalized} define the following time-dependent rates to elicit self-correction:
\begin{align}
\alpha_t^{\text{GIDD}}&=\frac{1-t}{C_t},
\qquad
\beta_t^{\text{GIDD}}\pi_t=\frac{t}{C_t}\mathbf m+\frac{c_t}{C_t}\mathbf u,
\label{eq:gidd-noise}\\
\text{where }c_t&=Bt^{\gamma/2}(1-t)^{\gamma/2},
\qquad
C_t=1+c_t,\qquad B,\gamma>0\nonumber
\end{align}
Notably, the marginal probability of uniform transition noise ratio is a symmetric function of $t$, vanishing at endpoints $t=0,1$ and attaining its maximum at $t=1/2$. 
In \citet{rutte2025generalized}, this maximum ratio is denoted by $p_u$, and can be expressed as
$$
p_u
=
\frac{c_{1/2}}{C_{1/2}}=\frac{c_{1/2}}{1+c_{1/2}}.
$$ 
If $p_u$ is fixed, we can solve for the corresponding constant $B$ as:
$$
B
=
\frac{c_{1/2}}{(1/2)^{\gamma}}
=
2^{\gamma}\frac{p_u}{1-p_u}.
$$

To ensure a fair comparison between SCDD and GIDD, we align the marginal distribution of SCDD with that of GIDD in our experimental evaluation. By applying the parameter translation in \eqref{eq:giddreparam}, we obtain:
\begin{align}
\begin{cases}
\rho_t\gamma_t&=\dfrac{1-t}{1+c_t},\nonumber\\[6pt]
(1-\rho_t)\gamma_t&=\dfrac{c_t}{1+c_t},\nonumber \\[6pt]
1-\gamma_t&=\dfrac{t}{1+c_t}.\nonumber 
\end{cases}
\end{align}
Solve the above equations to obtain noise schedule for SCDD in experiments: 
\begin{align}
\begin{cases}
\gamma_t=\frac{1+c_t-t}{1+c_t},\\[4pt]
\rho_t=\frac{1-t}{1+c_t-t}, \\
\end{cases}\label{eq:noise-gidd-ours}
\end{align}
where $c_t:=2^{\gamma}\frac{p_u}{1-p_u} t^{\gamma/2}(1-t)^{\gamma/2},p_u\in\{0.1,0.2\}$.

In ablation study, we use a noise schedule that attains the maximum noise ratio at a general time point $t_{\text{peak}}$, where $B$ and $c_t$ are given as follows:
\begin{equation}
\begin{gathered}
B=\frac{p_u}{1-p_u} \cdot \frac{1}{t_{\text {peak}}^{\gamma t_{\text {peak}}}\left(1-t_{\text {peak}}\right)^{\gamma\left(1-t_{\text {peak}}\right)}} . \\
c_t=B t^{\gamma t_{\text {peak}}}(1-t)^{\gamma\left(1-t_{\text {peak}}\right)}=\frac{p_u}{1-p_u} \cdot \frac{t^{\gamma t_{\text {peak}}}(1-t)^{\gamma\left(1-t_{\text {peak}}\right)}}{t_{\text {peak}}^{\gamma t_{\text {peak}}}\left(1-t_{\text {peak}}\right)^{\gamma\left(1-t_{\text {peak}}\right)}} .
\end{gathered}
\end{equation}

\section{Related Works}
\label{sec:related_work}

\paragraph{Mask Diffusion Language Model (MDLM/MDM) \cite{sahoo2024simple,shi2024simplified}}
In vanilla masked diffusion models, the forward process is defined by a Markov chain of the form
$$
q(\mathbf z_t \mid \mathbf z_s)
=
\alpha_{t|s}\,\mathbf x
+
(1-\alpha_{t|s})\,\mathbf m,
$$
which induces a marginal distribution
$$
q(\mathbf z_t \mid \mathbf x)
=
\mathrm{Cat}\!\left(\mathbf z_t;\, \alpha_t \mathbf x + (1-\alpha_t)\mathbf m \right).
$$

Masked diffusion models offer significant computational advantages and substantially improve upon earlier discrete flow-matching approaches. However, they lack an explicit self-correction mechanism. In particular, once a token is decoded during inference, it becomes fixed and cannot be revised in subsequent denoising steps. As a result, early prediction errors accumulate over the backward process and harms generation quality. SCDD can be viewed as a generalization of MDLM to support self-correction. We refer to Appendix \ref{sec:mdlm-ours} for a formal comparison.

\paragraph{Remasking Diffusion Model (ReMDM) \cite{wang2025remasking}}

ReMDM introduces a collection of post-doc samplers that can be readily applied to pretrained MDLM models. To elicit self-correction, it adopts a \emph{remasking} mechanism in the backward (inference) process:
$$
q(\mathbf z_s \mid \mathbf z_t, \mathbf x)
=
\begin{cases}
\mathrm{Cat}\!\left(\mathbf z_s; (1-\sigma_t)\mathbf x + \sigma_t \mathbf m \right),\\
\mathrm{Cat}\!\left(\mathbf z_s;
\frac{\alpha_s-(1-\sigma_t)\alpha_t}{1-\alpha_t}\mathbf x
+
\frac{1-\alpha_s-\sigma_t\alpha_t}{1-\alpha_t}\mathbf m
\right),
\end{cases}
$$
where $\alpha_t, \sigma_t \in [0,1]$ are noise schedules. The non-Markovian forward process is derived from Bayes' rule.

While this remasking-based formulation enables token revision, it exhibits two notable limitations. First, ReMDM performs self-correction through a non-\textsc{[mask]} $\rightarrow$ \textsc{[mask]} $\rightarrow$ non-\textsc{[mask]} procedure, in which the intermediate masking step is redundant: it takes 2 steps to correct a token. This may reduce the parallel self-correction efficiency during inference. Second, the performance of ReMDM highly depends on the hyperparameters that affect the actual schedule of $\sigma_t$, which introduces additional tuning cost. 

\paragraph{Generalized Interpolating Discrete Diffusion (GIDD) \cite{rutte2025generalized}}
GIDD replaces the absorbing mask probability vector $\mathbf m$ in the marginal distribution of MDLM \cite{sahoo2024simple} with a more general corruption distribution $\pi_t$, defined as a mixture of absorbing mask $\mathbf m$ and uniform transition noise $\mathbf u$. The resulting marginal distribution takes the form
$$
q(\mathbf z_t \mid \mathbf x):=\mathrm{Cat}(\mathbf z_t;\alpha_t \mathbf x + (1-\alpha_t)\pi_t),
$$
which is induced by the following Markov transition:
$$
q(\mathbf z_t \mid \mathbf z_s):=
\mathrm{Cat}(\mathbf z_t;\alpha_{t|s}\mathbf z_s+(1-\alpha_t)\pi_t-\alpha_{t|s}(1-\alpha_s)\pi_s).
$$

SCDD and GIDD can both be viewed as generalizations of vanilla masked diffusion language models, differing primarily in their choices of the forward transition kernel. We refer to Proposition~\ref{pro:gidd-ours} and Remark~\ref{rm:gidd-ours} for a detailed comparison.

Compared to SCDD, GIDD exhibits the following limitations. First, GIDD controls uniform corruption and masking through an entangled parameterization, whereas SCDD decouples these two sources of noise. This decoupling leads to a substantially simpler backward process and a more tractable training loss. Second, GIDD doesn't eliminate the remasking step during inference, while SCDD supports direct token-to-token self-correction during both training and inference without introducing an intermediate masking stage.

\paragraph{Informed Corrector \cite{zhao2024informed}}
\citet{zhao2024informed} use continuous-time Markov chains (CTMCs) to describe the forward and backward processes:
$$
q_{t|s}(\mathbf y\mid\mathbf x)
=
\mathbf \delta_{\mathbf x,\mathbf y}
+
R_t(\mathbf x,\mathbf y)\Delta t
+
o(\Delta t),
$$
where $\mathbf x,\mathbf y\in\mathcal V^D$ denote token sequences (e.g. articles or paragraphs), and
$$
R_t(\mathbf x,\mathbf y)
=
\sum_{d=1}^D \beta_t R(\mathbf x_d,\mathbf y_d),
\qquad
R(\mathbf x_d,\mathbf y_d)
:=
\begin{cases}
1 & \mathbf y_d=\mathbf m,\ \mathbf x_d\neq\mathbf m,\\
-1 & \mathbf y_d=\mathbf x_d\neq\mathbf m,\\
0 & \mathbf x_d=\mathbf y_d=\mathbf m.
\end{cases}
$$

This formulation is equivalent to the forward Markov chain used in masked diffusion language models (MDLM), and the resulting marginal distribution can be written as
$$
q_t(\mathbf x_{t,d}\mid\mathbf x_{0,d})
=
\alpha_t\mathbf 1(\mathbf x_{t,d}=\mathbf x_{0,d})
+
(1-\alpha_t)\mathbf 1(\mathbf x_{t,d}=\mathbf m).
$$

During inference, \citet{zhao2024informed} introduce a \emph{confidence score} $c_d$ for each token position $\mathbf x_d$, defined for example as
$$
c_d
=
\log p_\theta(\mathbf x_{0,d}\mid\mathbf x_t^{/d})
-
\max_{i\neq d}\log p_\theta(\mathbf x_{0,i}\mid\mathbf x_t^{/d}),
$$
where $p_\theta(\cdot\mid\mathbf x_t^{/d})\approx q_{0|t}(\cdot\mid\mathbf x^{/d})$ denotes the denoising model, and $\mathbf x^{/d}$ is the sequence whose $d$-th component is removed or masked.

For positions with low confidence scores, the corrector step performs token-level resampling according to
$$
\mathbf x_d\sim p_\theta(\cdot^d\mid\mathbf x^{/d}).
$$

Unlike GIDD and SCDD, Informed Corrector enables self-correction only at the sampling or inference stage. The self-correction mechanism itself is not learned during training, but instead relies on model-dependent confidence estimates at inference time.

\paragraph{Plug-in Remasking for Inference-time Self-correction of Masked Diffusions (PRISM) \cite{kim2025fine}}

Given a pretrained masked diffusion model
$p_\theta$, PRISM trains a token-level score function $g_\phi$ via
$$
\mathcal L(\phi)
:=
\mathbb E_{\mathbf x,\mathbf z,i,\mathbf y\sim p_\theta^i(\cdot\mid\mathbf z)}
\Big[
\mathrm{CE}\big(\mathbf 1[\mathbf x_i=\mathbf y_i],g_\phi^i(\mathbf y)\big)
\Big],
$$
where $\mathrm{CE}$ denotes the cross-entropy loss.
Here $\mathbf x\sim p_{\mathrm{data}}$ denotes the original clean sequence,
$\mathbf z\sim q(\mathbf z\mid\mathbf x)$ is obtained by the forward masking process in MDLM, and $\mathbf y^i\sim p_\theta^i(\cdot\mid\mathbf z)$ is a token sampled from the pretrained MDM at a masked position $i$.

Intuitively, the score model $g_\phi^i(\mathbf y)$ estimates the conditional probability
$$
g_\phi^{i*}(\mathbf y)
=
\mathbb P(\mathbf x_i=\mathbf y_i\mid \mathbf y\oplus \mathbf m_i),
$$
where the randomness is induced by the joint distribution over
$\mathbf x\sim p_{\mathrm{data}}$,
$\mathbf z\sim q(\mathbf z\mid\mathbf x)$,
and $\mathbf y^i\sim p_\theta^i(\cdot\mid\mathbf z)$.
Here $\mathbf y\oplus \mathbf m_i$ denotes the sequence obtained by masking the $i$-th token of $\mathbf y$, indicating that $\mathbf y_i$ is unobserved when evaluating $g_\phi^i$.

During inference, at each intermediate state $\mathbf z_t$, the learned score
$g_\phi(\mathbf z_t)$ is used to identify unmasked tokens with low estimated quality.
Such tokens are remarked by setting $\mathbf z_{t,d}=\mathbf m$, enabling inference-time
self-correction through iterative remaking and unmasking.

\paragraph{Path
Planning Self-Planning (P2-Self) \cite{peng2025path}}
In the training stage, the authors follow the Masked Diffusion Language Model (MDLM) framework and train a denoising model
\begin{equation}
D_\theta: \mathcal{V}^L \to (\Delta^{d})^{L}.
\end{equation}
In inference, let
\begin{equation}
\mathbf{y} \sim D_\theta(\mathbf{x}_t).
\end{equation}
The authors introduce and train a planner
\begin{equation}
G_\phi:\mathcal{V}^{L}\times \mathcal{V}^{L} \to [0,1]^L,\qquad
(\mathbf{x}_t, \mathbf{y})\mapsto G_\phi(\mathbf{x}_t,\mathbf{y}),
\end{equation}
where $G_\phi^j(\mathbf{x}_t,\mathbf{y})$ denotes the probability that the $j$-th coordinate should be (re)sampled.
Given adjacent time steps $s<t$, the reverse update is defined by
\begin{equation}
i \sim \frac{G_\phi(\mathbf{x}_t,\mathbf{y})}{\sum_{j=1}^L G_\phi^j(\mathbf{x}_t,\mathbf{y})},
\end{equation}
and
\begin{equation}
q_{t,\theta}(\mathbf{x}_{s}^i \mid \mathbf{x}_t^i) =
\begin{cases}
\text{Cat}\!\left(\mathbf{x}_{s}^i;\dfrac{(1-\alpha_{t-1})\mathbf{m} + (\alpha_{t-1}-\alpha_t)D_\theta^i(\mathbf{x}_t)}{1-\alpha_t}\right), & \mathbf{x}_t^i = \mathbf{m},\\[1.2em]
\text{Cat}\!\left(\mathbf{x}_{s}^i;\dfrac{\big((\alpha_t-1)D_\theta^i(\mathbf{x}_t)^\top \mathbf{x}_t^i + 1-\alpha_{t-1}\big)\mathbf{x}_t^i + (\alpha_{t-1}-\alpha_t)D_\theta^i(\mathbf{x}_t)}{(1-\alpha_t)\big(1 - D_\theta^i(\mathbf{x}_t)^\top \mathbf{x}_t^i\big)}\right), & \mathbf{x}_t^i \neq \mathbf{m}.
\end{cases}
\end{equation}

Intuitively, $G_\phi(\mathbf{x}_t,\mathbf{y})$ selects a single coordinate $i$ to update.
If $\mathbf{x}_t^i = \mathbf{m}$, the transition reduces to the original unmasking rule in MDLM.
If $\mathbf{x}_t^i \neq \mathbf{m}$, the model remasks and resamples from $D_\theta$ at coordinate $i$ with some probability. 

Similar to PRISM and Informed Corrector, this method does not incorporate self-correction in the pre-training stage. In addition, there is inherent redundancy in the inference stage: If $\mathbf x_{t,i}$ is masked, the resampling mechanism will not work. 

\section{Unconditional Samples}
In this section, we present unconditional samples generated by SCDD ($p_u=0.2$) on OWT.
\paragraph{T=32}\texttt{...years, we have not seen an case where a human computer can read a wall, the size of a human brain. It is enough to make a perfectly accurate world map, if you are able to create a digital map, with the map information.\\
\\
The cameras are showing what they call "will", and how it will use it in the coming U of Indeed.e say the large number of Poles and soldiers expelled from Europe for being against the gold policy are going to do so, if will will be enough to them in their homes surrounded a themselves on the sea floor.\\
\\
For the same reason they they, and right, not.\\
\\
They told the vast canvass of research on the war is counter to the work of NATO's Chief Chiefsecurity, the Restipolar Prosperity Institute.\\
\\
They describe a room "deposition room" that has a room room of 1,000 feet', eyes to 1,000 flying words. The cameras in the same room away from the listening table of the central room. The teleponics are high telescopes and small spectrums, out to the walls, the "theerences" can be seen as points of reference.\\
\\
The cameras, they say, are't looking at people, of which the the information is stored, and are according to them, which no panoramic angles in order for information to settle in the small, confined space.\\
\\
They say that the methods of stalking and shooting at targets will be obsolete, too. This works wonders in the military. One Facebookmanulated me that "in the country, it's possible that an man will shoot them if they have his gun" and that cells are used in in our forces.\\
\\
They all began with arguments from young but gay, white anonymous authors. Professoricby said that the DUD can show that the man is, perhaps, "precompanceiveness of self-interest." They made a study that is "that they are just themselves victims are first first who be them," and that the are, for is are is come is the single target, but from what they are is do to them. However, they also offered a bias of the, such as "the difficulty of writing a novel with multiple readers," a novel about a young man living from home. And with the "mutual," the book will be...}

\paragraph{T=128}\texttt{...year, there will be about \$100 billion of \$1.50 wiped out by the dollar. The problem is that oil worth worth2 on the dollar and all of that that is there is banked by corruption that has a legal game to exist. And then there is the whole crisis that is, and the power of money and power. ...\\
\\
Take the financial addiction. Over the past two years, drug banks and the Justice Department have have collapsed at rates far, many times the world oil price. When we did it last year, we had back in it. We should be helping money on law money. Why not raise the taxes on crime because of money? We fail, we are out of the moral era of the big-money cono.\\
\\
But, it is not having the position to act a defendant in this case, said Sanjay Abdeh, the legal counsel at Steven Sinofsky, the chief in the case.\\
Buffalo's law is the first to allow for marijuana marijuana for anyone who wants to buy marijuana. It hasn't been tested yet but Attorney General John Hentry said anyone who buys marijuana and wants it is legally selling the market. But since Florida's existing marijuana possession has required background checks, the Attorney General said it would be able to allow in legal medical marijuana into the state.\\
\\
After the initial approval application for it last year, Gov. Rick Scott signed down a new law, allowing people able to possess the drug.\\
\\
The state is moving away the a laws that would allow the use of the legally Florida along with others of state circumstances. Under the new law, the for marijuana won't be allowed to marijuana or of criminal possession, so it means state- criminal background check if the defense to be a subject able will turn that to possession the Justice Department will do.\\
\\
The State Chief Information Officer John Betst said he is looking into the legal details as far as the amount and types what what is required and charges.\\
\\
Advertisement\\
\\
Florida's marijuana marijuana owner, Michelle Ibel, in Browesville County to meet and protect her husband and her three-year-old son. Ibel, a woman who lives from a war zone that has filled of violence, is believed to be one of the first people to be body-connell shot at a marijuana grow and medical center just off the street outside of the city's long-famous South City. Gang members are coming out and about. It is a new war...}

\paragraph{T=1024}\texttt{...on the different screen, that new features are going to come out, and the car is at least going of high promise. Stay tuned as we look forward to the future.\\
\\
It’s clear from the images that the product is much faster is it from Tesla. It will be rolling in two months of deliveries from now to mid-year. Below you can see the deliveries align into each other.\\
\\
When product is unveiled at E3 of this year it will be only (only) in the car’s first few months.\\
\\
By all accounts, the company’s product generation of products is well under the wings. Menting new companies are butnecessarily, products that have the attention of other players. It is also in into that major players not, such as the Apple ecosystem.\\
\\
On its side, Tesla could be a car car company. While Tesla has already worked with Uber and Lyft, it is going with the automaker with the direction creating an app where consumers will use better cars.\\
\\
The company is going down the technology in a new direction.\\
\\
"As Tesla brings together the same strengths and technology of the Prius software company, the result is better and better cars, more, electric cars, better performance on the road and better car safety and data."\\
\\
Tesla has seen since the with competitive advances, but those intangibles have, and last year its margins were even better.”[Source]\\
\\
Via Bloomberg\\
Donald Trump has been vocal about concerns about the "radical and media agenda" since to 9/11.\\
\\
But, that doesn’t mean Trump is going to address his concerns about the Trump Trump. It has been a lot of debate.\\
\\
In January, Donald Trump said, "I’s not to be great to you have to interviews in a pieces, a president’s opinion on these different issues” and made it a point that "the media word-for-word on the story I put out."\\
\\
Donna Trump asked a similar opinion. She asked Trump, "doesn’t Trump sound like a great leader of the country?" And his "strategy is winning, and he tells the truth understand." It’s a big question as the calls for the right to run for president. Over the past week, she has been much debate about Trump’s...}

\end{document}

%% file: packages_icml.tex
\usepackage{amsmath,amssymb,amsthm}

\newtheorem{theorem}{Theorem}

\theoremstyle{remark}
\newtheorem{remark}[theorem]{Remark}
\newtheorem{proposition}[theorem]{Proposition}
\newtheorem{lemma}[theorem]{Lemma}

\theoremstyle{definition}

%% file: intro.tex

\section{Introduction}

Large language models (LLMs) \citep{radford2018gpt1, radford2019language, brown2020gpt3} have revolutionized the realization of intelligence and significantly enhanced productivity across domains, including software development \citep{guo2025deepseek, gong2025diffucoder} and content generation \citep{shao2024deepseekmath}. LLMs adopt an autoregressive (AR) generation pipeline to predict the next token sequentially. Despite its simplicity, next-token generation becomes quite time-consuming for long sequences. Alternatively, motivated by the success of BERT \citep{devlin2019bert}, masked diffusion language models (MDLMs) \cite{austin2021structured, shi2024simplified, ou2024your, sahoo2024simple, gat2024discrete, campbell2022continuous, LLaDa1, ye2025dream7b}  propose a probabilistic, order-agnostic generation paradigm, which potentially leads to much reduced inference latency due to parallel generation and boosts the development of reasoning \citep{song2025seed, khanna2025mercury} and agentic artificial intelligence (AI).

Despite their potential for fast parallel generation, mainstream MDLMs typically achieve only comparable performance to AR by decoding a limited number of tokens per generation step using non-trivial sampling techniques. Decoding more tokens often disrupts inherent token dependencies and degrades reasoning performance. To address this issue, various acceleration strategies have been proposed, such as semi-autoregressive generation with block diffusion \cite{arriola2025block}, block-level KV cache \citep{ma2025dkvcache, wu2025fastdllm}, and distillation-based models \citep{chen2025dultra, Chen2025dParallel, qian2026d3llm, zheng2026ultrafast}. However, these approaches inevitably increase model complexity, making them harder to maintain and scale.

To avoid these complexities, we focus on the simple and effective self-correction techniques for maximizing the parallel generation abilities \cite{jiang2025diffusion}. A notable example is the confidence- or entropy-based \textit{remasking} step used during parallel generation to rectify erroneous tokens and revise previous mistakes \citep{LLaDa1, wang2025remasking, song2025seed, Chen2025dParallel}. Rather than relying solely on the inherent generation ability of MDLMs, \citet{kim2025fine} further enhances self-correction by fine-tuning pre-trained MDLMs with minimal effort. However, due to the limited generalization ability of post-training \citep{ouyang2022training, SutskeverDwarkesh2025ScalingToResearch,yue2025does}, self-correction remains underexplored.

\begin{figure*}[t]
  \begin{center}
    \centerline{\includegraphics[width=2\columnwidth]{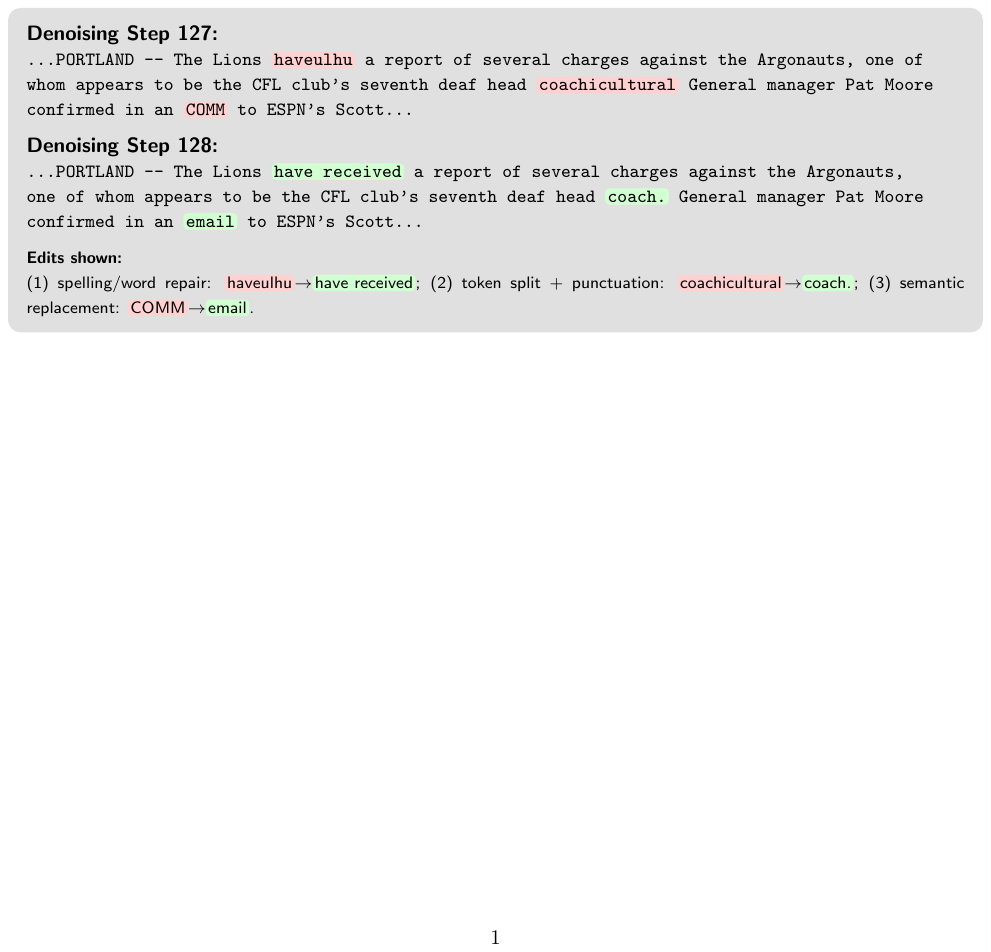}}
    \caption{
      Example of self-correction between two consecutive denoising steps (127→128). Generated by SCDD ($p_u=0.2$, trained on OWT) under 128 total denoising steps. Inappropriate tokens are directly corrected without remasking.
    }
    \label{fig:correction_example}
  \end{center}
  \vskip -0.2in
\end{figure*}
To improve the generalization of self-corrections, GIDD \cite{rutte2025generalized} studied the pretraining-based self-correction via a multi-step BERT-style objective that incorporates additional uniform transitions. However, its interpolation-based pipeline introduces opaque interactions between uniform transitions and absorbing masks, complicating hyperparameter tuning and hindering practical performance. Moreover, although GIDD enables token revision through additional uniform transitions, its reverse process still retains remasking behavior, which can make correction less direct and reduce the effective correction capacity under few-step parallel generation. To tackle these limitations, we propose a \textbf{S}elf-\textbf{C}orrecting \textbf{D}iscrete \textbf{D}iffusion (SCDD) model to reformulate pretrained self-correction with \textbf{clear} and \textbf{explicit} state transitions in discrete time. Our framework also simplifies the training noise schedule, eliminates a redundant \textit{remasking} step, and relies exclusively on uniform transitions to learn self-correction. To summarize, our contributions are three-fold: 
\begin{itemize}
    \item We redesign the forward process with Signal-to-Noise ratio (SNR) - informed parameters, thus providing {separate control} over different types of forward noising rates while maintaining clarity in marginal distribution representation {at the same time}.   
    \item Our training and inference pipeline is clean and engineering-light. More specifically, during training, the model is trained on theoretical ELBO loss without additional re-weighting; during inference, SCDD requires no post-hoc heuristic samplers and no hyperparameter-tuning. All token generation and correction are performed solely by running the backward process derived from Bayes’ rule.
    \item To the best of our knowledge, we are the first to train a diffusion language model that achieves self-correction completely free of \textit{remasking} during generation. Experiments conducted at the GPT-2 scale indicate that our model consistently surpasses existing benchmarks over a few standard datasets, achieving lower generative perplexity without sacrificing sample diversity in parallel generation settings. 
\end{itemize}

\begin{table*}[t]
\centering
\caption{Comparison of MDLM, GIDD, and SCDD. SCDD decouples token correction from masking, leading to a cleaner generator and explicit backward dynamics.}
\midsize
\setlength{\tabcolsep}{2pt}
\renewcommand{\arraystretch}{1.25}
\begin{tabular}{p{0.1\columnwidth}|p{0.85\columnwidth}|c|c|c|c}
\toprule
Model 
& Generator $R_t(\mathbf z_t,\mathbf z_s)$, $\mathbf z_s\neq\mathbf m$
& Self-Correction
& Remask-Free
& Closed-form Backward
& Decoupled SNRs \\
\midrule

MDLM
&
$\displaystyle
\frac{\gamma_t'}{\gamma_t}\,
\mathbf z_t^\top(\mathbf z_s-\mathbf m)$
&
\textcolor{red}{$\times$}
&
\textcolor{red}{$\times$}
&
\textcolor{blue}{\checkmark}
&
--
\\[0.8em]
\midrule

GIDD
&
$\displaystyle
\begin{aligned}
&
\left(
\frac{\gamma_t'}{\gamma_t}
+
\frac{\rho_t'}{\rho_t}
\right)
\mathbf z_s^\top\mathbf z_t
-
\mathbf z_t^\top
\Bigg[
\textcolor{red}{\gamma_t\frac{\rho_t'}{\rho_t}}\mathbf u+
\left(
\textcolor{red}{(1-\gamma_t)\frac{\rho_t'}{\rho_t}}
+
\frac{\gamma_t'}{\gamma_t}
\right)\mathbf m
\Bigg]
\end{aligned}$
&
\textcolor{blue}{\checkmark}
&
\textcolor{red}{$\times$}
&
\textcolor{red}{$\times$}
&
\textcolor{red}{$\times$}
\\[1.6em]
\midrule

SCDD
&
$\displaystyle
\begin{aligned}
&
\left(
\frac{\gamma_t'}{\gamma_t}
+
\frac{\rho_t'}{\rho_t}
\right)
\mathbf z_s^\top \mathbf z_t
-
\mathbf z_t^\top
\left(
\textcolor{blue}{\frac{\rho_t'}{\rho_t}}\mathbf u
+
\frac{\gamma_t'}{\gamma_t}\mathbf m
\right)
\end{aligned}$
&
\textcolor{blue}{\checkmark}
&
\textcolor{blue}{\checkmark}
&
\textcolor{blue}{\checkmark}
&
\textcolor{blue}{\checkmark}
\\[1.3em]

\bottomrule 
\end{tabular}
\label{tab:generator_comparison}
\end{table*}